\documentclass[11pt]{article}

\PassOptionsToPackage{hyperfootnotes=false}{hyperref}

\usepackage[preprint]{acl}

\usepackage{times}
\usepackage{latexsym}
\usepackage[T1]{fontenc}
\usepackage[utf8]{inputenc}
\usepackage{colortbl}
\usepackage{microtype}
\usepackage{makecell}
\usepackage{graphicx}
\usepackage{float}
\usepackage{placeins}
\usepackage{amsmath}
\usepackage{threeparttable}
\usepackage{amssymb}
\usepackage{amsthm}
\usepackage{enumitem}
\usepackage{tabularx}
\usepackage{siunitx}
\usepackage{xcolor}
\usepackage{booktabs}
\definecolor{highlightcolor}{RGB}{240, 245, 255}
\definecolor{scdorange}{RGB}{255, 127, 14}
\definecolor{dcdblue}{RGB}{31, 119, 180}

\usepackage{multirow}
\usepackage{subcaption}

\usepackage{pgfplots}
\pgfplotsset{compat=1.18}

\usepackage[ruled,vlined,linesnumbered,algo2e]{algorithm2e}

\newtheorem{theorem}{Theorem}[section]
\newtheorem{lemma}[theorem]{Lemma}

\newtheorem{definition}[theorem]{Definition}
\newtheorem{remark}[theorem]{Remark}

\usepackage{cleveref}

\title{Decoupled Contrastive Decoding via Expert-Aligned Drafting}

\author{
  Zhixuan Liu$^{1,2}$\thanks{\raggedright Corresponding author:\\
  \texttt{lzx993124494@sjtu.edu.cn}} \quad
  Zhichen Dong$^{1,2}$ \quad
  Yuanfu Wang$^{2}$ \quad
  Chao Yang$^{2}$ \\
  $^{1}$Shanghai Jiao Tong University \\
  $^{2}$Shanghai Artificial Intelligence Laboratory
}

\begin{document}

\maketitle

\begin{abstract}
Contrastive Decoding (CD) improves generation quality, but its amateur-model pass makes decoding expensive. Accelerating CD with speculative decoding raises a proposal-alignment question: should the contrastive signal shape the drafter, or should it remain only in verification? We study this question in the lightweight feature-level drafter regime. Two controlled diagnostics, matched Cross-$\alpha$ training and an Approximate Dual-Drafter decomposition, give the same diagnosis: contrastive-aware drafting does not consistently improve over expert-aligned drafting because the contrastive correction is usually weaker than drafter error, and reconstruction can amplify that error. We introduce \textbf{Decoupled Contrastive Decoding (DCD)}, which drafts with an expert-aligned lightweight proposer and applies the amateur only in unchanged CD verification. Standard speculative verification preserves the vanilla-CD output distribution. Across the main 8B settings, EAGLE3-based DCD achieves average greedy speedups of 1.65 to 1.95$\times$ over vanilla CD and reduces MMLU proposal-path latency by about 5 to 12$\times$ relative to amateur-coupled proposal paths.

\end{abstract}

\section{Introduction}
Large Language Models (LLMs) still hallucinate and make reasoning errors~\citep{DBLP:journals/tois/HuangYMZFWCPFQL25, DBLP:journals/corr/abs-2407-11511}. Contrastive Decoding (CD)~\citep{li2023contrastive,obrien2023contrastive} improves generation by correcting a strong \textit{expert} with a weaker \textit{amateur}. It also underlies lightweight tuning schemes such as Emulator Fine-Tuning~\citep{mitchell2024emulator} and Proxy Tuning~\citep{liu2024tuning}, but each token requires both models.

Speculative decoding can reduce this cost, but CD adds a proposal-alignment choice absent from single-model decoding: the serial proposal path can use the amateur signal, imitate the full contrastive distribution, or leave contrastive scoring to verification. We study three proposal routes: amateur-coupled proposals, contrastive-aware lightweight proposals, and expert-aligned proposals with contrastive verification. Figure~\ref{fig:overview} illustrates the system-level paths in the main comparison: vanilla CD, amateur-coupled SCD, and DCD with expert-aligned proposals plus contrastive verification.

\begin{figure}[t]
    \centering
    \includegraphics[width=\columnwidth, trim={0.5cm 0cm 14cm 0cm}, clip]{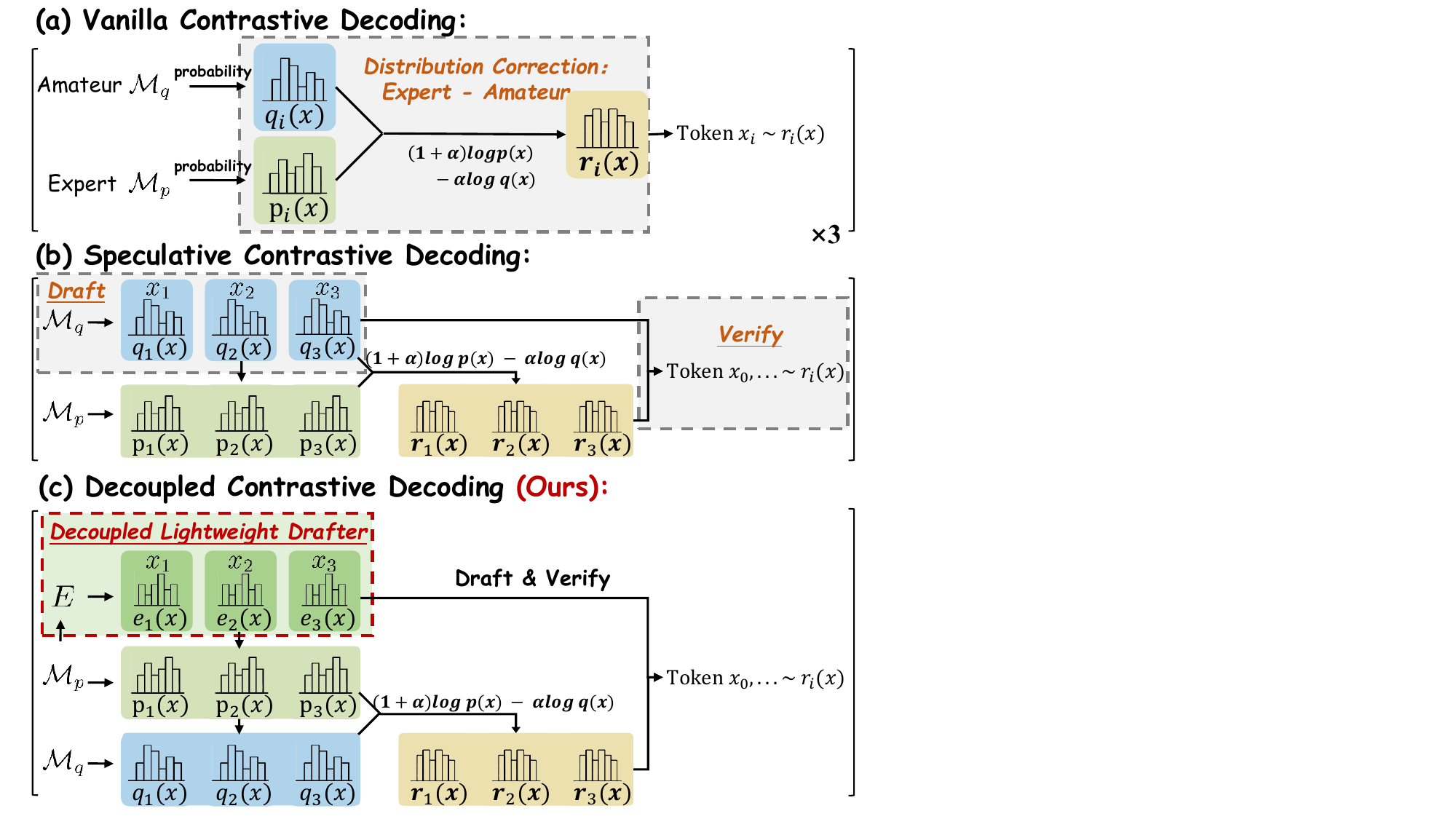}
    \vspace{-20pt}
    \caption{DCD keeps the CD target in verification while avoiding an amateur-coupled proposal route. The expert $\mathcal{M}_p$ and amateur $\mathcal{M}_q$ still define $\pi_{CD}$; the lightweight proposer $E$ changes only the serial draft path.}
    \label{fig:overview}
    \vspace{-10pt}
\end{figure}

We first diagnose proposal-side contrastive alignment under matched lightweight-drafter conditions. In the Cross-$\alpha$ diagnostic, a dual-input EAGLE variant receives concatenated expert and amateur hidden states; within that variant, varying train-$\alpha$ changes only the target distribution, while the architecture, data, and training recipe remain fixed. An Approximate Dual-Drafter separately approximates expert and amateur distributions and recombines them at inference. Both diagnostics show the same failure mode: the contrastive signal is usually weaker than the drafter error it must overcome, and reconstruction can amplify that error. Across 24 configurations, 81.1\% of positions have a contrastive signal below $1.0$, while 48.7\% have expert-side proposal $D_{KL}\geq2.0$; positive $\Delta$Top-1 values concentrate in high-signal, low-error cells. Neither contrastive-aware variant yields consistent accepted-length gains over expert-aligned drafting (Section~\ref{sec:why_not_cd}).

DCD validates the expert-aligned route at system level. It reuses a lightweight expert-aligned drafter for proposals and retains the amateur only in verification, so EAGLE-style drafting can accelerate CD without retraining for each amateur. SCD and CoS keep the amateur in the serial proposal loop.

Instantiated with EAGLE3~\citep{li2024eagle2,li2024eagle,li2025eagle3}, DCD$_{\textsc{eagle3}}$ achieves average greedy speedups of 1.65 to 1.95$\times$ over vanilla CD and reduces MMLU per-step proposal latency by about 5 to 12$\times$ relative to amateur-coupled proposal paths. DCD$_{\textsc{eagle3}}$ remains close to $2\times$ in a greedy-only 70B extension, and a DCD variant with a matched N-gram proposer also gives average speedups above vanilla CD. Because DCD changes only the proposal path, its lossless guarantee follows from standard speculative verification.

Our contributions are threefold: (1) we formulate proposal alignment as the central design choice in speculative contrastive decoding, separating amateur-coupled, contrastive-aware, and expert-aligned proposal routes; (2) we provide controlled diagnostics showing that contrastive-aware lightweight drafting does not reliably improve accepted length over expert-aligned drafting, because the contrastive signal is often smaller than proposal error and reconstruction can amplify that error; and (3) we instantiate this diagnosis as Decoupled Contrastive Decoding (DCD), a lossless CD accelerator that keeps contrastive scoring in verification and achieves deployment-level speedups with both EAGLE3 and matched N-gram proposers.

Our code is available at \url{https://github.com/chadlzx/dcd}.

\section{Methodology}
\label{sec:methodology}

This section formalizes proposal alignment and the DCD verification recipe.

\subsection{Preliminaries}
\label{sec:preliminaries}

\paragraph{Contrastive Decoding.}
Given history $h$, expert $\mathcal{M}_p$, and amateur $\mathcal{M}_q$, Contrastive Decoding~\citep{li2023contrastive,obrien2023contrastive} favors tokens high under the expert and low under the amateur:
\[
\begin{aligned}
\log \pi_{CD}(x\mid h) = {} & (1+\alpha) \log \pi_p(x\mid h) \\
& {} - \alpha \log \pi_q(x\mid h) - \log Z_{CD}(h),
\end{aligned}
\]
where $\alpha \geq 0$ controls contrastive strength. Equivalently,
\begin{equation}
\label{eq:contrastive_target}
\pi_{CD}(x\mid h) = \frac{1}{Z_{CD}(h)} \pi_p(x\mid h)^{1+\alpha} \pi_q(x\mid h)^{-\alpha},
\end{equation}
where $Z_{CD}(h)$ normalizes over the vocabulary.

Rewriting Eq.~\ref{eq:contrastive_target} as $\pi_{CD}(x\mid h) \propto \pi_p(x\mid h) \left(\frac{\pi_p(x\mid h)}{\pi_q(x\mid h)}\right)^{\alpha}$ separates the dominant expert term from the contrastive factor that a contrastive-aware lightweight drafter must also model.

The original CD plausibility constraint~\citep{li2023contrastive} is also compatible with DCD: masking Eq.~\ref{eq:contrastive_target} to $\mathcal{P}_{\beta}(h)=\{x:\pi_p(x\mid h)\geq \beta \max_{x'}\pi_p(x'\mid h)\}$ and renormalizing still yields a valid speculative-verification target. Proposals outside the mask simply receive zero target probability, so losslessness is unchanged after renormalization. Our experiments use the untruncated target to isolate proposal alignment without adding a threshold hyperparameter.

\paragraph{Speculative Decoding.}
Speculative Decoding (SD)~\citep{leviathan2023fast,chen2023accelerating} drafts $\gamma$ candidate tokens and verifies them in parallel with the target model. Its speed depends on draft-target alignment; we use $D_{KL}(\pi_{\text{draft}} \| \pi_{\text{target}})$ as a diagnostic proxy and report acceptance directly as mean accepted length $L$.

\subsection{Diagnosing Proposal-Side Alternatives in the Lightweight-Drafter Regime}
\label{sec:dilemma}

We compare three routes under lightweight drafting. They differ only in where the amateur signal enters the speculative decoding pipeline: directly in proposal generation, indirectly through a contrastive-aware lightweight proposer, or only during verification.

\paragraph{Route 1: Direct amateur drafting.}
This route uses $\pi_q$ for proposals even though the unnormalized CD score assigns coefficient $-\alpha$ to $\log \pi_q(x\mid h)$. It therefore has both a distributional mismatch and a systems cost: serial proposal cost remains tied to the amateur, roughly $\gamma \cdot t_q$ before method-specific bonus or delayed-drafting terms.

\paragraph{Route 2: Contrastive-aware lightweight drafting.}
A lightweight drafter must approximate $\pi_p$ and the extra contrastive factor $\left(\frac{\pi_p}{\pi_q}\right)^{\alpha}$. Training on the contrastive target increases modeling burden without removing baseline proposal error, and inference-time reconstruction adds another error source. Section~\ref{sec:why_not_cd} shows that this added signal is usually weaker than the proposal error it must overcome.

\paragraph{Route 3: Expert-aligned proposals with contrastive verification.}
\label{sec:dcd}

\textbf{Decoupled Contrastive Decoding (DCD)} proposes $\tilde{y}_{1:\gamma}$ without using $\mathcal{M}_q$ in the serial proposal path. The expert and amateur are coupled only during verification, where proposals are evaluated against the unchanged $\pi_{CD}$.
Algorithm~\ref{alg:dcd_round} summarizes one DCD round. Appendix~\ref{sec:proof_lossless} proves that replacing the proposer does not change the output distribution as long as speculative verification targets the normalized CD distribution $\pi_{CD}$; the proposer only changes acceptance rate and efficiency. We instantiate the proposer $E$ with \textbf{EAGLE3}~\citep{li2025eagle3} for the main deployment experiments because it gives the strongest average operating point among our tested proposers. This EAGLE3 instantiation is expert-aligned; the same DCD verification rule can also wrap other amateur-independent proposers, such as the N-gram proposer.

\begin{algorithm2e}[t]
\DontPrintSemicolon
\SetAlgoNlRelativeSize{0}
\caption{Decoupled Contrastive Decoding (One Round)}
\label{alg:dcd_round}
\small
\KwData{$\mathbf{y} = [\cdots, y_0]$, $\mathbf{h}_{draft}^{(0)}$, $\mathcal{M}_p$, $\mathcal{M}_q$, $E$, $\gamma$, $\alpha$}
\KwResult{updated prefix after one DCD round}
$\tilde{y}_0 \leftarrow y_0$\;
\tcp*[l]{Draft generation}
\For{$i \leftarrow 1$ \KwTo $\gamma$}{
    $\pi_e^{(i)}, \mathbf{h}_{draft}^{(i)} \leftarrow E(\mathbf{h}_{draft}^{(i-1)}, \tilde{y}_{i-1})$\;
    $\tilde{y}_i \sim \pi_e^{(i)}$\;
}
$\tilde{\mathbf{y}} \leftarrow [\tilde{y}_1, \ldots, \tilde{y}_\gamma]$\;
\tcp*[l]{Parallel model evaluation}
$[\pi_p^{(1)}, \ldots, \pi_p^{(\gamma+1)}] \leftarrow \mathcal{M}_p(\mathbf{y} \oplus \tilde{\mathbf{y}})$\;
$[\pi_q^{(1)}, \ldots, \pi_q^{(\gamma+1)}] \leftarrow \mathcal{M}_q(\mathbf{y} \oplus \tilde{\mathbf{y}})$\;
\tcp*[l]{Speculative accept or fallback}
\For{$i \leftarrow 1$ \KwTo $\gamma$}{
    compute $\pi_{CD}^{(i)}$ from Eq.~\eqref{eq:contrastive_target} using $\pi_p^{(i)}$ and $\pi_q^{(i)}$, and draw $r_i \sim \mathcal{U}(0, 1)$\;
}
$k \leftarrow \min\left(\left\{i \mid r_i > \frac{\pi_{CD}^{(i)}(\tilde{y}_i)}{\pi_e^{(i)}(\tilde{y}_i)}\right\} \cup \{\gamma + 1\}\right)$\;
\eIf{$k \leq \gamma$}{
    accept $\tilde{y}_{1:k-1}$\;
    $P_r(x) \leftarrow \mathrm{Normalize}(\max(0, \pi_{CD}^{(k)}(x) - \pi_e^{(k)}(x)))$\;
    sample $y_k \sim P_r$ and return $\mathbf{y} \oplus \tilde{y}_{1:k-1} \oplus y_k$\;
}{
    compute $\pi_{CD}^{(\gamma+1)}$ from Eq.~\eqref{eq:contrastive_target} using $\pi_p^{(\gamma+1)}$ and $\pi_q^{(\gamma+1)}$\;
    sample $y_{\gamma+1} \sim \pi_{CD}^{(\gamma+1)}$ and return $\mathbf{y} \oplus \tilde{y}_{1:\gamma} \oplus y_{\gamma+1}$\;
}
\end{algorithm2e}

DCD separates distributional correctness from proposal efficiency. Because verification still targets $\pi_{CD}$, changing the proposer does not change the output distribution of vanilla CD under standard speculative sampling. The change falls on the serial proposal path: DCD replaces the amateur cost of roughly $\gamma \cdot t_q$ with the cost of the chosen proposer, namely $\gamma \cdot t_e$ for EAGLE3 with $t_e \ll t_q$, or near-zero measured proposal latency for the N-gram variant.

\subsection{Alpha-Robustness Intuition}
\label{sec:theory}
Effective Draft Alignment means that the proposer places more mass than the amateur on tokens where the expert has a larger log-likelihood advantage. Under this condition, the amateur proposal's draft-target KL grows faster with $\alpha$ than an expert-aligned drafter's KL. This slope comparison suggests that expert-aligned DCD can lose accepted length more slowly even when SCD or CoS start with a higher acceptance baseline. Appendix~\ref{sec:appendix_theory} proves the KL-slope inequality and connects it to the observed robustness trend.

\section{Experiments}
\label{sec:experiments}
The experiments follow the design question from Section~\ref{sec:dilemma}. We first ask whether lightweight drafters benefit from absorbing the contrastive signal, then evaluate the expert-aligned DCD route at deployment scale and explain its speed source.

\paragraph{Shared setting.} Unless noted otherwise, we use 200 samples per dataset (or the full dataset if smaller), greedy decoding ($T{=}0$), draft length $\gamma{=}5$, and chain-style, greedy drafting for EAGLE3 proposers. Main deployment tables report three-run averages; diagnostics and ablation studies use single-run traces. Runtime flags are listed in Appendix~\ref{sec:implementation_details}.

\begin{figure*}[!t]
\centering
\hspace*{-1.3cm}
\begin{subfigure}[t]{0.24\textwidth}
\centering
\parbox[t][2.7em][c]{\linewidth}{\centering
\makebox[0pt][c]{\hspace*{2.5em}\textbf{(a) Contrastive signal}}\\
\makebox[0pt][c]{\hspace*{2.5em}\textbf{distribution}}}\par\vspace{0.25em}
\begin{tikzpicture}
\begin{axis}[
    ybar,
    bar width=6.5pt,
    bar shift=0pt,
    width=0.82\linewidth,
    height=4.85cm,
    xmin=-0.55, xmax=6.55,
    ymin=0, ymax=70,
    xtick={0,1,2,3,4,5,6},
    xticklabels={{$[0,0.25)$},{$[0.25,0.5)$},{$[0.5,1)$},{$[1,2)$},{$[2,4)$},{$[4,8)$},{$\geq 8$}},
    ytick={0,20,40,60},
    xlabel={Signal bucket},
    ylabel={Share (\%)},
    xlabel style={font=\small, yshift=0pt},
    ylabel style={font=\small, xshift=8pt, yshift=-5pt},
    xticklabel style={font=\tiny, rotate=24, anchor=east, xshift=6pt, yshift=-4pt},
    yticklabel style={font=\scriptsize},
    tick style={draw=none},
    axis line style={draw=black!35, line width=0.35pt},
    grid=major,
    grid style={dashed, gray!35},
    scale only axis,
    axis on top,
    clip=false,
]
\addplot[draw=dcdblue!85!black, fill=dcdblue!70] coordinates { (0,64.4) (1,8.4) (2,8.3) };
\addplot[draw=black!45, fill=black!18] coordinates { (3,7.0) (4,5.2) (5,3.6) (6,3.1) };
\end{axis}
\end{tikzpicture}
\end{subfigure}
\hspace{0.3cm}
\begin{subfigure}[t]{0.46\textwidth}
\centering
\parbox[t][2.7em][c]{\linewidth}{\centering
\makebox[0pt][c]{\hspace*{3.5em}\textbf{(b) Effect by signal and proposal error}}}\par\vspace{0.25em}
\begin{tikzpicture}
\begin{axis}[
    width=0.68\linewidth,
    height=3.64cm,
    xmin=-0.5, xmax=6.5,
    ymin=-0.5, ymax=6.5,
    y dir=reverse,
    xtick={0,1,2,3,4,5,6},
    ytick={0,1,2,3,4,5,6},
    xticklabels={{$<0.05$},{$0.05$--$0.1$},{$0.1$--$0.2$},{$0.2$--$0.5$},{$0.5$--$1$},{$1$--$2$},{$\geq 2$}},
    yticklabels={{$<0.25$},{$0.25$--$0.5$},{$0.5$--$1$},{$1$--$2$},{$2$--$4$},{$4$--$8$},{$\geq 8$}},
    xticklabel style={font=\tiny, rotate=24, anchor=east, xshift=6pt, yshift=-10pt},
    yticklabel style={font=\scriptsize},
    tick style={draw=none},
    axis line style={draw=black!35, line width=0.35pt},
    xlabel={KL$(\pi_e \,\|\, \pi_p)$},
    ylabel={Signal},
    xlabel style={font=\small, yshift=0pt},
    ylabel style={font=\small, xshift=5pt, yshift=-15pt},
    enlargelimits=false,
    scale only axis,
    axis on top,
    point meta min=-0.20,
    point meta max=0.20,
    colormap={paperdiv}{
        rgb255(0cm)=(49,54,149);
        rgb255(0.18cm)=(116,173,209);
        rgb255(0.36cm)=(224,243,248);
        rgb255(0.50cm)=(255,255,191);
        rgb255(0.64cm)=(253,174,97);
        rgb255(0.82cm)=(244,109,67);
        rgb255(1cm)=(165,0,38)
    },
    colorbar horizontal,
    colorbar style={
        at={(0.5,1.15)},
        anchor=south,
        title={$\Delta$Top-1},
        title style={font=\scriptsize, yshift=-1pt},
        xtick={-0.20,-0.10,0.00,0.10,0.20},
        xticklabels={{-0.2},{-0.1},{0},{0.1},{0.2}},
        ytick=\empty,
        tick label style={font=\scriptsize},
        width=5cm,
        height=0.2cm,
        major tick length=1.3pt,
        axis line style={draw=black!30, line width=0.3pt}
    }
]
\addplot[
    matrix plot*,
    mesh/cols=7,
    point meta=explicit,
    draw=black!8,
    line width=0.12pt,
] coordinates {
    (0,0) [-0.0032] (1,0) [-0.0140] (2,0) [-0.0247] (3,0) [-0.0432] (4,0) [-0.0774] (5,0) [-0.1171] (6,0) [-0.1055] (0,1) [-0.0000] (1,1) [-0.0174] (2,1) [-0.0198] (3,1) [-0.0367] (4,1) [-0.0655] (5,1) [-0.0875] (6,1) [-0.0849] (0,2) [-0.0070] (1,2) [-0.0025] (2,2) [0.0052] (3,2) [-0.0220] (4,2) [-0.0378] (5,2) [-0.0553] (6,2) [-0.0672] (0,3) [0.0095] (1,3) [0.0230] (2,3) [0.0478] (3,3) [0.0355] (4,3) [0.0148] (5,3) [-0.0114] (6,3) [-0.0296] (0,4) [0.0556] (1,4) [0.1819] (2,4) [0.0445] (3,4) [0.0842] (4,4) [0.0492] (5,4) [0.0338] (6,4) [-0.0094] (0,5) [0.0000] (1,5) [0.0372] (2,5) [0.0385] (3,5) [0.0402] (4,5) [0.0434] (5,5) [0.0276] (6,5) [0.0006] (0,6) [0.0431] (1,6) [0.0458] (2,6) [0.0464] (3,6) [0.1213] (4,6) [0.1045] (5,6) [0.0842] (6,6) [0.0226]
};

\fill[black!45] (axis cs:0,4) circle (1.5pt);
\fill[black!45] (axis cs:1,4) circle (1.5pt);
\fill[black!45] (axis cs:0,5) circle (1.5pt);
\fill[black!45] (axis cs:1,5) circle (1.5pt);
\draw[black!45, dashed, line width=0.55pt] (axis cs:-0.5,0.5) -- (axis cs:6.5,0.5);
\end{axis}
\end{tikzpicture}
\end{subfigure}
\hspace{-0.8cm}
\begin{subfigure}[t]{0.24\textwidth}
\centering
\parbox[t][2.7em][c]{\linewidth}{\centering
\makebox[0pt][c]{\hspace*{2.5em}\textbf{(c) KL$(\pi_e \,\|\, \pi_p)$}}\\
\makebox[0pt][c]{\hspace*{2.5em}\textbf{distribution}}}\par\vspace{0.25em}
\begin{tikzpicture}
\begin{axis}[
    ybar,
    bar width=6.5pt,
    bar shift=0pt,
    width=0.78\linewidth,
    height=4.85cm,
    xmin=-0.55, xmax=6.55,
    ymin=0, ymax=50,
    xtick={0,1,2,3,4,5,6},
    xticklabels={{$<0.05$},{$0.05$--$0.1$},{$0.1$--$0.2$},{$0.2$--$0.5$},{$0.5$--$1$},{$1$--$2$},{$\geq 2$}},
    ytick={0,10,20,30,40,50},
    xlabel={KL$(\pi_e \,\|\, \pi_p)$ bucket},
    ylabel={Share (\%)},
    xlabel style={font=\small, yshift=0pt},
    ylabel style={font=\small, xshift=8pt, yshift=-5pt},
    xticklabel style={font=\tiny, rotate=24, anchor=east, xshift=6pt, yshift=-4pt},
    yticklabel style={font=\scriptsize},
    tick style={draw=none},
    axis line style={draw=black!35, line width=0.35pt},
    grid=major,
    grid style={dashed, gray!35},
    scale only axis,
    axis on top,
    clip=false,
]
\addplot[draw=dcdblue!85!black, fill=dcdblue!70] coordinates { (0,11.5) (1,3.8) (2,4.9) (3,8.9) };
\addplot[draw=black!45, fill=black!18] coordinates { (4,9.6) (5,12.4) (6,48.7) };
\end{axis}
\end{tikzpicture}
\end{subfigure}

\caption{Contrastive-aware lightweight drafting has only a small favorable region to exploit. Most positions have weak signal (81.1\% below 1.0), gains concentrate in high-signal/low-error cells, and 48.7\% have KL$(\pi_e\|\pi_p)\geq2.0$. Black dots mark cells with fewer than 100 positions. Signal is $|\log(\pi_p(x^\star\mid h)/\pi_q(x^\star\mid h))|$ at $x^\star=\arg\max_x\pi_{CD}(x\mid h)$; $\Delta$Top-1 compares contrastive-aligned and expert-aligned Top-1 agreement with $x^\star$.}
\label{fig:signal_kl_ep_heatmap}
\vspace{-4pt}
\end{figure*}

\begin{figure}[t]
\centering
\begin{subfigure}[t]{\columnwidth}
\centering
\textbf{(a) Effect by signal and proposal error}\par\vspace{0.75em}
\begin{tikzpicture}
\begin{axis}[
    width=0.60\linewidth,
    height=3.35cm,
    xmin=-0.5, xmax=7.5,
    ymin=-0.5, ymax=6.5,
    y dir=reverse,
    xtick={0,1,2,3,4,5,6,7},
    ytick={0,1,2,3,4,5,6},
    xticklabels={{$<0.01$},{$0.01$--$0.05$},{$0.05$--$0.1$},{$0.1$--$0.2$},{$0.2$--$0.5$},{$0.5$--$1$},{$1$--$2$},{$\geq 2$}},
    yticklabels={{$<0.25$},{$0.25$--$0.5$},{$0.5$--$1$},{$1$--$2$},{$2$--$4$},{$4$--$8$},{$\geq 8$}},
    xticklabel style={font=\tiny, rotate=24, anchor=east, xshift=6pt, yshift=-10pt},
    yticklabel style={font=\scriptsize},
    tick style={draw=none},
    axis line style={draw=black!35, line width=0.35pt},
    xlabel={KL$(\pi_r \,\|\, \pi_p)$},
    ylabel={Signal},
    xlabel style={font=\small, yshift=0pt},
    ylabel style={font=\small, xshift=-4pt, yshift=-8pt},
    enlargelimits=false,
    scale only axis,
    axis on top,
    point meta min=-0.40,
    point meta max=0.40,
    colormap={paperdiv}{
        rgb255(0cm)=(49,54,149);
        rgb255(0.18cm)=(116,173,209);
        rgb255(0.36cm)=(224,243,248);
        rgb255(0.50cm)=(255,255,191);
        rgb255(0.64cm)=(253,174,97);
        rgb255(0.82cm)=(244,109,67);
        rgb255(1cm)=(165,0,38)
    },
    colorbar horizontal,
    colorbar style={
        at={(0.5,1.12)},
        anchor=south,
        title={$\Delta$Top-1},
        title style={font=\scriptsize, yshift=-1pt},
        xtick={-0.40,-0.20,0.00,0.20,0.40},
        xticklabels={{-0.4},{-0.2},{0},{0.2},{0.4}},
        ytick=\empty,
        tick label style={font=\scriptsize},
        width=3.8cm,
        height=0.18cm,
        major tick length=1.2pt,
        axis line style={draw=black!30, line width=0.3pt}
    }
]
\addplot[
    matrix plot*,
    mesh/cols=8,
    point meta=explicit,
    draw=black!8,
    line width=0.12pt,
] coordinates {
    (0,0) [-0.0002] (1,0) [-0.0082] (2,0) [-0.0241] (3,0) [-0.0487] (4,0) [-0.0864] (5,0) [-0.1562] (6,0) [-0.2600] (7,0) [-0.3607] (0,1) [0.0044] (1,1) [-0.0023] (2,1) [-0.0078] (3,1) [-0.0201] (4,1) [-0.0584] (5,1) [-0.1087] (6,1) [-0.1403] (7,1) [-0.2276] (0,2) [0.0412] (1,2) [0.0320] (2,2) [0.0159] (3,2) [0.0015] (4,2) [-0.0272] (5,2) [-0.0599] (6,2) [-0.0796] (7,2) [-0.1256] (0,3) [0.1031] (1,3) [0.0960] (2,3) [0.0832] (3,3) [0.0704] (4,3) [0.0345] (5,3) [-0.0164] (6,3) [-0.0261] (7,3) [-0.0335] (0,4) [0.1695] (1,4) [0.2093] (2,4) [0.2230] (3,4) [0.1769] (4,4) [0.1109] (5,4) [0.0458] (6,4) [0.0254] (7,4) [0.0086] (0,5) [0.1818] (1,5) [0.2576] (2,5) [0.2738] (3,5) [0.2051] (4,5) [0.1858] (5,5) [0.1131] (6,5) [0.0677] (7,5) [0.0287] (0,6) [0.0000] (1,6) [0.1333] (2,6) [0.1304] (3,6) [0.2963] (4,6) [0.2444] (5,6) [0.1250] (6,6) [0.1236] (7,6) [0.0714]
};
\fill[black!45] (axis cs:0,4) circle (1.5pt);
\fill[black!45] (axis cs:0,5) circle (1.5pt);
\fill[black!45] (axis cs:1,5) circle (1.5pt);
\fill[black!45] (axis cs:2,5) circle (1.5pt);
\fill[black!45] (axis cs:0,6) circle (1.5pt);
\fill[black!45] (axis cs:1,6) circle (1.5pt);
\fill[black!45] (axis cs:2,6) circle (1.5pt);
\fill[black!45] (axis cs:3,6) circle (1.5pt);
\fill[black!45] (axis cs:4,6) circle (1.5pt);
\fill[black!45] (axis cs:6,6) circle (1.5pt);
\fill[black!45] (axis cs:7,6) circle (1.5pt);
\draw[black!45, dashed, line width=0.55pt] (axis cs:-0.5,0.5) -- (axis cs:7.5,0.5);
\end{axis}
\end{tikzpicture}
\end{subfigure}
\vspace{0.2em}
\begin{subfigure}[t]{\columnwidth}
\centering
\textbf{(b) Proposer-side KL marginal}\par\vspace{0.2em}
\begin{tikzpicture}
\begin{axis}[
    ybar,
    bar width=5.2pt,
    bar shift=0pt,
    width=0.60\linewidth,
    height=2.35cm,
    xmin=-0.55, xmax=7.55,
    ymin=0, ymax=50,
    xtick={0,1,2,3,4,5,6,7},
    xticklabels={{$<0.01$},{$0.01$--$0.05$},{$0.05$--$0.1$},{$0.1$--$0.2$},{$0.2$--$0.5$},{$0.5$--$1$},{$1$--$2$},{$\geq 2$}},
    ytick={0,10,20,30,40,50},
    xlabel={KL$(\pi_r \,\|\, \pi_p)$ bucket},
    ylabel={Share (\%)},
    xlabel style={font=\small, yshift=0pt},
    ylabel style={font=\small, xshift=0pt, yshift=0pt},
    xticklabel style={font=\tiny, rotate=24, anchor=east, xshift=6pt, yshift=-4pt},
    yticklabel style={font=\scriptsize},
    tick style={draw=none},
    axis line style={draw=black!35, line width=0.35pt},
    grid=major,
    grid style={dashed, gray!35},
    scale only axis,
    axis on top,
    clip=false,
]
\addplot[draw=dcdblue!85!black, fill=dcdblue!70] coordinates { (0,47.9) (1,12.2) (2,7.3) (3,9.0) };
\addplot[draw=black!45, fill=black!18] coordinates { (4,12.4) (5,6.9) (6,3.3) (7,1.1) };
\end{axis}
\end{tikzpicture}
\end{subfigure}

\caption{Even a stronger same-family proposer leaves gains confined to high-signal, low-KL regions, with negative overall effect. $\Delta$Top-1 is the CD-aware reconstruction's Top-1 agreement gain over the unmodified 3B proposer.}
\label{fig:stronger_proposer_signal_kl_summary}
\vspace{-6pt}
\end{figure}

\subsection{Why Not Contrastive-Aware Drafting?}
\label{sec:why_not_cd}

This subsection examines Route~2 from Section~\ref{sec:dilemma}: moving the contrastive signal into lightweight proposal generation before verification. We first characterize where such alignment could help, then test whether lightweight drafters can exploit that region under matched controls, and finally examine whether the negative result persists with a substantially stronger proposer.

\paragraph{Signal--Proposal-Error Diagnostic.}
Before testing contrastive-aware drafting, we first ask whether the contrastive signal occupies a sufficiently favorable region relative to baseline proposal error. At each position, let $x^\star=\arg\max_x\pi_{CD}(x\mid h)$. We measure the contrastive signal magnitude as $\left|\log\frac{\pi_p(x^\star\mid h)}{\pi_q(x^\star\mid h)}\right|$, baseline expert-side proposal error as $D_{KL}(\pi_e\|\pi_p)$ for the expert-aligned drafter $\pi_e$, and $\Delta$Top-1 as the difference between contrastive-aligned and expert-aligned Top-1 agreement with $x^\star$. Figure~\ref{fig:signal_kl_ep_heatmap} aggregates these quantities, count-weighted, over 24 configurations: \textsc{Llama-3}, \textsc{Llama-EFT}, and \textsc{Qwen3}, each evaluated on GSM8K and MMLU at $\alpha\in\{0.1,0.3,0.5,1.0\}$.

Across all pooled positions in this count-weighted diagnostic, 81.1\% have signal below $1.0$ and 48.7\% have $D_{KL}(\pi_e\|\pi_p)\geq2.0$; positive $\Delta$Top-1 values concentrate in high-signal, low-error cells.
Thus, the tested regime leaves only a relatively small favorable region for contrastive-aware proposal alignment, while substantial baseline proposal error is widespread. We use $\Delta$Top-1 only as a position-level diagnostic; whether these local gains translate into accepted-length improvements is tested directly in the matched experiments below.

\paragraph{Matched Lightweight Drafting Experiments.}
We next test whether a lightweight drafter can exploit this signal in a controlled setting. Both variants use the matched \textsc{Llama-3}/\textsc{Qwen3} pairs on GSM8K/MMLU, with the dual-input EAGLE architecture, ShareGPT training data, and training recipe fixed. The direct variant learns the contrastive target itself; the decomposed variant separately learns expert- and amateur-aligned distributions and recombines them at inference time.

\paragraph{Direct Contrastive-Aligned Drafter.}
The most direct realization trains a dual-input EAGLE drafter to model the contrastive residual in Eq.~\ref{eq:contrastive_target}:
\[
\begin{aligned}
\frac{\pi_{CD}(x\mid h)}{\pi_p(x\mid h)}
&\propto
\left(\frac{\pi_p(x\mid h)}{\pi_q(x\mid h)}\right)^\alpha
= e^{\alpha s(x\mid h)}, \\
s(x\mid h) &= \log\frac{\pi_p(x\mid h)}{\pi_q(x\mid h)}.
\end{aligned}
\]
We vary both train-$\alpha$ and infer-$\alpha$ over $\{0.0,0.1,0.3,0.5\}$ while holding the remaining setup fixed; train-$\alpha=0$ recovers the expert-aligned training target. Table~\ref{tab:cross_alpha} shows no consistent gain from non-zero train-$\alpha$: only Llama-3/GSM8K has a small gain averaged over the four infer-$\alpha$ values ($+0.015$ in $\overline{L}$, or $0.77\%$), whereas train-$\alpha=0$ is best in the other three settings. Thus, in this matched control, directly learning the contrastive residual does not reliably translate into accepted-length gains. The full matrix is in Appendix~\ref{sec:appendix_cross_alpha}.

\begin{table}[t]
\centering
\scriptsize
\setlength{\tabcolsep}{4pt}
\renewcommand{\arraystretch}{1.05}
\begin{tabular*}{\columnwidth}{@{\extracolsep{\fill}}lcccc}
\toprule
\textbf{Model/Data} & \textbf{\shortstack{Best\\train-$\alpha$}} & \textbf{\shortstack{Avg $\Delta \overline{L}$\\vs $\alpha{=}0$}} & \textbf{\shortstack{Avg \% gain\\vs $\alpha{=}0$}} & \textbf{\shortstack{$\alpha{=}0$\\wins}} \\
\midrule
Llama-3/GSM8K & 0.1 & +0.015 & +0.77\% & 0/4 \\
Llama-3/MMLU & \textbf{0.0} & +0.000 & +0.00\% & 3/4 \\
Qwen3/GSM8K & \textbf{0.0} & +0.000 & +0.00\% & 3/4 \\
Qwen3/MMLU & \textbf{0.0} & +0.000 & +0.00\% & 4/4 \\
\bottomrule
\end{tabular*}
\caption{Cross-$\alpha$ control: contrastive-target training does not consistently improve $\overline{L}$. Best train-$\alpha$ maximizes average $\overline{L}$; only Llama-3/GSM8K shows a small non-zero gain. Full matrix in Appendix~\ref{sec:appendix_cross_alpha}.}
\label{tab:cross_alpha}
\vspace{-4pt}
\end{table}

\paragraph{Decomposed Dual-Drafter.}
Directly modeling $\pi_{CD}$ may be unnecessarily difficult, so we also test a more favorable decomposition that separates expert- and amateur-side approximation before recombining the two analytically at inference time,
\[
\hat{\pi}_{CD}(x\mid h) \propto
\pi_e(x\mid h)^{1+\alpha}\pi_f(x\mid h)^{-\alpha},
\]
where $\pi_e$ and $\pi_f$ are the expert- and amateur-aligned draft distributions. This separates expert- and amateur-side approximation, but it amplifies expert-side log-approximation error by $1+\alpha$; Appendix~\ref{sec:cd_aligned_analysis} provides the reconstruction derivation.

Even under this more favorable formulation, Approximate Dual-Drafter underperforms expert-aligned drafting at every reported $\alpha$ in mean accepted length $L$, draft-to-target $D_{KL}(\cdot\|\pi_{CD})$, and Top-1 Match, with gaps widening as $\alpha$ increases (Appendix Table~\ref{tab:dual_drafting}). At the position level, $\hat{\pi}_{CD}$ is closer to $\pi_{CD}$ than $\pi_e$ at only 31.9--39.7\% of positions.

\paragraph{Stress Test with a Stronger Full-LM Proposer.}
The preceding negative results could reflect limitations of the lightweight proposer rather than the contrastive objective itself. We therefore repeat the position-level diagnostic with \textsc{Llama-3.2-3B-Instruct} as a stronger Llama-family proposer, while fixing the \textsc{Llama-3.1-8B-Instruct}/\textsc{Llama-3.2-1B-Instruct} expert--amateur pair. On shared response histories, we apply contrastive correction using true amateur logits. This is an optimistic offline check on GSM8K and MMLU at $\alpha\in\{0.1,0.5\}$, rather than an online rollout evaluation. Figure~\ref{fig:stronger_proposer_signal_kl_summary} shows that the stronger proposer enlarges the favorable high-signal, low-error pocket, but the setting-level mean $\Delta$Top-1 remains negative in all four settings (see also Appendix~\ref{sec:appendix_stronger_proposer}). The stronger proposer therefore enlarges the favorable region in this check without reversing the aggregate disadvantage.

\paragraph{Takeaway.}
Neither direct nor decomposed contrastive-aware lightweight drafting consistently outperforms expert-aligned drafting in the tested regime. The stronger full-LM stress test enlarges the favorable region but does not reverse the position-level disadvantage. We therefore keep proposals expert-aligned and amateur-independent in the system experiments, applying contrastive scoring only during verification.

\subsection{Deployment-Level Setup}
\paragraph{Datasets and model pairs.} Following \citet{fu2025speculative}, we evaluate HumanEval~\citep{chen2021codex}, GSM8K~\citep{cobbe2021gsm8k}, MMLU~\citep{hendryckstest2021}, and CNN/DM~\citep{see-etal-2017-get} under three 8B-class configurations: \textsc{Llama-3}, \textsc{Qwen3}, and \textsc{Llama-EFT}. Their expert and amateur pairs are \textsc{Llama-3.1-8B-Instruct}/\textsc{Llama-3.2-1B-Instruct}, \textsc{Qwen3-8B}/\textsc{Qwen3-1.7B}, and \textsc{Llama-3.1-8B-Instruct}/\textsc{Llama-3.1-8B}. Baselines are vanilla CD, SCD, and CoS~\citep{fu2025speculative}. DCD$_{\textsc{eagle3}}$ uses community-released EAGLE3 drafters and requires no new drafter training. DCD$_{\textsc{ngram}}$ is reported as an alternative DCD instantiation with a simple prefix-matching N-gram proposer. Appendix~\ref{sec:appendix_models} lists the checkpoint identifiers.

\paragraph{Fairness protocol.}
All deployment methods use the same SGLang stack, single NVIDIA H200 GPU, prompts, maximum generation lengths, and expert and amateur pair within each operating point. The main speed tables use 200 samples per dataset with maximum generation length 256 and report three-run averages; the task-level accuracy check uses 1000 samples with maximum generation length 1024. Appendix Table~\ref{tab:gamma_tuning_summary} gives the draft-length tuning check.

\paragraph{Operating points and metrics.}
The main greedy and $T=1$ comparisons use $\alpha \in \{0.1, 0.5\}$; the latency decomposition uses $\alpha=0.1$; and the ablations sweep $\alpha$ and $\gamma$. We report tokens/sec and speedup over vanilla CD. When diagnosing speed sources, we also report accepted length and latency components. Appendix~\ref{sec:implementation_details} gives runtime details.

\subsection{System Validation of the Diagnostic: DCD}
\label{sec:main_results}

\begin{table}[t]
\centering
\scriptsize
\renewcommand{\arraystretch}{1.00}
\setlength{\tabcolsep}{2.5pt}
\begin{tabular*}{\columnwidth}{@{\extracolsep{\fill}}lccccc}
\toprule
\textbf{Method} & \textbf{HE} & \textbf{GSM} & \textbf{MMLU} & \textbf{CNN/DM} & \textbf{Avg.} \\
\midrule
\multicolumn{6}{c}{\textsc{Llama-3}} \\
\midrule
\multicolumn{6}{c}{$\alpha = 0.1$} \\
\cmidrule(lr){1-6}
SCD & 1.52$\times$ & 1.53$\times$ & 1.26$\times$ & 1.40$\times$ & 1.43$\times$ \\
CoS & 1.71$\times$ & 1.77$\times$ & 1.29$\times$ & 1.37$\times$ & 1.54$\times$ \\
DCD$_{\textsc{ngram}}$ & 1.48$\times$ & 1.43$\times$ & 1.29$\times$ & 2.07$\times$ & 1.56$\times$ \\
\rowcolor{highlightcolor} \textbf{DCD$_{\textsc{eagle3}}$} & \textbf{2.07$\times$} & \textbf{1.84$\times$} & \textbf{1.60$\times$} & \textbf{1.80$\times$} & \textbf{1.83$\times$} \\
\midrule
\multicolumn{6}{c}{$\alpha = 0.5$} \\
\cmidrule(lr){1-6}
SCD & 1.59$\times$ & 1.43$\times$ & 1.21$\times$ & 1.12$\times$ & 1.34$\times$ \\
CoS & 1.67$\times$ & 1.56$\times$ & 1.20$\times$ & 1.13$\times$ & 1.40$\times$ \\
DCD$_{\textsc{ngram}}$ & 1.39$\times$ & 1.34$\times$ & 1.55$\times$ & 2.10$\times$ & 1.59$\times$ \\
\rowcolor{highlightcolor} \textbf{DCD$_{\textsc{eagle3}}$} & \textbf{1.89$\times$} & \textbf{1.66$\times$} & \textbf{1.44$\times$} & \textbf{1.59$\times$} & \textbf{1.65$\times$} \\
\midrule
\multicolumn{6}{c}{\textsc{Qwen3}} \\
\midrule
\multicolumn{6}{c}{$\alpha = 0.1$} \\
\cmidrule(lr){1-6}
SCD & 1.38$\times$ & 1.35$\times$ & 0.99$\times$ & 1.05$\times$ & 1.19$\times$ \\
CoS & 1.45$\times$ & 1.33$\times$ & 1.17$\times$ & 1.14$\times$ & 1.27$\times$ \\
DCD$_{\textsc{ngram}}$ & 1.45$\times$ & 1.35$\times$ & 1.17$\times$ & 1.14$\times$ & 1.28$\times$ \\
\rowcolor{highlightcolor} \textbf{DCD$_{\textsc{eagle3}}$} & \textbf{2.30$\times$} & \textbf{2.01$\times$} & \textbf{1.66$\times$} & \textbf{1.85$\times$} & \textbf{1.95$\times$} \\
\midrule
\multicolumn{6}{c}{$\alpha = 0.5$} \\
\cmidrule(lr){1-6}
SCD & 1.23$\times$ & 1.24$\times$ & 0.88$\times$ & 0.82$\times$ & 1.05$\times$ \\
CoS & 1.25$\times$ & 1.31$\times$ & 0.97$\times$ & 0.79$\times$ & 1.08$\times$ \\
DCD$_{\textsc{ngram}}$ & 1.32$\times$ & 1.28$\times$ & 1.17$\times$ & 1.00$\times$ & 1.19$\times$ \\
\rowcolor{highlightcolor} \textbf{DCD$_{\textsc{eagle3}}$} & \textbf{1.98$\times$} & \textbf{1.85$\times$} & \textbf{1.60$\times$} & \textbf{1.39$\times$} & \textbf{1.71$\times$} \\
\midrule
\multicolumn{6}{c}{\textsc{Llama-EFT}} \\
\midrule
\multicolumn{6}{c}{$\alpha = 0.1$} \\
\cmidrule(lr){1-6}
SCD & 1.20$\times$ & 1.16$\times$ & 1.06$\times$ & 1.03$\times$ & 1.12$\times$ \\
CoS & 1.29$\times$ & 1.22$\times$ & 1.01$\times$ & 1.06$\times$ & 1.15$\times$ \\
DCD$_{\textsc{ngram}}$ & 1.43$\times$ & 1.31$\times$ & 1.32$\times$ & 2.06$\times$ & 1.52$\times$ \\
\rowcolor{highlightcolor} \textbf{DCD$_{\textsc{eagle3}}$} & \textbf{2.03$\times$} & \textbf{1.74$\times$} & \textbf{1.55$\times$} & \textbf{1.74$\times$} & \textbf{1.77$\times$} \\
\midrule
\multicolumn{6}{c}{$\alpha = 0.5$} \\
\cmidrule(lr){1-6}
SCD & 1.19$\times$ & 1.14$\times$ & 0.96$\times$ & 0.92$\times$ & 1.05$\times$ \\
CoS & 1.31$\times$ & 1.15$\times$ & 0.98$\times$ & 0.97$\times$ & 1.10$\times$ \\
DCD$_{\textsc{ngram}}$ & 1.55$\times$ & 1.35$\times$ & 1.31$\times$ & 2.11$\times$ & 1.58$\times$ \\
\rowcolor{highlightcolor} \textbf{DCD$_{\textsc{eagle3}}$} & \textbf{2.06$\times$} & \textbf{1.62$\times$} & \textbf{1.61$\times$} & \textbf{1.61$\times$} & \textbf{1.72$\times$} \\
\bottomrule
\end{tabular*}
\caption{DCD$_{\textsc{eagle3}}$ is strongest on average in every model family; DCD$_{\textsc{ngram}}$ also stays above vanilla CD on average. Avg. averages datasets; highlights mark DCD$_{\textsc{eagle3}}$.}
\label{tab:exp5_main_results}
\vspace{-4pt}
\end{table}

\paragraph{Main comparison.} Table~\ref{tab:exp5_main_results} shows DCD$_{\textsc{eagle3}}$ has the highest average speedup for every family and both $\alpha$ values. At the harder \textsc{Qwen3}, $\alpha=0.5$ setting, SCD and CoS drop below 1.0$\times$ on MMLU/CNN-DM, while DCD$_{\textsc{eagle3}}$ reaches 1.60$\times$/1.39$\times$. DCD$_{\textsc{ngram}}$ is included as a proposer-agnostic sanity check: it is weaker than EAGLE3 but remains above vanilla CD on average. Appendix~\ref{sec:appendix_exp5} gives the $T{=}1$ breakdown, Appendix~\ref{sec:appendix_multi_request} reports multi-request serving, and Appendix~\ref{sec:appendix_memory_kv} gives the KV-cache result.

This pattern validates the proposal-path diagnostic at system level. All methods use the same contrastive target and the same expert and amateur models, but DCD alone moves the expensive amateur out of the serial proposal path. The next two subsections separate the two remaining concerns: Figure~\ref{fig:performance_t0} checks that the verified CD target is preserved in task accuracy, and the latency decomposition in Section~\ref{sec:trade_off} shows why the cheaper proposal path can dominate even when SCD or CoS accept more draft tokens.

\begin{table}[t]
\centering
\scriptsize
\renewcommand{\arraystretch}{1.02}
\setlength{\tabcolsep}{2.8pt}
\begin{tabular*}{\columnwidth}{@{\extracolsep{\fill}}lcccc}
\toprule
\textbf{Setting} & \textbf{AR} & \textbf{Expert SD} & \textbf{DCD$_{\textsc{ngram}}$} & \textbf{DCD$_{\textsc{eagle3}}$} \\
\midrule
GSM8K & 1.27$\times$ & 2.55$\times$ & 1.43$\times$ & \textbf{1.75$\times$} \\
MMLU & 1.31$\times$ & 1.88$\times$ & 1.34$\times$ & \textbf{1.45$\times$} \\
\bottomrule
\end{tabular*}
\caption{At $\alpha{=}0$, DCD$_{\textsc{eagle3}}$ remains faster than vanilla CD and AR on Llama-3. Expert SD uses the same EAGLE3 drafter.}
\label{tab:alpha_zero_positioning}
\vspace{-4pt}
\end{table}

\paragraph{$\alpha=0$ boundary.}
At $\alpha=0$, CD reduces to the expert distribution. Table~\ref{tab:alpha_zero_positioning} positions DCD against autoregressive decoding and expert-only speculative decoding with the same EAGLE3 drafter. DCD$_{\textsc{eagle3}}$ remains above vanilla CD and autoregressive decoding at 1.75$\times$/1.45$\times$ on GSM8K/MMLU, while the expert-only row marks the expected upper bound when contrastive verification is removed.

\paragraph{70B-class validation.} In the greedy-only 70B extension, Table~\ref{tab:large_model_summary} uses the same $\alpha \in \{0.1,0.5\}$ operating points as the main comparison. DCD$_{\textsc{eagle3}}$ remains fastest on average, reaching 2.02$\times$ at $\alpha=0.1$ and 1.96$\times$ at $\alpha=0.5$. Appendix~\ref{sec:appendix_large_model} gives the per-dataset results.

\begin{table}[t]
\centering
\renewcommand{\arraystretch}{1.10}
\setlength{\tabcolsep}{4pt}
\small
\begin{tabular}{lcc}
\toprule
& \multicolumn{2}{c}{\textbf{Average speedup over vanilla CD}} \\
\cmidrule(lr){2-3}
\textbf{Method} & \textbf{$\alpha=0.1$} & \textbf{$\alpha=0.5$} \\
\midrule
SCD & 1.79$\times$ & 1.69$\times$ \\
CoS & 1.89$\times$ & 1.75$\times$ \\
DCD$_{\textsc{ngram}}$ & 1.46$\times$ & 1.44$\times$ \\
\rowcolor{highlightcolor} \textbf{DCD$_{\textsc{eagle3}}$} & \textbf{2.02$\times$} & \textbf{1.96$\times$} \\
\bottomrule
\end{tabular}
\caption{DCD$_{\textsc{eagle3}}$ stays near $2\times$ average speedup in the greedy 70B extension. Values average four datasets; full results are in Appendix~\ref{sec:appendix_large_model}.}
\label{tab:large_model_summary}
\vspace{-4pt}
\end{table}

\subsection{Task-Level Consistency Check}
\label{sec:performance_accuracy}
DCD changes only proposals, so the efficiency gains in Table~\ref{tab:exp5_main_results} should not be interpreted as a new decoding objective. Verification still targets $\pi_{CD}$; Appendix~\ref{sec:proof_lossless} proves losslessness. Figure~\ref{fig:performance_t0} shows the GSM8K greedy sweep for \textsc{Llama-3} with $\gamma{=}5$ and a matched DCD$_{\textsc{ngram}}$ comparison; Appendix~\ref{sec:appendix_performance_details} gives the full GSM8K/MMLU sweep under both $T{=}0$ and $T{=}1$.

\begin{figure}[t]
\centering
\begin{tikzpicture}
\begin{axis}[
    width=0.82\columnwidth,
    height=3.2cm,
    xlabel={Contrastive Strength ($\alpha$)},
    ylabel={Accuracy (\%)},
    title={GSM8K, $T{=}0$},
    title style={font=\small, yshift=-1ex},
    xmin=-0.05, xmax=1.05,
    ymin=75, ymax=90,
    xtick={0,0.2,0.4,0.6,0.8,1.0},
    ytick={75,78,81,84,87,90},
    grid=major,
    grid style={dashed, gray!40},
    label style={font=\small},
    tick label style={font=\scriptsize},
    scale only axis,
    trim axis left,
    trim axis right,
]
\draw[thick, dashed, gray!60!black, line width=1pt] (axis cs:-0.05,85.20) -- (axis cs:1.05,85.20);

\addplot[color=blue, mark=*, mark size=1.5pt, thick, line width=0.7pt, error bars/.cd, y dir=both, y explicit] coordinates {
    (0.0,85.20) +- (0,2.20) (0.1,87.00) +- (0,2.08) (0.2,86.90) +- (0,2.09) (0.3,86.30) +- (0,2.13) (0.4,85.10) +- (0,2.21) (0.5,84.70) +- (0,2.23) (0.6,83.50) +- (0,2.30) (0.7,83.50) +- (0,2.30) (0.8,82.00) +- (0,2.38) (0.9,80.40) +- (0,2.46) (1.0,79.80) +- (0,2.49)
};
\addplot[color=magenta!75!black, mark=star, mark size=1.5pt, thick, line width=0.7pt, error bars/.cd, y dir=both, y explicit] coordinates {
    (0.0,86.10) +- (0,2.14) (0.1,87.60) +- (0,2.04) (0.2,86.70) +- (0,2.10) (0.3,85.40) +- (0,2.19) (0.4,84.80) +- (0,2.23) (0.5,84.40) +- (0,2.25) (0.6,82.90) +- (0,2.33) (0.7,84.60) +- (0,2.24) (0.8,81.70) +- (0,2.40) (0.9,82.40) +- (0,2.36) (1.0,78.10) +- (0,2.56)
};
\addplot[color=red, mark=square*, mark size=1.5pt, thick, line width=0.7pt, error bars/.cd, y dir=both, y explicit] coordinates {
    (0.0,85.10) +- (0,2.21) (0.1,86.40) +- (0,2.12) (0.2,86.30) +- (0,2.13) (0.3,85.40) +- (0,2.19) (0.4,84.70) +- (0,2.23) (0.5,84.30) +- (0,2.25) (0.6,84.60) +- (0,2.24) (0.7,83.80) +- (0,2.28) (0.8,81.90) +- (0,2.39) (0.9,80.90) +- (0,2.44) (1.0,79.20) +- (0,2.52)
};
\addplot[color=green!60!black, mark=triangle*, mark size=1.5pt, thick, line width=0.7pt, error bars/.cd, y dir=both, y explicit] coordinates {
    (0.0,85.10) +- (0,2.21) (0.1,86.40) +- (0,2.12) (0.2,86.50) +- (0,2.12) (0.3,85.40) +- (0,2.19) (0.4,84.90) +- (0,2.22) (0.5,84.10) +- (0,2.27) (0.6,83.30) +- (0,2.31) (0.7,83.80) +- (0,2.28) (0.8,82.00) +- (0,2.38) (0.9,78.70) +- (0,2.54) (1.0,79.80) +- (0,2.49)
};
\addplot[color=orange, mark=diamond*, mark size=1.5pt, thick, line width=0.7pt, error bars/.cd, y dir=both, y explicit] coordinates {
    (0.0,85.60) +- (0,2.18) (0.1,86.90) +- (0,2.09) (0.2,86.60) +- (0,2.11) (0.3,85.50) +- (0,2.18) (0.4,85.10) +- (0,2.21) (0.5,84.90) +- (0,2.22) (0.6,83.60) +- (0,2.29) (0.7,83.90) +- (0,2.28) (0.8,81.90) +- (0,2.39) (0.9,80.20) +- (0,2.47) (1.0,79.10) +- (0,2.52)
};
\end{axis}
\end{tikzpicture}

\vspace{-0.3em}

\begin{tikzpicture}
\begin{axis}[
    hide axis,
    xmin=0, xmax=1,
    ymin=0, ymax=1,
    legend style={
        at={(0.5, 0.5)},
        anchor=center,
        legend columns=3,
        draw=none,
        fill=none,
        font=\small,
        /tikz/every even column/.append style={column sep=0.8em}
    },
]
\addlegendimage{color=blue, mark=*, mark size=1.5pt, thick}
\addlegendentry{DCD$_{\textsc{eagle3}}$}
\addlegendimage{color=magenta!75!black, mark=star, mark size=1.5pt, thick}
\addlegendentry{DCD$_{\textsc{ngram}}$}
\addlegendimage{color=red, mark=square*, mark size=1.5pt, thick}
\addlegendentry{SCD}
\addlegendimage{color=green!60!black, mark=triangle*, mark size=1.5pt, thick}
\addlegendentry{CoS}
\addlegendimage{color=orange, mark=diamond*, mark size=1.5pt, thick}
\addlegendentry{Vanilla CD}
\addlegendimage{thick, dashed, gray!60!black, line width=1pt}
\addlegendentry{AR}
\end{axis}
\end{tikzpicture}

\vspace{-0.5em}
\vspace{-5pt}

\caption{DCD$_{\textsc{eagle3}}$ tracks vanilla CD accuracy across $\alpha$ on Llama-3 GSM8K ($\gamma{=}5$). Error bars are 95\% confidence intervals; the dashed line is AR.}
\label{fig:performance_t0}
\vspace{-6pt}

\end{figure}

\paragraph{Consistency.}
Figure~\ref{fig:performance_t0} shows that DCD$_{\textsc{eagle3}}$ tracks vanilla CD across $\alpha$ on GSM8K under greedy decoding, with error bars largely overlapping across the operating range. It indicates that DCD preserves the contrastive target while changing only the proposal mechanism.

\subsection{Source of the DCD Speedup}
\label{sec:trade_off}
\begin{table}[t]
\centering
\small
\renewcommand{\arraystretch}{1.15}
\setlength{\tabcolsep}{3pt}
\begin{tabular*}{\columnwidth}{@{\extracolsep{\fill}}lcccccc}
    \toprule
    \textbf{Method} & $\boldsymbol{L}$ & $\boldsymbol{t_e}$ & $\boldsymbol{t_p}$ & $\boldsymbol{t_q}$ & \textbf{Speed} & \textbf{Theo.} \\
    \midrule
    \multicolumn{7}{l}{\textit{\textbf{Llama-3} (8B-Ins / 1B-Ins)}} \\
    \midrule
    CD & 0.00 & - & 6.95 & 3.83 & 1.00x & 1.00x \\
    SCD & 2.45 & - & 6.84 & 3.47 & 1.34x & 1.48x \\
    CoS & 2.45 & - & 7.08 & 3.59 & 1.34x & 1.48x \\
    DCD$_{\textsc{ngram}}$ & 0.48 & 0.00 & 7.04 & 4.06 & 1.36x & 1.43x \\
    \rowcolor{highlightcolor} \textbf{DCD$_{\textsc{eagle3}}$} & 1.44 & 0.63 & 7.98 & 4.38 & \textbf{1.50x} & \textbf{1.70x} \\
    \midrule
    \multicolumn{7}{l}{\textit{\textbf{Qwen3} (8B / 1.7B)}} \\
    \midrule
    CD & 0.00 & - & 8.47 & 6.52 & 1.00x & 1.00x \\
    SCD & 2.49 & - & 9.06 & 6.53 & 1.14x & 1.20x \\
    CoS & 2.49 & - & 9.15 & 6.63 & 1.17x & 1.23x \\
    DCD$_{\textsc{ngram}}$ & 0.41 & 0.00 & 9.07 & 7.23 & 1.22x & 1.30x \\
    \rowcolor{highlightcolor} \textbf{DCD$_{\textsc{eagle3}}$} & 1.48 & 0.61 & 10.15 & 7.93 & \textbf{1.61x} & \textbf{1.76x} \\
    \midrule
    \multicolumn{7}{l}{\textit{\textbf{Llama-EFT} (8B-Ins / 8B)}} \\
    \midrule
    CD & 0.00 & - & 7.19 & 7.25 & 1.00x & 1.00x \\
    SCD & 2.64 & - & 6.77 & 6.47 & 1.20x & 1.28x \\
    CoS & 2.64 & - & 7.67 & 7.38 & 1.11x & 1.17x \\
    DCD$_{\textsc{ngram}}$ & 0.49 & 0.00 & 6.98 & 7.12 & 1.45x & 1.52x \\
    \rowcolor{highlightcolor} \textbf{DCD$_{\textsc{eagle3}}$} & 1.46 & 0.60 & 7.42 & 7.50 & \textbf{1.79x} & \textbf{1.98x} \\
    \bottomrule
\end{tabular*}
    \caption{DCD$_{\textsc{eagle3}}$ beats SCD/CoS by lowering MMLU serial proposal latency despite smaller $L$. Results use $T{=}0$, $\alpha{=}0.1$, $\gamma{=}5$; $t_e$, $t_p$, and $t_q$ are per-step ms.}
\label{tab:accept_rate_latency}
\vspace{-4pt}
\end{table}

Table~\ref{tab:accept_rate_latency} decomposes MMLU speedup using $S \approx (L+1)/T_{\text{iter}}$ and reports accepted length plus drafter, expert, and amateur latencies. The ``Theo.'' column evaluates this coarse serial model with the measured components; measured speed remains the deployment metric.

Serial proposal cost explains the speedup. Across the three MMLU settings, SCD/CoS spend 3.5 to 7.4ms per proposal step, while DCD$_{\textsc{eagle3}}$ uses about 0.6ms. On \textsc{Llama-3}, SCD reaches $L=2.45$ and DCD$_{\textsc{eagle3}}$ reaches $L=1.44$, yet DCD$_{\textsc{eagle3}}$ is faster because cheaper proposals offset the shorter accepted length. Appendix~\ref{sec:theory_speed_calc} gives the multiplicative breakdown into acceptance and serial-cost factors.

\definecolor{dcdblue}{RGB}{31,119,180}
\definecolor{scdorange}{RGB}{255,127,14}
\definecolor{cosgreen}{RGB}{44,160,44}

\pgfplotsset{
    every axis/.append style={
        label style={font=\scriptsize},
        tick label style={font=\scriptsize, inner sep=0.5pt},
        title style={font=\scriptsize, align=center, at={(0.5, 1.01)}, anchor=south},
        ylabel style={xshift=10pt, yshift=-0.08cm},
        legend style={font=\scriptsize},
        grid style={dashed, gray!40},
        scale only axis,
        trim axis left,
        trim axis right,
    }
}
\begin{figure*}[t]
\centering
\begin{subfigure}[b]{0.22\textwidth}
\centering
\begin{tikzpicture}
\begin{axis}[
    width=0.84\textwidth,
    height=2.0cm,
    xlabel={Contrastive Strength ($\alpha$)},
    ylabel={},
    xmin=-0.02, xmax=0.52,
    ymin=60, ymax=105,
    xtick={0,0.1,0.2,0.3,0.4,0.5},
    ytick={60,70,80,90,100},
    grid=major,
    title={\shortstack{\textsc{Llama-3}\\\textsc{MMLU}}}
]
\addplot[color=dcdblue, mark=*, mark size=1.5pt, thick] coordinates {
    (0.0, 100.0) (0.1, 94.5) (0.2, 91.2) (0.3, 87.9) (0.4, 85.3) (0.5, 79.5)
};
\addplot[color=scdorange, mark=square*, mark size=1.5pt, thick] coordinates {
    (0.0, 100.0) (0.1, 94.3) (0.2, 89.5) (0.3, 85.5) (0.4, 80.8) (0.5, 74.0)
};
\addplot[color=cosgreen, mark=triangle*, mark size=1.5pt, thick] coordinates {
    (0.0, 100.0) (0.1, 94.3) (0.2, 89.6) (0.3, 85.8) (0.4, 81.6) (0.5, 74.2)
};
\end{axis}
\end{tikzpicture}
\end{subfigure}
\hfill
\begin{subfigure}[b]{0.22\textwidth}
\centering
\begin{tikzpicture}
\begin{axis}[
    width=0.84\textwidth,
    height=2.0cm,
    xlabel={Contrastive Strength ($\alpha$)},
    ylabel={},
    xmin=-0.02, xmax=0.52,
    ymin=60, ymax=105,
    xtick={0,0.1,0.2,0.3,0.4,0.5},
    ytick={60,70,80,90,100},
    grid=major,
    yticklabels={,,},
    title={\shortstack{\textsc{Qwen3}\\\textsc{MMLU}}}
]
\addplot[color=dcdblue, mark=*, mark size=1.5pt, thick] coordinates {
    (0.0, 100.0) (0.1, 97.9) (0.2, 93.9) (0.3, 90.3) (0.4, 82.7) (0.5, 77.4)
};
\addplot[color=scdorange, mark=square*, mark size=1.5pt, thick] coordinates {
    (0.0, 100.0) (0.1, 93.7) (0.2, 86.1) (0.3, 79.1) (0.4, 71.0) (0.5, 62.9)
};
\addplot[color=cosgreen, mark=triangle*, mark size=1.5pt, thick] coordinates {
    (0.0, 100.0) (0.1, 93.7) (0.2, 86.1) (0.3, 79.1) (0.4, 71.0) (0.5, 62.9)
};
\end{axis}
\end{tikzpicture}
\end{subfigure}
\hfill
\begin{subfigure}[b]{0.22\textwidth}
\centering
\begin{tikzpicture}
\begin{axis}[
    width=0.84\textwidth,
    height=2.0cm,
    xlabel={Contrastive Strength ($\alpha$)},
    ylabel={},
    xmin=-0.02, xmax=0.52,
    ymin=60, ymax=105,
    xtick={0,0.1,0.2,0.3,0.4,0.5},
    ytick={60,70,80,90,100},
    grid=major,
    yticklabels={,,},
    title={\shortstack{\textsc{Llama-3}\\\textsc{GSM8K}}}
]
\addplot[color=dcdblue, mark=*, mark size=1.5pt, thick] coordinates {
    (0.0, 100.0) (0.1, 98.8) (0.2, 97.4) (0.3, 99.5) (0.4, 97.5) (0.5, 95.1)
};
\addplot[color=scdorange, mark=square*, mark size=1.5pt, thick] coordinates {
    (0.0, 100.0) (0.1, 96.2) (0.2, 94.3) (0.3, 92.6) (0.4, 90.5) (0.5, 87.5)
};
\addplot[color=cosgreen, mark=triangle*, mark size=1.5pt, thick] coordinates {
    (0.0, 100.0) (0.1, 96.1) (0.2, 94.5) (0.3, 92.9) (0.4, 90.5) (0.5, 87.8)
};
\end{axis}
\end{tikzpicture}
\end{subfigure}
\hfill
\begin{subfigure}[b]{0.22\textwidth}
\centering
\begin{tikzpicture}
\begin{axis}[
    width=0.84\textwidth,
    height=2.0cm,
    xlabel={Contrastive Strength ($\alpha$)},
    ylabel={},
    xmin=-0.02, xmax=0.52,
    ymin=60, ymax=105,
    xtick={0,0.1,0.2,0.3,0.4,0.5},
    ytick={60,70,80,90,100},
    grid=major,
    yticklabels={,,},
    title={\shortstack{\textsc{Qwen3}\\\textsc{GSM8K}}}
]
\addplot[color=dcdblue, mark=*, mark size=1.5pt, thick] coordinates {
    (0.0, 100.0) (0.1, 98.3) (0.2, 96.9) (0.3, 93.7) (0.4, 89.2) (0.5, 81.9)
};
\addplot[color=scdorange, mark=square*, mark size=1.5pt, thick] coordinates {
    (0.0, 100.0) (0.1, 95.2) (0.2, 89.9) (0.3, 83.3) (0.4, 74.9) (0.5, 65.8)
};
\addplot[color=cosgreen, mark=triangle*, mark size=1.5pt, thick] coordinates {
    (0.0, 100.0) (0.1, 95.2) (0.2, 89.9) (0.3, 83.2) (0.4, 75.0) (0.5, 65.8)
};
\end{axis}
\end{tikzpicture}
\end{subfigure}

\vspace{-0.5em}
\begin{center}
\begin{tikzpicture}
\begin{axis}[
    hide axis,
    xmin=0, xmax=1,
    ymin=0, ymax=1,
    legend style={
        draw=none,
        fill=none,
        legend columns=-1,
        /tikz/every even column/.append style={column sep=0.5cm},
        font=\small
    },
    legend pos=north west
]
\addlegendimage{color=dcdblue, mark=*, mark size=2pt, thick}
\addlegendentry{DCD$_{\textsc{eagle3}}$}
\addlegendimage{color=scdorange, mark=square*, mark size=2pt, thick}
\addlegendentry{SCD}
\addlegendimage{color=cosgreen, mark=triangle*, mark size=2pt, thick}
\addlegendentry{CoS}
\end{axis}
\end{tikzpicture}
\end{center}
\vspace{-1.5em}
\caption{DCD$_{\textsc{eagle3}}$ loses accepted length more mildly as contrastive strength increases. Relative mean accepted length is normalized to $\alpha{=}0$ in each panel.}
\label{fig:alpha_ablation}
\vspace{-8pt}
\end{figure*}

\subsection{Ablation Studies}
\label{sec:ablation_studies}
We test the $\alpha$-robustness intuition behind DCD.

Figure~\ref{fig:alpha_ablation} normalizes $L$ to 100\% at $\alpha{=}0$. This view separates robustness from absolute acceptance rate: SCD or CoS can start from a higher $L$ at $\alpha{=}0$, while the diagnostic tracks how quickly each proposal distribution drifts away from the contrastive target as the amateur penalty strengthens.

DCD$_{\textsc{eagle3}}$ degrades more gradually than amateur-coupled baselines in all four panels. On \textsc{Llama-3} MMLU, the gap is moderate: at $\alpha{=}0.5$, DCD$_{\textsc{eagle3}}$ retains 79.5\% of its $\alpha{=}0$ accepted length, while SCD and CoS retain about 74\%. The separation becomes larger on \textsc{Qwen3} MMLU, where SCD retains 62.9\% and DCD$_{\textsc{eagle3}}$ retains 77.4\%. This empirical pattern is consistent with the alignment intuition: as the contrastive target moves away from the amateur distribution, an expert-aligned lightweight drafter can lose acceptance more slowly than a proposal path tied to the amateur.

The GSM8K panels show the same trend but with different severity. \textsc{Llama-3} GSM8K is relatively stable for all methods, yet DCD$_{\textsc{eagle3}}$ still remains closest to its $\alpha{=}0$ acceptance level at $\alpha{=}0.5$ (95.1\% versus about 88\% for SCD/CoS). \textsc{Qwen3} GSM8K is harder: the amateur-coupled curves fall to 65.8\%, while DCD$_{\textsc{eagle3}}$ stays at 81.9\%. The cross-task consistency matters because the main speedups in Table~\ref{tab:exp5_main_results} come from both proposal cost and acceptance; robustness in $L$ prevents the serial-cost advantage from being erased at larger $\alpha$. Appendix Figure~\ref{fig:gamma_ablation_appendix} and Table~\ref{tab:dcd_single_request_sweep_200} give the $\gamma$ and throughput-grid details.

\section{Related Work}

\textbf{Speculative Decoding.}
Speculative decoding accelerates generation with draft-then-verify~\citep{leviathan2023fast,chen2023accelerating}. Recent work improves proposals through parallel heads, speculation trees and trajectories, self-speculation, distillation, and graph-structured acceptance~\citep{cai2024medusa,miao2024specinfer,fu2024break,ElhoushiSLHWL0A24,Zhang00S0CM24,ZhouLRMRKKA24,GongLWWWC0024}. The EAGLE series~\citep{li2024eagle2,li2024eagle,li2025eagle3} drafts at the feature level. A contrastive target raises a distinct question: should the drafter imitate $\pi_{CD}$, or should the amateur appear only in verification?

\textbf{Contrastive Decoding.}
Contrastive Decoding (CD)~\citep{li2023contrastive} amplifies the expert--amateur gap. Later work applies this idea to layer contrast, contextual understanding, noisy retrieval, vision-language hallucination mitigation, and tuning-by-proxy~\citep{chuang2023dola,DBLP:conf/naacl/ZhaoM0A24,DBLP:conf/emnlp/KimKPPCKYLK24,leng2024mitigating,wang2024mitigating,mitchell2024emulator,liu2024tuning}. CD still requires two model passes per token.

\textbf{Accelerating Contrastive Decoding.}
Existing CD acceleration methods mostly retain the amateur in proposal generation. SCD~\citep{yuan2024speculative} and CoS~\citep{fu2025speculative} use it as a drafter and retain its serial cost. Section~\ref{sec:why_not_cd} finds no consistent benefit from adding the contrastive signal to a constrained lightweight proposer, motivating DCD's expert-aligned path.

\section{Conclusion}
Contrastive Decoding (CD) improves generation but adds an expensive amateur pass. We ask whether speculative CD should use the contrastive signal during drafting or only verification. Matched Cross-$\alpha$ and Approximate Dual-Drafter diagnostics show that contrastive-aware lightweight drafting does not consistently outperform expert-aligned drafting: the correction is usually smaller than drafter error, which reconstruction can amplify. Decoupled Contrastive Decoding (DCD) therefore uses an expert-aligned lightweight proposer and reserves the amateur for unchanged CD verification, preserving vanilla CD's output distribution. Across the main 8B settings, EAGLE3-based DCD yields 1.65--1.95$\times$ average greedy speedups and cuts MMLU proposal-path latency by 5--12$\times$ relative to amateur-coupled proposal paths.

\clearpage

\section*{Limitations}
Our evaluation follows the public benchmark mix and fixed decoding budgets used in the main study. It does not exhaust every workload shape: very long contexts, multilingual or domain-specific prompts, and adaptive choices of $\alpha$, draft length, or the CD plausibility threshold may shift the best operating point.

\bibliography{references/core,references/sd,references/cd,references/eagle,references/dataset_model,references/from_cos,references/others,references/intro}

\begin{thebibliography}{31}
\providecommand{\natexlab}[1]{#1}

\bibitem[{Cai et~al.(2024)Cai, Li, Geng, Peng, Lee, Chen, and Dao}]{cai2024medusa}
Tianle Cai, Yuhong Li, Zhengyang Geng, Hongwu Peng, Jason~D Lee, Deming Chen, and Tri Dao. 2024.
\newblock {MEDUSA}: Simple {LLM} inference acceleration framework with multiple decoding heads.
\newblock In \emph{Proceedings of the 41st International Conference on Machine Learning}, pages 5209--5235.

\bibitem[{Chen et~al.(2023)Chen, Borgeaud, Irving, Lespiau, Sifre, and Jumper}]{chen2023accelerating}
Charlie Chen, Sebastian Borgeaud, Geoffrey Irving, Jean-Baptiste Lespiau, Laurent Sifre, and John Jumper. 2023.
\newblock Accelerating large language model decoding with speculative sampling.
\newblock \emph{arXiv preprint arXiv:2302.01318}.

\bibitem[{Chen et~al.(2021)Chen, Tworek, Jun, Yuan, de~Oliveira~Pinto, Kaplan, Edwards, Burda, Joseph, Brockman, Ray, Puri, Krueger, Petrov, Khlaaf, Sastry, Mishkin, Chan, Gray, Ryder, Pavlov, Power, Kaiser, Bavarian, Winter, Tillet, Such, Cummings, Plappert, Chantzis, Barnes, Herbert-Voss, Guss, Nichol, Paino, Tezak, Tang, Babuschkin, Balaji, Jain, Saunders, Hesse, Carr, Leike, Achiam, Misra, Morikawa, Radford, Knight, Brundage, Murati, Mayer, Welinder, McGrew, Amodei, McCandlish, Sutskever, and Zaremba}]{chen2021codex}
Mark Chen, Jerry Tworek, Heewoo Jun, Qiming Yuan, Henrique~Ponde de~Oliveira~Pinto, Jared Kaplan, Harri Edwards, Yuri Burda, Nicholas Joseph, Greg Brockman, Alex Ray, Raul Puri, Gretchen Krueger, Michael Petrov, Heidy Khlaaf, Girish Sastry, Pamela Mishkin, Brooke Chan, Scott Gray, and 39 others. 2021.
\newblock \href {https://arxiv.org/abs/2107.03374} {Evaluating large language models trained on code}.
\newblock \emph{Preprint}, arXiv:2107.03374.

\bibitem[{Chuang et~al.(2024)Chuang, Xie, Luo, Kim, Glass, and He}]{chuang2023dola}
Yung{-}Sung Chuang, Yujia Xie, Hongyin Luo, Yoon Kim, James~R. Glass, and Pengcheng He. 2024.
\newblock \href {https://openreview.net/forum?id=Th6NyL07na} {{DoLa}: Decoding by contrasting layers improves factuality in large language models}.
\newblock In \emph{The Twelfth International Conference on Learning Representations, {ICLR} 2024, Vienna, Austria, May 7-11, 2024}. OpenReview.net.

\bibitem[{Cobbe et~al.(2021)Cobbe, Kosaraju, Bavarian, Chen, Jun, Kaiser, Plappert, Tworek, Hilton, Nakano, Hesse, and Schulman}]{cobbe2021gsm8k}
Karl Cobbe, Vineet Kosaraju, Mohammad Bavarian, Mark Chen, Heewoo Jun, Lukasz Kaiser, Matthias Plappert, Jerry Tworek, Jacob Hilton, Reiichiro Nakano, Christopher Hesse, and John Schulman. 2021.
\newblock Training verifiers to solve math word problems.
\newblock \emph{arXiv preprint arXiv:2110.14168}.

\bibitem[{Elhoushi et~al.(2024)Elhoushi, Shrivastava, Liskovich, Hosmer, Wasti, Lai, Mahmoud, Acun, Agarwal, Roman, Aly, Chen, and Wu}]{ElhoushiSLHWL0A24}
Mostafa Elhoushi, Akshat Shrivastava, Diana Liskovich, Basil Hosmer, Bram Wasti, Liangzhen Lai, Anas Mahmoud, Bilge Acun, Saurabh Agarwal, Ahmed Roman, Ahmed~A Aly, Beidi Chen, and Carole{-}Jean Wu. 2024.
\newblock \href {https://doi.org/10.18653/V1/2024.ACL-LONG.681} {{LayerSkip}: Enabling early exit inference and self-speculative decoding}.
\newblock In \emph{Proceedings of the 62nd Annual Meeting of the Association for Computational Linguistics (Volume 1: Long Papers), {ACL} 2024, Bangkok, Thailand, August 11-16, 2024}, pages 12622--12642. Association for Computational Linguistics.

\bibitem[{Fu et~al.(2025)Fu, Jiang, Chen, Fan, Geng, and Yang}]{fu2025speculative}
Jiale Fu, Yuchu Jiang, Junkai Chen, Jiaming Fan, Xin Geng, and Xu~Yang. 2025.
\newblock Fast large language model collaborative decoding via speculation.
\newblock In \emph{Forty-second International Conference on Machine Learning}.

\bibitem[{Fu et~al.(2024)Fu, Bailis, Stoica, and Zhang}]{fu2024break}
Yichao Fu, Peter Bailis, Ion Stoica, and Hao Zhang. 2024.
\newblock \href {https://openreview.net/forum?id=eDjvSFOkXw} {Break the sequential dependency of {LLM} inference using lookahead decoding}.
\newblock In \emph{Forty-first International Conference on Machine Learning, {ICML} 2024, Vienna, Austria, July 21-27, 2024}. OpenReview.net.

\bibitem[{Gong et~al.(2024)Gong, Liu, Wang, Wu, Wang, Cai, Zhao, and Yan}]{GongLWWWC0024}
Zhuocheng Gong, Jiahao Liu, Ziyue Wang, Pengfei Wu, Jingang Wang, Xunliang Cai, Dongyan Zhao, and Rui Yan. 2024.
\newblock \href {https://doi.org/10.18653/V1/2024.FINDINGS-ACL.677} {Graph-structured speculative decoding}.
\newblock In \emph{Findings of the Association for Computational Linguistics, {ACL} 2024, Bangkok, Thailand and virtual meeting, August 11-16, 2024}, pages 11404--11415. Association for Computational Linguistics.

\bibitem[{Hendrycks et~al.(2021)Hendrycks, Burns, Basart, Zou, Mazeika, Song, and Steinhardt}]{hendryckstest2021}
Dan Hendrycks, Collin Burns, Steven Basart, Andy Zou, Mantas Mazeika, Dawn Song, and Jacob Steinhardt. 2021.
\newblock Measuring massive multitask language understanding.
\newblock \emph{Proceedings of the International Conference on Learning Representations (ICLR)}.

\bibitem[{Huang et~al.(2025)Huang, Yu, Ma, Zhong, Feng, Wang, Chen, Peng, Feng, Qin, and Liu}]{DBLP:journals/tois/HuangYMZFWCPFQL25}
Lei Huang, Weijiang Yu, Weitao Ma, Weihong Zhong, Zhangyin Feng, Haotian Wang, Qianglong Chen, Weihua Peng, Xiaocheng Feng, Bing Qin, and Ting Liu. 2025.
\newblock \href {https://doi.org/10.1145/3703155} {A survey on hallucination in large language models: Principles, taxonomy, challenges, and open questions}.
\newblock \emph{{ACM} Trans. Inf. Syst.}, 43(2):42:1--42:55.

\bibitem[{Kim et~al.(2024)Kim, Kim, Park, Park, Cho, Kim, Yoo, Lee, and Kim}]{DBLP:conf/emnlp/KimKPPCKYLK24}
Youna Kim, Hyuhng~Joon Kim, Cheonbok Park, Choonghyun Park, Hyunsoo Cho, Junyeob Kim, Kang~Min Yoo, Sang{-}goo Lee, and Taeuk Kim. 2024.
\newblock \href {https://doi.org/10.18653/V1/2024.FINDINGS-EMNLP.136} {Adaptive contrastive decoding in retrieval-augmented generation for handling noisy contexts}.
\newblock In \emph{Findings of the Association for Computational Linguistics: {EMNLP} 2024, Miami, Florida, USA, November 12-16, 2024}, pages 2421--2431. Association for Computational Linguistics.

\bibitem[{Leng et~al.(2024)Leng, Zhang, Chen, Li, Lu, Miao, and Bing}]{leng2024mitigating}
Sicong Leng, Hang Zhang, Guanzheng Chen, Xin Li, Shijian Lu, Chunyan Miao, and Lidong Bing. 2024.
\newblock Mitigating object hallucinations in large vision-language models through visual contrastive decoding.
\newblock In \emph{Proceedings of the IEEE/CVF Conference on Computer Vision and Pattern Recognition}, pages 13872--13882.

\bibitem[{Leviathan et~al.(2023)Leviathan, Kalman, and Matias}]{leviathan2023fast}
Yaniv Leviathan, Matan Kalman, and Yossi Matias. 2023.
\newblock Fast inference from transformers via speculative decoding.
\newblock In \emph{International Conference on Machine Learning}, pages 19274--19286. PMLR.

\bibitem[{Li et~al.(2023)Li, Holtzman, Fried, Liang, Eisner, Hashimoto, Zettlemoyer, and Lewis}]{li2023contrastive}
Xiang~Lisa Li, Ari Holtzman, Daniel Fried, Percy Liang, Jason Eisner, Tatsunori~B Hashimoto, Luke Zettlemoyer, and Mike Lewis. 2023.
\newblock Contrastive decoding: Open-ended text generation as optimization.
\newblock In \emph{Proceedings of the 61st Annual Meeting of the Association for Computational Linguistics (Volume 1: Long Papers)}, pages 12286--12312.

\bibitem[{Li et~al.(2024{\natexlab{a}})Li, Wei, Zhang, and Zhang}]{li2024eagle2}
Yuhui Li, Fangyun Wei, Chao Zhang, and Hongyang Zhang. 2024{\natexlab{a}}.
\newblock {EAGLE-2}: Faster inference of language models with dynamic draft trees.
\newblock In \emph{Empirical Methods in Natural Language Processing}.

\bibitem[{Li et~al.(2024{\natexlab{b}})Li, Wei, Zhang, and Zhang}]{li2024eagle}
Yuhui Li, Fangyun Wei, Chao Zhang, and Hongyang Zhang. 2024{\natexlab{b}}.
\newblock {EAGLE}: Speculative sampling requires rethinking feature uncertainty.
\newblock In \emph{International Conference on Machine Learning}.

\bibitem[{Li et~al.(2025)Li, Wei, Zhang, and Zhang}]{li2025eagle3}
Yuhui Li, Fangyun Wei, Chao Zhang, and Hongyang Zhang. 2025.
\newblock {EAGLE-3}: Scaling up inference acceleration of large language models via training-time test.
\newblock In \emph{Annual Conference on Neural Information Processing Systems}.

\bibitem[{Liu et~al.(2024)Liu, Han, Wang, Tsvetkov, Choi, and Smith}]{liu2024tuning}
Alisa Liu, Xiaochuang Han, Yizhong Wang, Yulia Tsvetkov, Yejin Choi, and Noah~A. Smith. 2024.
\newblock \href {https://doi.org/10.48550/ARXIV.2401.08565} {Tuning language models by proxy}.
\newblock \emph{CoRR}, abs/2401.08565.

\bibitem[{Miao et~al.(2024)Miao, Oliaro, Zhang, Cheng, Wang, Zhang, Wong, Zhu, Yang, Shi, Shi, Chen, Arfeen, Abhyankar, and Jia}]{miao2024specinfer}
Xupeng Miao, Gabriele Oliaro, Zhihao Zhang, Xinhao Cheng, Zeyu Wang, Zhengxin Zhang, Rae Ying~Yee Wong, Alan Zhu, Lijie Yang, Xiaoxiang Shi, Chunan Shi, Zhuoming Chen, Daiyaan Arfeen, Reyna Abhyankar, and Zhihao Jia. 2024.
\newblock {SpecInfer}: Accelerating large language model serving with tree-based speculative inference and verification.
\newblock In \emph{Proceedings of the 29th ACM International Conference on Architectural Support for Programming Languages and Operating Systems, Volume 3}, pages 932--949.

\bibitem[{Mitchell et~al.(2024)Mitchell, Rafailov, Sharma, Finn, and Manning}]{mitchell2024emulator}
Eric Mitchell, Rafael Rafailov, Archit Sharma, Chelsea Finn, and Christopher~D. Manning. 2024.
\newblock \href {https://openreview.net/forum?id=Eo7kv0sllr} {An emulator for fine-tuning large language models using small language models}.
\newblock In \emph{The Twelfth International Conference on Learning Representations, {ICLR} 2024, Vienna, Austria, May 7-11, 2024}. OpenReview.net.

\bibitem[{O'Brien and Lewis(2023)}]{obrien2023contrastive}
Sean O'Brien and Mike Lewis. 2023.
\newblock \href {https://doi.org/10.48550/ARXIV.2309.09117} {Contrastive decoding improves reasoning in large language models}.
\newblock \emph{CoRR}, abs/2309.09117.

\bibitem[{Plaat et~al.(2024)Plaat, Wong, Verberne, Broekens, van Stein, and B{\"{a}}ck}]{DBLP:journals/corr/abs-2407-11511}
Aske Plaat, Annie Wong, Suzan Verberne, Joost Broekens, Niki van Stein, and Thomas B{\"{a}}ck. 2024.
\newblock \href {https://doi.org/10.48550/ARXIV.2407.11511} {Reasoning with large language models, a survey}.
\newblock \emph{CoRR}, abs/2407.11511.

\bibitem[{See et~al.(2017)See, Liu, and Manning}]{see-etal-2017-get}
Abigail See, Peter~J. Liu, and Christopher~D. Manning. 2017.
\newblock \href {https://doi.org/10.18653/v1/P17-1099} {Get to the point: Summarization with pointer-generator networks}.
\newblock In \emph{Proceedings of the 55th Annual Meeting of the Association for Computational Linguistics (Volume 1: Long Papers)}, pages 1073--1083, Vancouver, Canada. Association for Computational Linguistics.

\bibitem[{Wang et~al.(2024)Wang, Pan, Ding, and Biemann}]{wang2024mitigating}
Xintong Wang, Jingheng Pan, Liang Ding, and Chris Biemann. 2024.
\newblock \href {https://doi.org/10.18653/V1/2024.FINDINGS-ACL.937} {Mitigating hallucinations in large vision-language models with instruction contrastive decoding}.
\newblock In \emph{Findings of the Association for Computational Linguistics, {ACL} 2024, Bangkok, Thailand and virtual meeting, August 11-16, 2024}, pages 15840--15853. Association for Computational Linguistics.

\bibitem[{Xia et~al.(2024)Xia, Yang, Dong, Wang, Li, Ge, Liu, Li, and Sui}]{xia2024unlocking}
Heming Xia, Zhe Yang, Qingxiu Dong, Peiyi Wang, Yongqi Li, Tao Ge, Tianyu Liu, Wenjie Li, and Zhifang Sui. 2024.
\newblock \href {https://doi.org/10.18653/v1/2024.findings-acl.456} {Unlocking efficiency in large language model inference: A comprehensive survey of speculative decoding}.
\newblock In \emph{Findings of the Association for Computational Linguistics: ACL 2024}, pages 7655--7671, Bangkok, Thailand and virtual meeting. Association for Computational Linguistics.

\bibitem[{Yuan et~al.(2024)Yuan, Lu, Huang, Yuan, and Zhou}]{yuan2024speculative}
Hongyi Yuan, Keming Lu, Fei Huang, Zheng Yuan, and Chang Zhou. 2024.
\newblock Speculative contrastive decoding.
\newblock In \emph{Proceedings of the 62nd Annual Meeting of the Association for Computational Linguistics (Volume 2: Short Papers)}, pages 56--64.

\bibitem[{Zhang et~al.(2024)Zhang, Wang, Li, Shou, Chen, Chen, and Mehrotra}]{Zhang00S0CM24}
Jun Zhang, Jue Wang, Huan Li, Lidan Shou, Ke~Chen, Gang Chen, and Sharad Mehrotra. 2024.
\newblock \href {https://doi.org/10.18653/V1/2024.ACL-LONG.607} {Draft{\&} verify: Lossless large language model acceleration via self-speculative decoding}.
\newblock In \emph{Proceedings of the 62nd Annual Meeting of the Association for Computational Linguistics (Volume 1: Long Papers), {ACL} 2024, Bangkok, Thailand, August 11-16, 2024}, pages 11263--11282. Association for Computational Linguistics.

\bibitem[{Zhao et~al.(2024)Zhao, Monti, Lehmann, and Assem}]{DBLP:conf/naacl/ZhaoM0A24}
Zheng Zhao, Emilio Monti, Jens Lehmann, and Haytham Assem. 2024.
\newblock \href {https://doi.org/10.18653/V1/2024.NAACL-LONG.237} {Enhancing contextual understanding in large language models through contrastive decoding}.
\newblock In \emph{Proceedings of the 2024 Conference of the North American Chapter of the Association for Computational Linguistics: Human Language Technologies (Volume 1: Long Papers), {NAACL} 2024, Mexico City, Mexico, June 16-21, 2024}, pages 4225--4237. Association for Computational Linguistics.

\bibitem[{Zheng et~al.(2024)Zheng, Yin, Xie, Sun, Huang, Yu, Cao, Kozyrakis, Stoica, Gonzalez, Barrett, and Sheng}]{zheng2024sglang}
Lianmin Zheng, Liangsheng Yin, Zhiqiang Xie, Chuyue Sun, Jeff Huang, Cody~Hao Yu, Shiyi Cao, Christos Kozyrakis, Ion Stoica, Joseph~E. Gonzalez, Clark~W. Barrett, and Ying Sheng. 2024.
\newblock {SGLang}: Efficient execution of structured language model programs.
\newblock \emph{Advances in Neural Information Processing Systems}, 37:62557--62583.

\bibitem[{Zhou et~al.(2024)Zhou, Lyu, Rawat, Menon, Rostamizadeh, Kumar, Kagy, and Agarwal}]{ZhouLRMRKKA24}
Yongchao Zhou, Kaifeng Lyu, Ankit~Singh Rawat, Aditya~Krishna Menon, Afshin Rostamizadeh, Sanjiv Kumar, Jean{-}Fran{\c{c}}ois Kagy, and Rishabh Agarwal. 2024.
\newblock \href {https://openreview.net/forum?id=rsY6J3ZaTF} {{DistillSpec}: Improving speculative decoding via knowledge distillation}.
\newblock In \emph{The Twelfth International Conference on Learning Representations, {ICLR} 2024, Vienna, Austria, May 7-11, 2024}. OpenReview.net.

\end{thebibliography}

\clearpage
\appendix

\section{Proposal-Side Diagnostics}
\label{sec:appendix_proposal_diagnostics}
\subsection{Analysis of Contrastive-Aligned Drafting Strategies}
\label{sec:cd_aligned_analysis}
This subsection reports supplementary measurements for the contrastive-aligned drafting results in Section~\ref{sec:why_not_cd}. It restates the matched-control setup, then reports per-position reconstruction checks and bucketed diagnostics over model groups, datasets, and contrastive strengths. The matched control remains the two-model-pair, two-dataset study in Section~\ref{sec:why_not_cd}; the later views are supplementary diagnostics.
\paragraph{Measurement Methodology.}
We use two related measurement setups. For the matched-control KL and reconstruction checks referenced in the main text, we run per-position forward passes on 200 GSM8K examples for the \textsc{Llama-3} and \textsc{Qwen3} model pairs under greedy decoding ($T=0$) with draft length $\gamma=5$. We record full-vocabulary logits from the expert model $\mathcal{M}_p$, the amateur model $\mathcal{M}_q$, and the expert-aligned drafter $E$ instantiated with EAGLE3. We compute $D_{KL}(\pi_p \| \pi_{CD})$ by constructing $\pi_{CD}$ from Eq.~\ref{eq:contrastive_target} at each position, and we compute $D_{KL}(\pi_e \| \pi_p)$ directly from drafter and expert logits.

For the supplementary bucketed regime analysis, we aggregate configurations across three model groups (\textsc{Llama-3}, \textsc{Llama-EFT}, and \textsc{Qwen3}), two datasets (GSM8K and MMLU), and four contrastive strengths ($\alpha \in \{0.1, 0.3, 0.5, 1.0\}$). We use the same shared evaluation convention as Section~\ref{sec:experiments}: 200 evaluation samples per dataset, greedy decoding ($T=0$), and draft length $\gamma=5$. Here $h$ denotes the current decoding history, and $x^\star=\arg\max_x \pi_{CD}(x\mid h)$ denotes the CD top-1 token at that position. The per-position contrastive-signal statistic is the expert--amateur absolute log-ratio at that token, $\left|\log \frac{\pi_p(x^\star\mid h)}{\pi_q(x^\star\mid h)}\right|$. Table~\ref{tab:signal_bucket_shares} is count-weighted: we sum the per-bucket position counts over all configurations and divide by the total count. The reported 81.1\% is therefore the sum of the first three buckets, $[0,0.25)$, $[0.25,0.5)$, and $[0.5,1)$.

\paragraph{Error Amplification in the Approximate Dual-Drafter.}
A simple error decomposition explains why Path~2, the Approximate Dual-Drafter, performs poorly. Let $\hat{\pi}_{CD}(x) \propto \pi_e(x)^{1+\alpha} \cdot \pi_f(x)^{-\alpha}$ be the combined distribution, where $\pi_f$ is the amateur-aligned EAGLE head approximating $\pi_q$. Define the log approximation errors $\epsilon_p(x) = \log(\pi_e(x)/\pi_p(x))$ and $\epsilon_q(x) = \log(\pi_f(x)/\pi_q(x))$.

\begin{lemma}[Reconstruction Error Decomposition]
\label{lem:reconstruction_error}
The reconstructed distribution $\hat{\pi}_{CD}$ is an exponential tilt of $\pi_{CD}$:
\[
\begin{aligned}
\hat{\pi}_{CD}(x) &= \frac{\pi_{CD}(x) \cdot e^{\Delta(x)}}{Z_{\Delta}}, \\
Z_{\Delta} &= \mathbb{E}_{\pi_{CD}}[e^{\Delta(X)}],
\end{aligned}
\]
where the amplified combined error is $\Delta(x) = (1{+}\alpha)\,\epsilon_p(x) - \alpha\,\epsilon_q(x)$.
\end{lemma}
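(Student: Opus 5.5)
The plan is a direct algebraic computation in log space, followed by normalization. First, I rewrite the drafter distributions in terms of the true ones using the definitions of the log errors: $\pi_e(x)=\pi_p(x)e^{\epsilon_p(x)}$ and $\pi_f(x)=\pi_q(x)e^{\epsilon_q(x)}$. This requires $\pi_e,\pi_f,\pi_p,\pi_q$ to be strictly positive on the vocabulary. That holds automatically for softmax outputs, and I will state it as a standing assumption so that the logarithms are finite.

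Next, I substitute these expressions into the unnormalized reconstruction. This gives
\[
\pi_e(x)^{1+\alpha}\pi_f(x)^{-\alpha}
= \pi_p(x)^{1+\alpha}\pi_q(x)^{-\alpha}\, e^{(1+\alpha)\epsilon_p(x)-\alpha\epsilon_q(x)}
= Z_{CD}\,\pi_{CD}(x)\,e^{\Delta(x)},
\]
where the last step uses Eq.~\ref{eq:contrastive_target}, namely $\pi_p^{1+\alpha}\pi_q^{-\alpha}=Z_{CD}\pi_{CD}$. The unnormalized $\hat{\pi}_{CD}$ is therefore proportional to $\pi_{CD}(x)e^{\Delta(x)}$, since $Z_{CD}$ does not depend on $x$.

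Finally, I normalize over the vocabulary. The normalizer is $\sum_x \pi_{CD}(x)e^{\Delta(x)}=\mathbb{E}_{\pi_{CD}}[e^{\Delta(X)}]=Z_\Delta$. This quantity is finite and positive because the vocabulary is finite and every term is positive. The constant $Z_{CD}$ cancels between numerator and denominator, which gives exactly the stated form.

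There is no real obstacle here. The only point needing care is bookkeeping the normalizing constants. $\hat{\pi}_{CD}$ has its own normalizer $\sum_x \pi_e^{1+\alpha}\pi_f^{-\alpha}$, and I will check that it equals $Z_{CD}Z_\Delta$. This confirms that the tilt's normalizer is precisely the $\pi_{CD}$-expectation of $e^{\Delta}$ rather than some other constant. As a remark, I would note that the coefficient $1+\alpha$ on $\epsilon_p$ is what formalizes the amplification claimed in the main text.
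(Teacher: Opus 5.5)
Your proposal is correct and follows the paper's proof exactly: substitute $\pi_e=\pi_p e^{\epsilon_p}$ and $\pi_f=\pi_q e^{\epsilon_q}$ into $\pi_e^{1+\alpha}\pi_f^{-\alpha}$ to get $Z_{CD}\,\pi_{CD}(x)\,e^{\Delta(x)}$, then normalize. Your extra bookkeeping (checking that the normalizer is $Z_{CD}Z_\Delta$, and stating strict positivity so the log errors are finite) just spells out what the paper compresses into ``Normalizing yields the lemma.''
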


\begin{proof}
Direct expansion:
\begin{align*}
\pi_e(x)^{1+\alpha} \cdot \pi_f(x)^{-\alpha}
&= \bigl[\pi_p(x)\,e^{\epsilon_p(x)}\bigr]^{1+\alpha} \\
&\quad \cdot \bigl[\pi_q(x)\,e^{\epsilon_q(x)}\bigr]^{-\alpha} \\
&= \pi_p(x)^{1+\alpha}\,\pi_q(x)^{-\alpha} \\
&\quad \cdot e^{(1+\alpha)\epsilon_p(x) - \alpha\,\epsilon_q(x)} \\
&= Z_{CD}\,\pi_{CD}(x) \cdot e^{\Delta(x)}.
\end{align*}
Normalizing yields the lemma.
\end{proof}

The lemma makes the amplification mechanism explicit: the expert EAGLE log-error $\epsilon_p$ is scaled by $(1{+}\alpha)$ and the amateur EAGLE error $\epsilon_q$ by $\alpha$. Even when the two heads have similar approximation quality ($D_{KL}(\pi_e \| \pi_p) \approx 2.6$ vs.\ $D_{KL}(\pi_f \| \pi_q) \approx 2.4$ on Llama-3), the combined error $\Delta$ grows before normalization. Consequently, $D_{KL}(\hat{\pi}_{CD} \| \pi_{CD})$ becomes sensitive to the error distribution.

\paragraph{Per-Position Verification.}
We also measure the fraction of autoregressive positions at which the Approximate Dual-Drafter reconstruction $\hat{\pi}_{CD}$ is closer to $\pi_{CD}$ than the expert-aligned draft $\pi_e$, that is, where $D_{KL}(\hat{\pi}_{CD} \| \pi_{CD}) < D_{KL}(\pi_e \| \pi_{CD})$. On Llama-3 (GSM8K), this holds for 39.7\% of positions at $\alpha=0.1$ and 33.1\% at $\alpha=0.5$; Qwen3 shows 39.2\% and 31.9\%, respectively. This position-level check is consistent with the underperformance of Approximate Dual-Drafter in the tested regime.

Table~\ref{tab:dual_drafting} reports the corresponding aggregate matched-control results over accepted length, draft-to-target KL, and Top-1 Match.

\begin{table}[t]
\centering
\renewcommand{\arraystretch}{1.10}
\setlength{\tabcolsep}{4pt}
\resizebox{\columnwidth}{!}{%
\begin{tabular}{clcccc}
\toprule
\multirow{2}{*}{\textbf{$\alpha$}} & \multirow{2}{*}{\textbf{Method}} & \multicolumn{2}{c}{\textbf{Mean Accepted Length} $L$} & \multirow{2}{*}{\makecell{\textbf{Avg.} \\ \textbf{$D_{KL}(\cdot\|\pi_{CD})$}}} & \multirow{2}{*}{\makecell{\textbf{Avg. Top-1} \\ \textbf{Match (\%)}}} \\
\cmidrule(lr){3-4}
 &  & \textbf{GSM8K} & \textbf{MMLU} &  &  \\
\midrule
\multicolumn{6}{c}{\textit{\textbf{Llama-3} (8B-Ins / 1B-Ins)}} \\
\midrule
\multirow{2}{*}{0.1} & Expert-aligned & \textbf{1.966} & \textbf{1.476} & \textbf{2.98} & \textbf{67.7} \\
 & Approx.\ Dual & 1.945 & 1.458 & 3.13 & 66.7 \\
\addlinespace[2pt]
\multirow{2}{*}{0.3} & Expert-aligned & \textbf{1.932} & \textbf{1.424} & \textbf{3.26} & \textbf{66.1} \\
 & Approx.\ Dual & 1.853 & 1.357 & 3.84 & 62.9 \\
\addlinespace[2pt]
\multirow{2}{*}{0.5} & Expert-aligned & \textbf{1.863} & \textbf{1.355} & \textbf{3.54} & \textbf{64.4} \\
 & Approx.\ Dual & 1.711 & 1.241 & 4.74 & 58.6 \\
\midrule
\multicolumn{6}{c}{\textit{\textbf{Qwen3} (8B / 1.7B)}} \\
\midrule
\multirow{2}{*}{0.1} & Expert-aligned & \textbf{1.736} & \textbf{1.122} & \textbf{6.45} & \textbf{60.6} \\
 & Approx.\ Dual & 1.717 & 1.104 & 6.69 & 59.7 \\
\addlinespace[2pt]
\multirow{2}{*}{0.3} & Expert-aligned & \textbf{1.656} & \textbf{1.027} & \textbf{6.55} & \textbf{58.1} \\
 & Approx.\ Dual & 1.593 & 0.973 & 7.39 & 55.2 \\
\addlinespace[2pt]
\multirow{2}{*}{0.5} & Expert-aligned & \textbf{1.544} & \textbf{0.904} & \textbf{6.69} & \textbf{54.7} \\
 & Approx.\ Dual & 1.420 & 0.820 & 8.31 & 49.3 \\
\bottomrule
\end{tabular}
}
\caption{Approximate dual-drafting vs.\ expert-aligned DCD$_{\textsc{eagle3}}$. GSM8K/MMLU columns report dataset-specific mean accepted length $L$. The last two columns average over GSM8K and MMLU: $D_{KL}(\cdot\|\pi_{CD})$ is the draft-to-target KL, and Top-1 Match is the argmax match rate with $\pi_{CD}$. The CD reconstruction route fails on all three diagnostics: it has lower $L$, higher draft-to-target KL, and lower Top-1 Match, with larger gaps as $\alpha$ increases.}
\label{tab:dual_drafting}
\vspace{-4pt}
\end{table}

\paragraph{Supplementary Aggregated Regime Analysis Beyond the Matched Control.}
Beyond the matched control above, we also aggregate configurations spanning three model groups (\textsc{Llama-3}, \textsc{Llama-EFT}, and \textsc{Qwen3}), two datasets (GSM8K and MMLU), and four contrastive strengths ($\alpha \in \{0.1, 0.3, 0.5, 1.0\}$). For each configuration, we record the Top-1 gap $\Delta$Top-1 $=$ Top-1(contrastive-aligned) $-$ Top-1(expert-aligned), where Top-1 denotes the fraction of positions whose argmax matches $\arg\max_x \pi_{CD}(x\mid h)$, across bucketed views of signal, $D_{KL}(\pi_e \| \pi_p)$, and $D_{KL}(\pi_f \| \pi_q)$. The main text now summarizes the signal distribution, the signal $\times$ $D_{KL}(\pi_e \| \pi_p)$ effect view, and the marginal KL$(\pi_e \| \pi_p)$ distribution; this appendix adds the complementary views from that broader diagnostic.

Table~\ref{tab:signal_bucket_shares} gives the detailed count-weighted signal-mass breakdown behind the main-text distribution panel. In total, 81.1\% of positions fall below signal strength 1.0, while 18.9\% lie at or above 1.0.

\begin{table}[t]
\centering
\small
\renewcommand{\arraystretch}{1.12}
\setlength{\tabcolsep}{4pt}
\begin{tabular*}{0.92\columnwidth}{@{\extracolsep{\fill}}lcc}
\toprule
\textbf{Signal bucket} & \textbf{Share} & \textbf{Cumulative} \\
\midrule
$[0,0.25)$ & 64.4\% & 64.4\% \\
$[0.25,0.5)$ & 8.4\% & 72.8\% \\
$[0.5,1)$ & 8.3\% & 81.1\% \\
$[1,2)$ & 7.0\% & 88.1\% \\
$[2,4)$ & 5.2\% & 93.3\% \\
$[4,8)$ & 3.6\% & 96.9\% \\
$\geq 8$ & 3.1\% & 100.0\% \\
\midrule
$< 1.0$ & 81.1\% & -- \\
$\geq 1.0$ & 18.9\% & -- \\
\bottomrule
\end{tabular*}
\caption{Count-weighted share of autoregressive positions by contrastive-signal bucket, aggregated over 24 configurations spanning three model groups, two datasets, and four contrastive strengths.}
\label{tab:signal_bucket_shares}
\vspace{-4pt}
\end{table}

Main-text Figure~\ref{fig:signal_kl_ep_heatmap} summarizes the count-weighted signal distribution, the marginal KL$(\pi_e \| \pi_p)$ distribution, and the signal $\times$ $D_{KL}(\pi_e \| \pi_p)$ mechanism view. The remaining appendix figures add complementary aggregated views.

Figure~\ref{fig:kl_f_q_heatmap} provides the complementary configuration-level view over KL$(\pi_f\,\|\,\pi_q)$ buckets. Along this axis, the values vary more by configuration and do not form a stable monotonic pattern.

\begin{figure*}[t]
\centering
\begin{tikzpicture}
\begin{axis}[
    x=0.65cm,
    y=0.65cm,
    xmin=-0.5, xmax=10.5,
    ymin=-0.5, ymax=23.5,
    y dir=reverse,
    xtick={0,1,2,3,4,5,6,7,8,9,10},
    ytick={0,1,2,3,4,5,6,7,8,9,10,11,12,13,14,15,16,17,18,19,20,21,22,23},
    xticklabels={{$[0,0.01)$},{$[0.01,0.05)$},{$[0.05,0.1)$},{$[0.1,0.2)$},{$[0.2,0.5)$},{$[0.5,1)$},{$[1,2)$},{$[2,4)$},{$[4,6)$},{$[6,8)$},{$[8,\infty)$}},
    yticklabels={$a=0.1$,$a=0.3$,$a=0.5$,$a=1.0$,$a=0.1$,$a=0.3$,$a=0.5$,$a=1.0$,$a=0.1$,$a=0.3$,$a=0.5$,$a=1.0$,$a=0.1$,$a=0.3$,$a=0.5$,$a=1.0$,$a=0.1$,$a=0.3$,$a=0.5$,$a=1.0$,$a=0.1$,$a=0.3$,$a=0.5$,$a=1.0$},
    xticklabel style={font=\scriptsize, rotate=26, anchor=east, xshift=10pt, yshift=-10pt},
    yticklabel style={font=\scriptsize, xshift=-5pt},
    tick style={draw=none},
    axis line style={draw=black!35, line width=0.35pt},
    xlabel={KL$(\pi_f\,\|\,\pi_q)$},
    xlabel style={font=\small},
    enlargelimits=false,
    scale only axis,
    axis on top,
    clip=false,
    point meta min=-0.30,
    point meta max=0.30,
    colormap={paperdiv}{
        rgb255(0cm)=(59,76,192);
        rgb255(0.18cm)=(98,130,234);
        rgb255(0.36cm)=(190,210,245);
        rgb255(0.50cm)=(250,240,215);
        rgb255(0.64cm)=(245,193,167);
        rgb255(0.82cm)=(220,120,98);
        rgb255(1cm)=(180,4,38)
    },
    colorbar,
    colorbar style={
        title={$\Delta$Top-1},
        title style={font=\scriptsize, yshift=1pt},
        ytick={-0.30,-0.15,0.00,0.15,0.30},
        yticklabels={{-0.3},{-0.15},{0},{0.15},{0.3}},
        tick label style={font=\scriptsize},
        width=0.25cm,
        major tick length=1.5pt,
        axis line style={draw=black!30, line width=0.3pt}
    }
]
\addplot[
    matrix plot*,
    mesh/cols=11,
    point meta=explicit,
    draw=black!8,
    line width=0.12pt,
] coordinates {
    (0,0) [-0.0080] (1,0) [0.0000] (2,0) [-0.0090] (3,0) [-0.0220] (4,0) [-0.0130] (5,0) [-0.0200] (6,0) [-0.0160] (7,0) [-0.0070] (8,0) [0.0050] (9,0) [-0.0060] (10,0) [-0.0020] (0,1) [-0.0110] (1,1) [-0.0120] (2,1) [-0.0470] (3,1) [-0.0400] (4,1) [-0.0420] (5,1) [-0.0390] (6,1) [-0.0330] (7,1) [-0.0330] (8,1) [-0.0120] (9,1) [-0.0320] (10,1) [-0.0110] (0,2) [-0.0280] (1,2) [-0.0390] (2,2) [-0.0640] (3,2) [-0.0940] (4,2) [-0.0860] (5,2) [-0.0870] (6,2) [-0.0540] (7,2) [-0.0530] (8,2) [-0.0390] (9,2) [-0.0480] (10,2) [-0.0180] (0,3) [-0.1540] (1,3) [-0.2010] (2,3) [-0.1980] (3,3) [-0.2400] (4,3) [-0.2150] (5,3) [-0.1730] (6,3) [-0.1390] (7,3) [-0.1170] (8,3) [-0.0960] (9,3) [-0.0920] (10,3) [-0.0500] (0,4) [-0.0040] (1,4) [-0.0030] (2,4) [-0.0070] (3,4) [-0.0150] (4,4) [-0.0160] (5,4) [-0.0070] (6,4) [-0.0100] (7,4) [-0.0060] (8,4) [-0.0050] (9,4) [-0.0010] (10,4) [-0.0090] (0,5) [-0.0060] (1,5) [-0.0270] (2,5) [-0.0350] (3,5) [-0.0420] (4,5) [-0.0420] (5,5) [-0.0310] (6,5) [-0.0260] (7,5) [-0.0200] (8,5) [-0.0240] (9,5) [-0.0070] (10,5) [-0.0200] (0,6) [-0.0370] (1,6) [-0.0440] (2,6) [-0.0590] (3,6) [-0.0710] (4,6) [-0.0720] (5,6) [-0.0630] (6,6) [-0.0510] (7,6) [-0.0450] (8,6) [-0.0200] (9,6) [-0.0460] (10,6) [-0.0300] (0,7) [-0.1390] (1,7) [-0.1750] (2,7) [-0.1950] (3,7) [-0.1900] (4,7) [-0.1840] (5,7) [-0.1480] (6,7) [-0.1140] (7,7) [-0.0830] (8,7) [-0.0730] (9,7) [-0.0460] (10,7) [-0.0680] (0,8) [-0.0080] (1,8) [-0.0100] (2,8) [-0.0150] (3,8) [-0.0190] (4,8) [-0.0110] (5,8) [-0.0060] (6,8) [-0.0170] (7,8) [-0.0160] (8,8) [-0.0100] (9,8) [-0.0030] (10,8) [0.0060] (0,9) [-0.0120] (1,9) [-0.0240] (2,9) [-0.0320] (3,9) [-0.0390] (4,9) [-0.0520] (5,9) [-0.0430] (6,9) [-0.0320] (7,9) [-0.0240] (8,9) [-0.0250] (9,9) [-0.0220] (10,9) [-0.0030] (0,10) [-0.0410] (1,10) [-0.0560] (2,10) [-0.0760] (3,10) [-0.0900] (4,10) [-0.0920] (5,10) [-0.0690] (6,10) [-0.0610] (7,10) [-0.0450] (8,10) [-0.0440] (9,10) [-0.0460] (10,10) [-0.0330] (0,11) [-0.2020] (1,11) [-0.1800] (2,11) [-0.1980] (3,11) [-0.2280] (4,11) [-0.2020] (5,11) [-0.1480] (6,11) [-0.1280] (7,11) [-0.1170] (8,11) [-0.1100] (9,11) [-0.0910] (10,11) [-0.0950] (0,12) [-0.0030] (1,12) [-0.0040] (2,12) [-0.0070] (3,12) [-0.0170] (4,12) [-0.0070] (5,12) [-0.0140] (6,12) [-0.0120] (7,12) [-0.0090] (8,12) [-0.0080] (9,12) [-0.0090] (10,12) [-0.0020] (0,13) [-0.0210] (1,13) [-0.0250] (2,13) [-0.0240] (3,13) [-0.0290] (4,13) [-0.0310] (5,13) [-0.0510] (6,13) [-0.0360] (7,13) [-0.0360] (8,13) [-0.0220] (9,13) [-0.0040] (10,13) [-0.0310] (0,14) [-0.0630] (1,14) [-0.0730] (2,14) [-0.0750] (3,14) [-0.0800] (4,14) [-0.0530] (5,14) [-0.0600] (6,14) [-0.0590] (7,14) [-0.0450] (8,14) [-0.0370] (9,14) [-0.0370] (10,14) [-0.0280] (0,15) [-0.1820] (1,15) [-0.2510] (2,15) [-0.1940] (3,15) [-0.1910] (4,15) [-0.1660] (5,15) [-0.1470] (6,15) [-0.1350] (7,15) [-0.1160] (8,15) [-0.0950] (9,15) [-0.0990] (10,15) [-0.0660] (0,16) [0.0000] (1,16) [-0.0080] (2,16) [-0.0160] (3,16) [-0.0110] (4,16) [-0.0080] (5,16) [-0.0170] (6,16) [-0.0240] (7,16) [-0.0160] (8,16) [-0.0120] (9,16) [-0.0110] (10,16) [-0.0030] (0,17) [-0.0260] (1,17) [-0.0360] (2,17) [-0.0340] (3,17) [-0.0320] (4,17) [-0.0390] (5,17) [-0.0410] (6,17) [-0.0380] (7,17) [-0.0660] (8,17) [-0.0440] (9,17) [-0.0290] (10,17) [-0.0160] (0,18) [-0.0850] (1,18) [-0.0950] (2,18) [-0.1080] (3,18) [-0.0690] (4,18) [-0.0570] (5,18) [-0.0920] (6,18) [-0.0940] (7,18) [-0.0840] (8,18) [-0.0950] (9,18) [-0.0520] (10,18) [-0.0310] (0,19) [-0.1500] (1,19) [-0.2190] (2,19) [-0.2420] (3,19) [-0.1870] (4,19) [-0.1960] (5,19) [-0.1350] (6,19) [-0.1210] (7,19) [-0.0830] (8,19) [-0.0670] (9,19) [-0.0590] (10,19) [-0.0320] (0,20) [-0.0090] (1,20) [0.0000] (2,20) [0.0000] (3,20) [-0.0030] (4,20) [-0.0100] (5,20) [-0.0140] (6,20) [-0.0140] (7,20) [-0.0080] (8,20) [-0.0110] (9,20) [-0.0150] (10,20) [-0.0040] (0,21) [0.0000] (1,21) [-0.0250] (2,21) [-0.0180] (3,21) [-0.0190] (4,21) [-0.0190] (5,21) [-0.0410] (6,21) [-0.0290] (7,21) [-0.0200] (8,21) [-0.0460] (9,21) [-0.0280] (10,21) [-0.0200] (0,22) [-0.0090] (1,22) [-0.0280] (2,22) [-0.0520] (3,22) [-0.0430] (4,22) [-0.0570] (5,22) [-0.0790] (6,22) [-0.0760] (7,22) [-0.0560] (8,22) [-0.0500] (9,22) [-0.0450] (10,22) [-0.0280] (0,23) [-0.1870] (1,23) [-0.1720] (2,23) [-0.1120] (3,23) [-0.1590] (4,23) [-0.1650] (5,23) [-0.1240] (6,23) [-0.1200] (7,23) [-0.1130] (8,23) [-0.0660] (9,23) [-0.0520] (10,23) [-0.0270]
};

\draw[black!60, line width=1pt] (axis cs:-0.5,3.5) -- (axis cs:10.5,3.5);
\draw[black!60, line width=1pt] (axis cs:-0.5,7.5) -- (axis cs:10.5,7.5);
\draw[black!60, line width=1pt] (axis cs:-0.5,11.5) -- (axis cs:10.5,11.5);
\draw[black!60, line width=1pt] (axis cs:-0.5,15.5) -- (axis cs:10.5,15.5);
\draw[black!60, line width=1pt] (axis cs:-0.5,19.5) -- (axis cs:10.5,19.5);

    \node[anchor=east, font=\small\bfseries, align=right] at (axis cs:-2.5,1.5) {Llama-3 / GSM8K};
    \node[anchor=east, font=\small\bfseries, align=right] at (axis cs:-2.5,5.5) {Llama-3 / MMLU};
    \node[anchor=east, font=\small\bfseries, align=right] at (axis cs:-2.5,9.5) {Llama-EFT / GSM8K};
    \node[anchor=east, font=\small\bfseries, align=right] at (axis cs:-2.5,13.5) {Llama-EFT / MMLU};
    \node[anchor=east, font=\small\bfseries, align=right] at (axis cs:-2.5,17.5) {Qwen3 / GSM8K};
    \node[anchor=east, font=\small\bfseries, align=right] at (axis cs:-2.5,21.5) {Qwen3 / MMLU};

\end{axis}
\end{tikzpicture}
\caption[KL$(\pi_f\,\|\,\pi_q)$ configuration heatmap]{Configuration-level $\Delta$Top-1 across KL$(\pi_f\,\|\,\pi_q)$ buckets. The association is heterogeneous across model groups and contrastive strengths: for $\alpha\leq0.5$, KL-axis patterns are weak or mixed, whereas at $\alpha=1.0$ the gap often becomes less negative at higher KL. No configuration-invariant positive-gain regime emerges.}
\label{fig:kl_f_q_heatmap}
\end{figure*}

Figure~\ref{fig:exp1_regime_heatmap} provides a compact signal-centric aggregation over mutually exclusive signal buckets and shows that the low-signal failure pattern persists in most settings.

\begin{figure*}[t]
\centering
\begin{tikzpicture}
\begin{axis}[
    x=0.65cm,
    y=0.35cm,
    xmin=-0.5, xmax=6.5,
    ymin=-0.5, ymax=23.5,
    y dir=reverse,
    xtick={0,1,2,3,4,5,6},
    ytick={0,1,2,3,4,5,6,7,8,9,10,11,12,13,14,15,16,17,18,19,20,21,22,23},
    xticklabels={{$[0,0.25)$},{$[0.25,0.5)$},{$[0.5,1)$},{$[1,2)$},{$[2,4)$},{$[4,8)$},{$\geq 8$}},
    yticklabels={$a=0.1$,$a=0.3$,$a=0.5$,$a=1.0$,$a=0.1$,$a=0.3$,$a=0.5$,$a=1.0$,$a=0.1$,$a=0.3$,$a=0.5$,$a=1.0$,$a=0.1$,$a=0.3$,$a=0.5$,$a=1.0$,$a=0.1$,$a=0.3$,$a=0.5$,$a=1.0$,$a=0.1$,$a=0.3$,$a=0.5$,$a=1.0$},
    xticklabel style={font=\tiny, rotate=20, anchor=east, xshift=8pt, yshift=-8pt},
    yticklabel style={font=\tiny, xshift=-5pt},
    tick style={draw=none},
    axis line style={draw=black!35, line width=0.35pt},
    xlabel={Signal bucket},
    xlabel style={font=\scriptsize},
    enlargelimits=false,
    scale only axis,
    axis on top,
    clip=false,
    point meta min=-0.30,
    point meta max=0.30,
    colormap={paperdiv}{
        rgb255(0cm)=(59,76,192);
        rgb255(0.18cm)=(98,130,234);
        rgb255(0.36cm)=(190,210,245);
        rgb255(0.50cm)=(250,240,215);
        rgb255(0.64cm)=(245,193,167);
        rgb255(0.82cm)=(220,120,98);
        rgb255(1cm)=(180,4,38)
    },
    colorbar,
    colorbar style={
        title={$\Delta$Top-1},
        title style={font=\tiny, yshift=1pt},
        ytick={-0.30,-0.15,0.00,0.15,0.30},
        yticklabels={{-0.30},{-0.15},{0},{0.15},{0.30}},
        tick label style={font=\tiny},
        width=0.20cm,
        major tick length=1.2pt,
        axis line style={draw=black!30, line width=0.3pt}
    }
]
\addplot[
    matrix plot*,
    mesh/cols=7,
    point meta=explicit,
    draw=black!8,
    line width=0.12pt,
] coordinates {
    (0,0) [-0.0110] (1,0) [-0.0130] (2,0) [-0.0120] (3,0) [0.0110] (4,0) [0.0000] (5,0) [0.0520] (6,0) [0.0000] (0,1) [-0.0400] (1,1) [-0.0290] (2,1) [-0.0100] (3,1) [0.0230] (4,1) [0.0200] (5,1) [0.0640] (6,1) [0.0000] (0,2) [-0.0750] (1,2) [-0.0430] (2,2) [-0.0380] (3,2) [0.0160] (4,2) [0.0280] (5,2) [0.0000] (6,2) [0.0670] (0,3) [-0.1870] (1,3) [-0.1730] (2,3) [-0.1160] (3,3) [-0.0500] (4,3) [0.0060] (5,3) [0.0160] (6,3) [0.0000] (0,4) [-0.0100] (1,4) [-0.0160] (2,4) [-0.0070] (3,4) [0.0030] (4,4) [0.0120] (5,4) [0.0130] (6,4) [0.0000] (0,5) [-0.0360] (1,5) [-0.0520] (2,5) [-0.0090] (3,5) [0.0000] (4,5) [0.0120] (5,5) [-0.0070] (6,5) [0.0000] (0,6) [-0.0740] (1,6) [-0.0630] (2,6) [-0.0470] (3,6) [0.0050] (4,6) [0.0130] (5,6) [0.0210] (6,6) [0.0130] (0,7) [-0.1910] (1,7) [-0.1690] (2,7) [-0.1210] (3,7) [-0.0320] (4,7) [0.0120] (5,7) [0.0060] (6,7) [0.0050] (0,8) [-0.0150] (1,8) [-0.0100] (2,8) [-0.0100] (3,8) [-0.0120] (4,8) [0.0080] (5,8) [0.0600] (6,8) [0.0000] (0,9) [-0.0430] (1,9) [-0.0520] (2,9) [-0.0290] (3,9) [0.0060] (4,9) [0.0210] (5,9) [0.0930] (6,9) [0.2920] (0,10) [-0.0860] (1,10) [-0.0770] (2,10) [-0.0580] (3,10) [-0.0060] (4,10) [0.0470] (5,10) [0.0420] (6,10) [0.4870] (0,11) [-0.2090] (1,11) [-0.1880] (2,11) [-0.1020] (3,11) [-0.0540] (4,11) [0.0120] (5,11) [0.0050] (6,11) [0.7850] (0,12) [-0.0120] (1,12) [-0.0150] (2,12) [-0.0080] (3,12) [0.0000] (4,12) [0.0120] (5,12) [0.0180] (6,12) [0.0140] (0,13) [-0.0440] (1,13) [-0.0330] (2,13) [-0.0380] (3,13) [-0.0110] (4,13) [0.0230] (5,13) [0.0550] (6,13) [0.2190] (0,14) [-0.0790] (1,14) [-0.0690] (2,14) [-0.0500] (3,14) [-0.0060] (4,14) [0.0360] (5,14) [0.0800] (6,14) [0.7190] (0,15) [-0.2000] (1,15) [-0.1680] (2,15) [-0.1500] (3,15) [-0.0300] (4,15) [0.0370] (5,15) [0.0730] (6,15) [0.8200] (0,16) [-0.0110] (1,16) [0.0100] (2,16) [0.0050] (3,16) [0.0000] (4,16) [-0.0030] (5,16) [0.0070] (6,16) [-0.0040] (0,17) [-0.0360] (1,17) [-0.0180] (2,17) [-0.0080] (3,17) [-0.0300] (4,17) [-0.0150] (5,17) [0.0120] (6,17) [0.0200] (0,18) [-0.0710] (1,18) [-0.0310] (2,18) [-0.0430] (3,18) [-0.0390] (4,18) [-0.0200] (5,18) [0.0020] (6,18) [0.0220] (0,19) [-0.0920] (1,19) [-0.0720] (2,19) [-0.0740] (3,19) [-0.0480] (4,19) [-0.0110] (5,19) [-0.0010] (6,19) [0.0050] (0,20) [-0.0090] (1,20) [-0.0190] (2,20) [-0.0120] (3,20) [-0.0050] (4,20) [-0.0100] (5,20) [0.0000] (6,20) [0.0150] (0,21) [-0.0360] (1,21) [-0.0390] (2,21) [-0.0050] (3,21) [-0.0100] (4,21) [0.0050] (5,21) [-0.0030] (6,21) [0.0270] (0,22) [-0.0670] (1,22) [-0.0420] (2,22) [-0.0110] (3,22) [-0.0090] (4,22) [-0.0010] (5,22) [0.0080] (6,22) [0.0220] (0,23) [-0.0750] (1,23) [-0.0980] (2,23) [-0.0680] (3,23) [-0.0480] (4,23) [-0.0160] (5,23) [0.0010] (6,23) [0.0050]
};

\draw[black!60, line width=0.8pt] (axis cs:-0.5,3.5) -- (axis cs:6.5,3.5);
\draw[black!60, line width=0.8pt] (axis cs:-0.5,7.5) -- (axis cs:6.5,7.5);
\draw[black!60, line width=0.8pt] (axis cs:-0.5,11.5) -- (axis cs:6.5,11.5);
\draw[black!60, line width=0.8pt] (axis cs:-0.5,15.5) -- (axis cs:6.5,15.5);
\draw[black!60, line width=0.8pt] (axis cs:-0.5,19.5) -- (axis cs:6.5,19.5);

    \node[anchor=east, font=\scriptsize\bfseries, align=right] at (axis cs:-2,1.5) {Llama-3 / GSM8K};
    \node[anchor=east, font=\scriptsize\bfseries, align=right] at (axis cs:-2,5.5) {Llama-3 / MMLU};
    \node[anchor=east, font=\scriptsize\bfseries, align=right] at (axis cs:-2,9.5) {Llama-EFT / GSM8K};
    \node[anchor=east, font=\scriptsize\bfseries, align=right] at (axis cs:-2,13.5) {Llama-EFT / MMLU};
    \node[anchor=east, font=\scriptsize\bfseries, align=right] at (axis cs:-2,17.5) {Qwen3 / GSM8K};
    \node[anchor=east, font=\scriptsize\bfseries, align=right] at (axis cs:-2,21.5) {Qwen3 / MMLU};

\end{axis}
\end{tikzpicture}
    \caption[Contrastive-signal regime heatmap]{Compact regime heatmap for the expanded root-level contrastive-signal bucket analysis. Here contrastive signal denotes the per-position expert--amateur absolute log-ratio $\left|\log \frac{\pi_p(x\mid h)}{\pi_q(x\mid h)}\right|$. Rows are grouped by model family and dataset, with per-row tick labels indicating $\alpha$, and columns are mutually exclusive contrastive-signal buckets. Low-signal buckets are mostly negative, while several higher-signal buckets are positive. Warm colors favor contrastive-aligned drafting; cool colors favor expert-aligned drafting. To preserve visualization contrast for moderate variations, the color scale is truncated at $\pm 0.3$.}
\label{fig:exp1_regime_heatmap}
\end{figure*}

Figure~\ref{fig:exp1_alpha_trends} breaks this pattern down by model group and dataset. Larger $\alpha$ usually pushes the low-signal buckets further negative, while some higher-signal buckets improve when the contrastive signal is informative enough.

\begin{figure*}[t]
\centering
\definecolor{alphaone}{RGB}{158,202,225}
\definecolor{alphathree}{RGB}{107,174,214}
\definecolor{alphafive}{RGB}{128,125,186}
\definecolor{alphaonezero}{RGB}{165,15,21}

\begin{subfigure}[t]{0.32\textwidth}
\centering
\begin{tikzpicture}
\begin{axis}[
    width=0.80\linewidth,
    ylabel={$\Delta$Top-1},
    height=4.2cm,
    title={\textsc{Llama-3} | GSM8K},
    title style={font=\scriptsize\bfseries, yshift=-1ex},
    xlabel style={font=\scriptsize},
    ymin=-0.24, ymax=0.32,
    xtick={0,1,2,3,4,5},
    xticklabels={{$[0,0.25)$},{$[0.25,0.5)$},{$[0.5,1)$},{$[1,2)$},{$[2,4)$},{$[4,8)$}},
    xticklabel style={font=\tiny, rotate=45, anchor=east},
    ytick={-0.2,-0.1,0,0.1,0.2,0.3},
    yticklabel style={font=\scriptsize},
    grid=major,
    grid style={solid, gray!15},
    scale only axis,
]
\draw[black!70, solid, line width=0.8pt] (axis cs:-0.5,0) -- (axis cs:5.5,0);
\addplot[color=alphaone, mark=*, mark size=1.5pt, thick] coordinates { (0,-0.011) (1,-0.013) (2,-0.012) (3,0.011) (4,0.000) (5,0.052) };
\addplot[color=alphathree, mark=square*, mark size=1.5pt, thick] coordinates { (0,-0.040) (1,-0.029) (2,-0.010) (3,0.023) (4,0.020) (5,0.064) };
\addplot[color=alphafive, mark=triangle*, mark size=1.7pt, thick] coordinates { (0,-0.075) (1,-0.043) (2,-0.038) (3,0.016) (4,0.028) (5,0.000) };
\addplot[color=alphaonezero, mark=diamond*, mark size=1.8pt, thick] coordinates { (0,-0.187) (1,-0.173) (2,-0.116) (3,-0.050) (4,0.006) (5,0.016) };
\end{axis}
\end{tikzpicture}
\end{subfigure}
\hfill
\begin{subfigure}[t]{0.32\textwidth}
\centering
\begin{tikzpicture}
\begin{axis}[
    width=0.80\linewidth,
    height=4.2cm,
    title={\textsc{Llama-EFT} | GSM8K},
    title style={font=\scriptsize\bfseries, yshift=-1ex},
    xlabel style={font=\scriptsize},
    ymin=-0.24, ymax=0.32,
    xtick={0,1,2,3,4,5},
    xticklabels={{$[0,0.25)$},{$[0.25,0.5)$},{$[0.5,1)$},{$[1,2)$},{$[2,4)$},{$[4,8)$}},
    xticklabel style={font=\tiny, rotate=45, anchor=east},
    ytick={-0.2,-0.1,0,0.1,0.2,0.3},
    yticklabel style={font=\scriptsize},
    grid=major,
    grid style={solid, gray!15},
    scale only axis,
]
\draw[black!70, solid, line width=0.8pt] (axis cs:-0.5,0) -- (axis cs:5.5,0);
\addplot[color=alphaone, mark=*, mark size=1.5pt, thick] coordinates { (0,-0.015) (1,-0.010) (2,-0.010) (3,-0.012) (4,0.008) (5,0.060) };
\addplot[color=alphathree, mark=square*, mark size=1.5pt, thick] coordinates { (0,-0.043) (1,-0.052) (2,-0.029) (3,0.006) (4,0.021) (5,0.093) };
\addplot[color=alphafive, mark=triangle*, mark size=1.7pt, thick] coordinates { (0,-0.086) (1,-0.077) (2,-0.058) (3,-0.006) (4,0.047) (5,0.042) };
\addplot[color=alphaonezero, mark=diamond*, mark size=1.8pt, thick] coordinates { (0,-0.209) (1,-0.188) (2,-0.102) (3,-0.054) (4,0.012) (5,0.005) };
\end{axis}
\end{tikzpicture}
\end{subfigure}
\hfill
\begin{subfigure}[t]{0.32\textwidth}
\centering
\begin{tikzpicture}
\begin{axis}[
    width=0.80\linewidth,
    height=4.2cm,
    title={\textsc{Qwen3} | GSM8K},
    title style={font=\scriptsize\bfseries, yshift=-1ex},
    xlabel style={font=\scriptsize},
    ymin=-0.24, ymax=0.32,
    xtick={0,1,2,3,4,5},
    xticklabels={{$[0,0.25)$},{$[0.25,0.5)$},{$[0.5,1)$},{$[1,2)$},{$[2,4)$},{$[4,8)$}},
    xticklabel style={font=\tiny, rotate=45, anchor=east},
    ytick={-0.2,-0.1,0,0.1,0.2,0.3},
    yticklabel style={font=\scriptsize},
    grid=major,
    grid style={solid, gray!15},
    scale only axis,
]
\draw[black!70, solid, line width=0.8pt] (axis cs:-0.5,0) -- (axis cs:5.5,0);
\addplot[color=alphaone, mark=*, mark size=1.5pt, thick] coordinates { (0,-0.011) (1,0.010) (2,0.005) (3,0.000) (4,-0.003) (5,0.007) };
\addplot[color=alphathree, mark=square*, mark size=1.5pt, thick] coordinates { (0,-0.036) (1,-0.018) (2,-0.008) (3,-0.030) (4,-0.015) (5,0.012) };
\addplot[color=alphafive, mark=triangle*, mark size=1.7pt, thick] coordinates { (0,-0.071) (1,-0.031) (2,-0.043) (3,-0.039) (4,-0.020) (5,0.002) };
\addplot[color=alphaonezero, mark=diamond*, mark size=1.8pt, thick] coordinates { (0,-0.092) (1,-0.072) (2,-0.074) (3,-0.048) (4,-0.011) (5,-0.001) };
\end{axis}
\end{tikzpicture}
\end{subfigure}

\vspace{0.4em}

\begin{subfigure}[t]{0.32\textwidth}
\centering
\begin{tikzpicture}
\begin{axis}[
    width=0.80\linewidth,
    ylabel={$\Delta$Top-1},
    xlabel={Signal Bucket},
    height=4.2cm,
    title={\textsc{Llama-3} | MMLU},
    title style={font=\scriptsize\bfseries, yshift=-1ex},
    xlabel style={font=\scriptsize},
    ymin=-0.24, ymax=0.32,
    xtick={0,1,2,3,4,5},
    xticklabels={{$[0,0.25)$},{$[0.25,0.5)$},{$[0.5,1)$},{$[1,2)$},{$[2,4)$},{$[4,8)$}},
    xticklabel style={font=\tiny, rotate=45, anchor=east},
    ytick={-0.2,-0.1,0,0.1,0.2,0.3},
    yticklabel style={font=\scriptsize},
    grid=major,
    grid style={solid, gray!15},
    scale only axis,
]
\draw[black!70, solid, line width=0.8pt] (axis cs:-0.5,0) -- (axis cs:5.5,0);
\addplot[color=alphaone, mark=*, mark size=1.5pt, thick] coordinates { (0,-0.010) (1,-0.016) (2,-0.007) (3,0.003) (4,0.012) (5,0.013) };
\addplot[color=alphathree, mark=square*, mark size=1.5pt, thick] coordinates { (0,-0.036) (1,-0.052) (2,-0.009) (3,0.000) (4,0.012) (5,-0.007) };
\addplot[color=alphafive, mark=triangle*, mark size=1.7pt, thick] coordinates { (0,-0.074) (1,-0.063) (2,-0.047) (3,0.005) (4,0.013) (5,0.021) };
\addplot[color=alphaonezero, mark=diamond*, mark size=1.8pt, thick] coordinates { (0,-0.191) (1,-0.169) (2,-0.121) (3,-0.032) (4,0.012) (5,0.006) };
\end{axis}
\end{tikzpicture}
\end{subfigure}
\hfill
\begin{subfigure}[t]{0.32\textwidth}
\centering
\begin{tikzpicture}
\begin{axis}[
    width=0.80\linewidth,
    xlabel={Signal Bucket},
    height=4.2cm,
    title={\textsc{Llama-EFT} | MMLU},
    title style={font=\scriptsize\bfseries, yshift=-1ex},
    xlabel style={font=\scriptsize},
    ymin=-0.24, ymax=0.32,
    xtick={0,1,2,3,4,5},
    xticklabels={{$[0,0.25)$},{$[0.25,0.5)$},{$[0.5,1)$},{$[1,2)$},{$[2,4)$},{$[4,8)$}},
    xticklabel style={font=\tiny, rotate=45, anchor=east},
    ytick={-0.2,-0.1,0,0.1,0.2,0.3},
    yticklabel style={font=\scriptsize},
    grid=major,
    grid style={solid, gray!15},
    scale only axis,
]
\draw[black!70, solid, line width=0.8pt] (axis cs:-0.5,0) -- (axis cs:5.5,0);
\addplot[color=alphaone, mark=*, mark size=1.5pt, thick] coordinates { (0,-0.012) (1,-0.015) (2,-0.008) (3,0.000) (4,0.012) (5,0.018) };
\addplot[color=alphathree, mark=square*, mark size=1.5pt, thick] coordinates { (0,-0.044) (1,-0.033) (2,-0.038) (3,-0.011) (4,0.023) (5,0.055) };
\addplot[color=alphafive, mark=triangle*, mark size=1.7pt, thick] coordinates { (0,-0.079) (1,-0.069) (2,-0.050) (3,-0.006) (4,0.036) (5,0.080) };
\addplot[color=alphaonezero, mark=diamond*, mark size=1.8pt, thick] coordinates { (0,-0.200) (1,-0.168) (2,-0.150) (3,-0.030) (4,0.037) (5,0.073) };
\end{axis}
\end{tikzpicture}
\end{subfigure}
\hfill
\begin{subfigure}[t]{0.32\textwidth}
\centering
\begin{tikzpicture}
\begin{axis}[
    width=0.80\linewidth,
    xlabel={Signal Bucket},
    height=4.2cm,
    title={\textsc{Qwen3} | MMLU},
    title style={font=\scriptsize\bfseries, yshift=-1ex},
    xlabel style={font=\scriptsize},
    ymin=-0.24, ymax=0.32,
    xtick={0,1,2,3,4,5},
    xticklabels={{$[0,0.25)$},{$[0.25,0.5)$},{$[0.5,1)$},{$[1,2)$},{$[2,4)$},{$[4,8)$}},
    xticklabel style={font=\tiny, rotate=45, anchor=east},
    ytick={-0.2,-0.1,0,0.1,0.2,0.3},
    yticklabel style={font=\scriptsize},
    grid=major,
    grid style={solid, gray!15},
    scale only axis,
]
\draw[black!70, solid, line width=0.8pt] (axis cs:-0.5,0) -- (axis cs:5.5,0);
\addplot[color=alphaone, mark=*, mark size=1.5pt, thick] coordinates { (0,-0.009) (1,-0.019) (2,-0.012) (3,-0.005) (4,-0.010) (5,0.000) };
\addplot[color=alphathree, mark=square*, mark size=1.5pt, thick] coordinates { (0,-0.036) (1,-0.039) (2,-0.005) (3,-0.010) (4,0.005) (5,-0.003) };
\addplot[color=alphafive, mark=triangle*, mark size=1.7pt, thick] coordinates { (0,-0.067) (1,-0.042) (2,-0.011) (3,-0.009) (4,-0.001) (5,0.008) };
\addplot[color=alphaonezero, mark=diamond*, mark size=1.8pt, thick] coordinates { (0,-0.075) (1,-0.098) (2,-0.068) (3,-0.048) (4,-0.016) (5,0.001) };
\end{axis}
\end{tikzpicture}
\end{subfigure}

\vspace{0.3em}
\begin{tikzpicture}
\begin{axis}[
    hide axis,
    xmin=0, xmax=1,
    ymin=0, ymax=1,
    legend style={
        at={(0.5,0.5)},
        anchor=center,
        legend columns=4,
        draw=none,
        fill=none,
        font=\small,
        /tikz/every even column/.append style={column sep=0.7em}
    }
]
\addlegendimage{color=alphaone, mark=*, mark size=1.5pt, thick}
\addlegendentry{$\alpha=0.1$}
\addlegendimage{color=alphathree, mark=square*, mark size=1.5pt, thick}
\addlegendentry{$\alpha=0.3$}
\addlegendimage{color=alphafive, mark=triangle*, mark size=1.7pt, thick}
\addlegendentry{$\alpha=0.5$}
\addlegendimage{color=alphaonezero, mark=diamond*, mark size=1.8pt, thick}
\addlegendentry{$\alpha=1.0$}
\end{axis}
\end{tikzpicture}

\caption[$\alpha$-trends by contrastive-signal bucket]{Per-group $\Delta$Top-1 trends across contrastive-signal buckets. Here contrastive signal denotes the per-position expert--amateur absolute log-ratio $\left|\log \frac{\pi_p(x\mid h)}{\pi_q(x\mid h)}\right|$. Each subplot fixes a model group and dataset, and each line corresponds to one contrastive strength $\alpha$. Larger $\alpha$ usually worsens low-signal buckets but can amplify gains in higher-signal buckets, especially for the Llama-EFT settings. For clarity of the moderate-signal trends, the extreme outlier bucket ($\geq 8$) is excluded from the horizontal axis.}
\label{fig:exp1_alpha_trends}
\end{figure*}

Across Figures~\ref{fig:signal_kl_ep_heatmap}, \ref{fig:kl_f_q_heatmap}, \ref{fig:exp1_regime_heatmap}, and \ref{fig:exp1_alpha_trends}, low-signal buckets remain mostly negative, the dependence on expert-side proposal error is strong, and the dependence on amateur-side reconstruction error is noisier.

\paragraph{Why Contrastive-Aligned Training Also Underperforms in Our Setting (Path~1).}
For Path~1, there is no inference-time reconstruction step, but the same scale mismatch remains: the lightweight EAGLE head must learn both the dominant expert distribution $\pi_p$ and the much smaller contrastive signal $\Delta \propto \pi_q^{-\alpha}$. This increases the modeling burden without reducing baseline proposal error. Within our CD-Aligned-EAGLE family, Table~\ref{tab:cross_alpha_full} therefore still favors expert-aligned training (train-$\alpha{=}0$) as the strongest default.

\subsection{Cross-alpha Experiment Details}
\label{sec:appendix_cross_alpha}
Table~\ref{tab:cross_alpha} in the main text summarizes the within-family controlled single-run ablation. Table~\ref{tab:cross_alpha_full} reports the full cross-$\alpha$ matrix for the Contrastive-Aligned Dual-Input Drafter in Section~\ref{sec:why_not_cd} (Path~1), where rows vary train-$\alpha$ and columns vary infer-$\alpha$. All numeric variants share the same dual-input architecture and ShareGPT training setup; only the target distribution differs. We also include the Official checkpoint from our main experiments, evaluated under the same inference protocol, as an external reference.

\begin{table*}[t]
\centering
\footnotesize
\renewcommand{\arraystretch}{1.15}
\setlength{\tabcolsep}{4pt}
\begin{tabular}{lcccc}
\toprule
\textbf{Train-$\alpha$} & \textbf{Infer-$\alpha{=}0.0$} & \textbf{Infer-$\alpha{=}0.1$} & \textbf{Infer-$\alpha{=}0.3$} & \textbf{Infer-$\alpha{=}0.5$} \\
\midrule
\multicolumn{5}{c}{\textit{\textbf{Llama-3} (8B-Ins / 1B-Ins)} — GSM8K} \\
\midrule
\textsc{Official checkpoint} & 1.555 & 1.537 & 1.546 & 1.476 \\
\rowcolor{yellow!15} 0.0 & 2.002 & 1.968 & 1.956 & 1.880 \\
0.1 & \textbf{2.008} & \textbf{1.988} & \textbf{1.977} & \textbf{1.892} \\
0.3 & 1.995 & 1.964 & 1.963 & 1.881 \\
0.5 & 1.936 & 1.915 & 1.896 & 1.833 \\
\midrule
\multicolumn{5}{c}{\textit{\textbf{Llama-3} (8B-Ins / 1B-Ins)} — MMLU} \\
\midrule
\textsc{Official checkpoint} & 1.523 & 1.428 & 1.344 & 1.214 \\
\rowcolor{yellow!15} 0.0 & 1.655 & \textbf{1.596} & \textbf{1.508} & \textbf{1.404} \\
0.1 & \textbf{1.663} & 1.576 & 1.503 & 1.370 \\
0.3 & 1.623 & 1.554 & 1.469 & 1.320 \\
0.5 & 1.618 & 1.552 & 1.466 & 1.328 \\
\midrule
\multicolumn{5}{c}{\textit{\textbf{Qwen3} (8B / 1.7B)} — GSM8K} \\
\midrule
\textsc{Official checkpoint} & \textbf{2.083} & \textbf{2.061} & \textbf{1.983} & \textbf{1.842} \\
\rowcolor{yellow!15} 0.0 & 1.732 & 1.691 & 1.602 & 1.477 \\
0.1 & 1.714 & 1.664 & 1.583 & 1.477 \\
0.3 & 1.715 & 1.678 & 1.575 & 1.474 \\
0.5 & 1.691 & 1.665 & 1.584 & 1.487 \\
\midrule
\multicolumn{5}{c}{\textit{\textbf{Qwen3} (8B / 1.7B)} — MMLU} \\
\midrule
\textsc{Official checkpoint} & \textbf{1.656} & \textbf{1.650} & \textbf{1.552} & \textbf{1.386} \\
\rowcolor{yellow!15} 0.0 & 1.438 & 1.426 & 1.317 & 1.189 \\
0.1 & 1.411 & 1.404 & 1.294 & 1.159 \\
0.3 & 1.418 & 1.411 & 1.317 & 1.176 \\
0.5 & 1.392 & 1.380 & 1.284 & 1.156 \\
\bottomrule
\end{tabular}
\caption{Full cross-$\alpha$ matrix for the Contrastive-Aligned Dual-Input Drafter. Numeric train-$\alpha$ rows report our CD-Aligned-EAGLE family: rows vary the training target, columns vary infer-$\alpha$, and cells report Mean Accepted Length (\textit{L}). The \textsc{Official checkpoint} row reports the publicly released model used in our main experiments, evaluated under the same inference protocol, as an external reference. Train-$\alpha{=}0.0$ is our expert-aligned baseline. \colorbox{yellow!15}{Highlighted rows} mark this baseline, and \textbf{bold} marks the best value in each infer-$\alpha$ column over all displayed rows.}
\label{tab:cross_alpha_full}
\vspace{-4pt}
\end{table*}

\subsection{Optimistic Offline Diagnostic with a Stronger Full-LM Proposer}
\label{sec:appendix_stronger_proposer}

This subsection extends the diagnosis in Section~\ref{sec:why_not_cd} to a stronger same-family proposer. It tests whether the mechanism in Figure~\ref{fig:signal_kl_ep_heatmap} disappears once the proposal side is replaced by a substantially stronger full LM.

\paragraph{Setup.}
We keep the expert and amateur fixed as \textsc{Llama-3.1-8B-Instruct} and \textsc{Llama-3.2-1B-Instruct}, and replace the lightweight proposal proxy with \textsc{Llama-3.2-3B-Instruct}. Let $\pi_r$ denote the 3B proposer distribution. For each shared response history $h$, we compare the true contrastive target
\[
\pi_{CD}(x\mid h) \propto \pi_p(x\mid h)^{1+\alpha} \pi_q(x\mid h)^{-\alpha}
\]
with the stronger-proposer approximation
\[
\hat{\pi}^{r}_{CD}(x\mid h) \propto \pi_r(x\mid h)^{1+\alpha} \pi_q(x\mid h)^{-\alpha}.
\]
We evaluate GSM8K and MMLU on 200 aligned examples each with $\alpha \in \{0.1, 0.5\}$. Reusing the true amateur logits and computing all statistics on shared response histories gives the proposer an optimistic offline diagnostic; online rollout performance is outside this check.

\begin{figure*}[t]
\centering
\hspace*{0cm}
\begin{subfigure}[t]{0.24\textwidth}
\centering
\parbox[t][2.7em][c]{\linewidth}{\centering
\makebox[0pt][c]{\hspace*{2.5em}\textbf{(a) Contrastive signal}}\\
\makebox[0pt][c]{\hspace*{2.5em}\textbf{distribution}}}\par\vspace{0.25em}
\begin{tikzpicture}
\begin{axis}[
    ybar,
    bar width=6.5pt,
    bar shift=0pt,
    width=0.76\linewidth,
    height=4.85cm,
    xmin=-0.55, xmax=6.55,
    ymin=0, ymax=75,
    xtick={0,1,2,3,4,5,6},
    xticklabels={{$[0,0.25)$},{$[0.25,0.5)$},{$[0.5,1)$},{$[1,2)$},{$[2,4)$},{$[4,8)$},{$\geq 8$}},
    ytick={0,20,40,60},
    xlabel={Signal bucket},
    ylabel={Share (\%)},
    xlabel style={font=\small, yshift=0pt},
    ylabel style={font=\small, xshift=0pt, yshift=0pt},
    xticklabel style={font=\tiny, rotate=24, anchor=east, xshift=6pt, yshift=-4pt},
    yticklabel style={font=\scriptsize},
    tick style={draw=none},
    axis line style={draw=black!35, line width=0.35pt},
    grid=major,
    grid style={dashed, gray!35},
    scale only axis,
    axis on top,
    clip=false,
]
\addplot[draw=dcdblue!85!black, fill=dcdblue!70] coordinates { (0,70.2) (1,8.8) (2,7.9) };
\addplot[draw=black!45, fill=black!18] coordinates { (3,6.7) (4,4.3) (5,1.8) (6,0.3) };
\end{axis}
\end{tikzpicture}
\end{subfigure}
\hspace{0.3cm}
\begin{subfigure}[t]{0.46\textwidth}
\centering
\parbox[t][2.7em][c]{\linewidth}{\centering
\makebox[0pt][c]{\hspace*{3.5em}\textbf{(b) Effect by signal and proposal error}}}\par\vspace{0.25em}
\begin{tikzpicture}
\begin{axis}[
    width=0.72\linewidth,
    height=3.64cm,
    xmin=-0.5, xmax=7.5,
    ymin=-0.5, ymax=6.5,
    y dir=reverse,
    xtick={0,1,2,3,4,5,6,7},
    ytick={0,1,2,3,4,5,6},
    xticklabels={{$<0.01$},{$0.01$--$0.05$},{$0.05$--$0.1$},{$0.1$--$0.2$},{$0.2$--$0.5$},{$0.5$--$1$},{$1$--$2$},{$\geq 2$}},
    yticklabels={{$<0.25$},{$0.25$--$0.5$},{$0.5$--$1$},{$1$--$2$},{$2$--$4$},{$4$--$8$},{$\geq 8$}},
    xticklabel style={font=\tiny, rotate=24, anchor=east, xshift=6pt, yshift=-10pt},
    yticklabel style={font=\scriptsize},
    tick style={draw=none},
    axis line style={draw=black!35, line width=0.35pt},
    xlabel={KL$(\pi_r \,\|\, \pi_p)$},
    ylabel={Signal},
    xlabel style={font=\small, yshift=0pt},
    ylabel style={font=\small, xshift=-5pt, yshift=-10pt},
    enlargelimits=false,
    scale only axis,
    axis on top,
    point meta min=-0.40,
    point meta max=0.40,
    colormap={paperdiv}{
        rgb255(0cm)=(49,54,149);
        rgb255(0.18cm)=(116,173,209);
        rgb255(0.36cm)=(224,243,248);
        rgb255(0.50cm)=(255,255,191);
        rgb255(0.64cm)=(253,174,97);
        rgb255(0.82cm)=(244,109,67);
        rgb255(1cm)=(165,0,38)
    },
    colorbar horizontal,
    colorbar style={
        at={(0.5,1.15)},
        anchor=south,
        title={$\Delta$Top-1},
        title style={font=\scriptsize, yshift=-1pt},
        xtick={-0.40,-0.20,0.00,0.20,0.40},
        xticklabels={{-0.4},{-0.2},{0},{0.2},{0.4}},
        ytick=\empty,
        tick label style={font=\scriptsize},
        width=5cm,
        height=0.2cm,
        major tick length=1.3pt,
        axis line style={draw=black!30, line width=0.3pt}
    }
]
\addplot[
    matrix plot*,
    mesh/cols=8,
    point meta=explicit,
    draw=black!8,
    line width=0.12pt,
] coordinates {
    (0,0) [-0.0002] (1,0) [-0.0082] (2,0) [-0.0241] (3,0) [-0.0487] (4,0) [-0.0864] (5,0) [-0.1562] (6,0) [-0.2600] (7,0) [-0.3607] (0,1) [0.0044] (1,1) [-0.0023] (2,1) [-0.0078] (3,1) [-0.0201] (4,1) [-0.0584] (5,1) [-0.1087] (6,1) [-0.1403] (7,1) [-0.2276] (0,2) [0.0412] (1,2) [0.0320] (2,2) [0.0159] (3,2) [0.0015] (4,2) [-0.0272] (5,2) [-0.0599] (6,2) [-0.0796] (7,2) [-0.1256] (0,3) [0.1031] (1,3) [0.0960] (2,3) [0.0832] (3,3) [0.0704] (4,3) [0.0345] (5,3) [-0.0164] (6,3) [-0.0261] (7,3) [-0.0335] (0,4) [0.1695] (1,4) [0.2093] (2,4) [0.2230] (3,4) [0.1769] (4,4) [0.1109] (5,4) [0.0458] (6,4) [0.0254] (7,4) [0.0086] (0,5) [0.1818] (1,5) [0.2576] (2,5) [0.2738] (3,5) [0.2051] (4,5) [0.1858] (5,5) [0.1131] (6,5) [0.0677] (7,5) [0.0287] (0,6) [0.0000] (1,6) [0.1333] (2,6) [0.1304] (3,6) [0.2963] (4,6) [0.2444] (5,6) [0.1250] (6,6) [0.1236] (7,6) [0.0714]
};
\fill[black!45] (axis cs:0,4) circle (1.5pt);
\fill[black!45] (axis cs:0,5) circle (1.5pt);
\fill[black!45] (axis cs:1,5) circle (1.5pt);
\fill[black!45] (axis cs:2,5) circle (1.5pt);
\fill[black!45] (axis cs:0,6) circle (1.5pt);
\fill[black!45] (axis cs:1,6) circle (1.5pt);
\fill[black!45] (axis cs:2,6) circle (1.5pt);
\fill[black!45] (axis cs:3,6) circle (1.5pt);
\fill[black!45] (axis cs:4,6) circle (1.5pt);
\fill[black!45] (axis cs:6,6) circle (1.5pt);
\fill[black!45] (axis cs:7,6) circle (1.5pt);
\draw[black!45, dashed, line width=0.55pt] (axis cs:-0.5,0.5) -- (axis cs:7.5,0.5);
\end{axis}
\end{tikzpicture}
\end{subfigure}
\hspace{0cm}
\begin{subfigure}[t]{0.24\textwidth}
\centering
\parbox[t][2.7em][c]{\linewidth}{\centering
\makebox[0pt][c]{\hspace*{2.5em}\textbf{(c) KL$(\pi_r \,\|\, \pi_p)$}}\\
\makebox[0pt][c]{\hspace*{2.5em}\textbf{distribution}}}\par\vspace{0.25em}
\begin{tikzpicture}
\begin{axis}[
    ybar,
    bar width=6.5pt,
    bar shift=0pt,
    width=0.76\linewidth,
    height=4.85cm,
    xmin=-0.55, xmax=7.55,
    ymin=0, ymax=50,
    xtick={0,1,2,3,4,5,6,7},
    xticklabels={{$<0.01$},{$0.01$--$0.05$},{$0.05$--$0.1$},{$0.1$--$0.2$},{$0.2$--$0.5$},{$0.5$--$1$},{$1$--$2$},{$\geq 2$}},
    ytick={0,10,20,30,40,50},
    xlabel={KL$(\pi_r \,\|\, \pi_p)$ bucket},
    ylabel={Share (\%)},
    xlabel style={font=\small, yshift=0pt},
    ylabel style={font=\small, xshift=0pt, yshift=0pt},
    xticklabel style={font=\tiny, rotate=24, anchor=east, xshift=6pt, yshift=-4pt},
    yticklabel style={font=\scriptsize},
    tick style={draw=none},
    axis line style={draw=black!35, line width=0.35pt},
    grid=major,
    grid style={dashed, gray!35},
    scale only axis,
    axis on top,
    clip=false,
]
\addplot[draw=dcdblue!85!black, fill=dcdblue!70] coordinates { (0,47.9) (1,12.2) (2,7.3) (3,9.0) };
\addplot[draw=black!45, fill=black!18] coordinates { (4,12.4) (5,6.9) (6,3.3) (7,1.1) };
\end{axis}
\end{tikzpicture}
\end{subfigure}

\caption{Full offline 3B proposer diagnostic. $\Delta$Top-1 is the CD-aware reconstruction's Top-1 agreement gain over the unmodified 3B proposer, measured against $\arg\max_x \pi_{CD}(x\mid h)$. Low-signal rows remain mostly negative; setting-level mean $\Delta$Top-1 values are slightly negative in all four settings (GSM8K ($\alpha=0.1$) -0.0028; GSM8K ($\alpha=0.5$) -0.0123; MMLU ($\alpha=0.1$) -0.0030; MMLU ($\alpha=0.5$) -0.0144).}
\label{fig:stronger_proposer_signal_kl_full}
\vspace{-4pt}
\end{figure*}

Figure~\ref{fig:stronger_proposer_signal_kl_full} reports the same three-part diagnostic as Figure~\ref{fig:signal_kl_ep_heatmap}. Under this optimistic proposer replacement, favorable gains remain concentrated in higher-signal, lower-error regions, while low-signal or high-proposer-error buckets remain unfavorable. The stronger proposer therefore enlarges the favorable pocket but does not overturn the conclusion that proposal-side approximation error limits contrastive-aligned drafting.

\section{Theory for Expert-Aligned Drafting in Contrastive Acceleration}
\label{sec:appendix_theory}

This appendix provides the robustness analysis summarized in Section~\ref{sec:theory}. It characterizes the expert-aligned lightweight-drafter regime used by DCD$_{\textsc{eagle3}}$ and connects the analysis to the observed relative degradation trends; the empirical comparisons remain the primary evidence.

\subsection{Distribution Alignment and Robustness Analysis}

We analyze how draft distributions align with the contrastive decoding target distribution $\pi_{CD}$ and why expert-aligned drafting is more robust to $\alpha$ under the stated alignment condition. This analysis supports the empirical finding that mean accepted length decays more gradually in the expert-aligned DCD$_{\textsc{eagle3}}$ instantiation as contrastive strength increases.

\paragraph{KL Divergence Decomposition.}

Let $\pi_p$, $\pi_q$, and $\pi_e$ denote the expert, amateur, and lightweight-drafter distributions, respectively. The target distribution is defined as $\pi_{CD}(x) = \frac{1}{Z_{CD}} \pi_p(x)^{1+\alpha} \pi_q(x)^{-\alpha}$.

\begin{lemma}[KL Divergence to Target]
\label{lem:kl_decomposition}
\begin{align*}
D_{KL}(\pi_p \| \pi_{CD})\! &=\! \log Z_{CD} - \alpha D_{KL}(\pi_p \| \pi_q), \\
D_{KL}(\pi_q \| \pi_{CD})\! &=\! \log Z_{CD} \\
&\quad + (1+\alpha) D_{KL}(\pi_q \| \pi_p), \\
D_{KL}(\pi_e \| \pi_{CD})\! &=\! \log Z_{CD} \\
&\quad + (1+\alpha) D_{KL}(\pi_e \| \pi_p) \\
&\quad - \alpha D_{KL}(\pi_e \| \pi_q).
\end{align*}
\end{lemma}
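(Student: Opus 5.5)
The plan is to prove one general identity for an arbitrary proposal distribution $\pi_r$ over the vocabulary and then read off all three lines as special cases. Concretely, I will show that for any $\pi_r$ with full support (or at least $\mathrm{supp}(\pi_r)\subseteq\mathrm{supp}(\pi_p)\cap\mathrm{supp}(\pi_q)$),
\[
D_{KL}(\pi_r \| \pi_{CD}) = \log Z_{CD} + (1+\alpha)\, D_{KL}(\pi_r \| \pi_p) - \alpha\, D_{KL}(\pi_r \| \pi_q).
\]
Taking $\pi_r=\pi_e$ gives the third line. Taking $\pi_r=\pi_p$ uses $D_{KL}(\pi_p\|\pi_p)=0$ and gives the first line. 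Taking $\pi_r=\pi_q$ uses $D_{KL}(\pi_q\|\pi_q)=0$ and gives the second line.

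For the general identity, I will start from the log form of Eq.~\eqref{eq:contrastive_target}:
\[
\log \pi_{CD}(x) = (1+\alpha)\log\pi_p(x) - \alpha\log\pi_q(x) - \log Z_{CD}.
\]
I substitute this into $D_{KL}(\pi_r\|\pi_{CD}) = \mathbb{E}_{\pi_r}[\log \pi_r(X) - \log\pi_{CD}(X)]$. The constant $\log Z_{CD}$ comes out of the expectation with a plus sign. The key step is then to split $\log\pi_r$ as $(1+\alpha)\log\pi_r - \alpha\log\pi_r$, using that the coefficients sum to one. After regrouping, the integrand becomes
\[
(1+\alpha)\bigl(\log\pi_r - \log\pi_p\bigr) - \alpha\bigl(\log\pi_r - \log\pi_q\bigr) + \log Z_{CD}.
\]
Linearity of expectation under $\pi_r$ then yields the two KL terms.

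The only point needing care is well-definedness, since the manipulation splits one finite expectation into a difference of expectations. I will state the standing assumption that all distributions are softmax outputs over a finite vocabulary, so they have full support. Under that assumption every log is finite and every expectation is a finite sum, so the split is legitimate. I do not expect a genuine obstacle: this is a direct expansion in the same spirit as the proof of Lemma~\ref{lem:reconstruction_error}.
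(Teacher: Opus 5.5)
Your proposal is correct and follows essentially the same route as the paper, whose proof is simply to substitute the log-form of $\pi_{CD}$ into $D_{KL}(\cdot\|\pi_{CD})$ and expand the logarithms. Proving the identity once for a generic $\pi_r$ and then specializing via $D_{KL}(\pi_p\|\pi_p)=D_{KL}(\pi_q\|\pi_q)=0$ just makes that expansion explicit, and your full-support (softmax) assumption properly justifies splitting the expectation.
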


\begin{proof}
Substituting the definition of $\pi_{CD}$ into $D_{KL}(\cdot \| \pi_{CD})$ and expanding the logarithms gives the result.
\end{proof}

\paragraph{Static Analysis.}

We first examine the absolute distance between each draft distribution and the target $\pi_{CD}$ at a fixed $\alpha$.

From Lemma~\ref{lem:kl_decomposition}, the KL divergence gap between the amateur model and the drafter relative to the target is:
\begin{equation}
\label{eq:static_alignment}
\begin{split}
&D_{KL}(\pi_q \| \pi_{CD}) - D_{KL}(\pi_e \| \pi_{CD}) \\
&\quad = (1+\alpha) [D_{KL}(\pi_q \| \pi_p) - D_{KL}(\pi_e \| \pi_p)] \\
&\qquad + \alpha D_{KL}(\pi_e \| \pi_q).
\end{split}
\end{equation}
This identity follows directly from Lemma~\ref{lem:kl_decomposition}, where the $\log Z_{CD}$ terms cancel.

\noindent \textbf{Remark (Static Capacity Gap).}
In practice, if the absolute distance from the drafter to the target is larger than the corresponding distance from the amateur ($D_{KL}(\pi_e \| \pi_{CD}) > D_{KL}(\pi_q \| \pi_{CD})$), it implies:
\[
\begin{split}
D_{KL}(\pi_e \| \pi_p) > & D_{KL}(\pi_q \| \pi_p) \\
& + \frac{\alpha}{1+\alpha} D_{KL}(\pi_e \| \pi_q).
\end{split}
\]

This inequality quantifies the \textit{static capacity gap}. Although the lightweight drafter is trained to mimic the expert, its approximation error $D_{KL}(\pi_e \| \pi_p)$ can still exceed the amateur's baseline error by a margin large enough to offset the alignment term. This interpretation matches our acceptance-rate comparisons, in which the lightweight drafter can start from a lower acceptance baseline than the amateur model.

\paragraph{Dynamic Analysis: Robustness to Contrastive Penalty.}
Although $\pi_e$ may be farther from the target in absolute terms, its response to increasing $\alpha$ can still be more favorable under the alignment condition below.

To keep the analysis tied to a meaningful draft model, we use the following alignment condition.

\begin{definition}[Effective Draft Alignment]
\label{def:effective_alignment}
A draft distribution $\pi_e$ is effectively aligned if:
\[
\mathbb{E}_{x \sim \pi_e}\left[\log \frac{\pi_p(x)}{\pi_q(x)}\right] > \mathbb{E}_{x \sim \pi_q}\left[\log \frac{\pi_p(x)}{\pi_q(x)}\right].
\]
\end{definition}

This condition is equivalent to
\begin{equation}
\sum_x (\pi_e(x) - \pi_q(x)) \log \frac{\pi_p(x)}{\pi_q(x)} > 0.
\label{eq:alignment_condition}
\end{equation}

\noindent \textbf{Remark.}
The condition is practical. Because $\pi_e$ is trained to approximate $\pi_p$, it tends to move probability mass toward tokens on which the expert has a larger log-likelihood advantage over the amateur. Condition \eqref{eq:alignment_condition} only requires the drafter to separate expert-preferred tokens from the rest at least as well as the amateur does.

\begin{theorem}[Slope Comparison under Effective Draft Alignment]
\label{thm:robustness}
Assuming $\pi_e$ satisfies the Effective Draft Alignment condition, the distance from the amateur distribution $\pi_q$ to the target distribution $\pi_{CD}$ grows faster than the distance from the lightweight drafter $\pi_e$ as $\alpha$ increases:
\[
\frac{\partial}{\partial \alpha} D_{KL}(\pi_q \| \pi_{CD}) > \frac{\partial}{\partial \alpha} D_{KL}(\pi_e \| \pi_{CD}).
\]
\end{theorem}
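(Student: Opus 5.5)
Differentiate the decomposition in Lemma~\ref{lem:kl_decomposition} with respect to $\alpha$, holding $\pi_p$, $\pi_q$, and $\pi_e$ fixed; only the target $\pi_{CD}$ moves with $\alpha$. A cleaner route than differentiating the three Lemma expressions term by term is to write $s(x)=\log\frac{\pi_p(x)}{\pi_q(x)}$ and note that $\log\pi_{CD}(x)=\log\pi_p(x)+\alpha s(x)-\log Z_{CD}(\alpha)$ with $Z_{CD}(\alpha)=\sum_x \pi_p(x)e^{\alpha s(x)}$. Then, for any fixed draft distribution $\mu$,
\[
D_{KL}(\mu\|\pi_{CD}) = \sum_x \mu(x)\log\frac{\mu(x)}{\pi_p(x)} - \alpha\,\mathbb{E}_{\mu}[s] + \log Z_{CD}(\alpha).
\]
This is consistent with Lemma~\ref{lem:kl_decomposition}: for $\mu=\pi_e$ one checks $(1+\alpha)D_{KL}(\pi_e\|\pi_p)-\alpha D_{KL}(\pi_e\|\pi_q)=D_{KL}(\pi_e\|\pi_p)-\alpha\mathbb{E}_{\pi_e}[s]$. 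I would state this form explicitly as the first step, verifying it against the Lemma for $\mu=\pi_q$ and $\mu=\pi_e$.

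Second, differentiate in $\alpha$. The first term is $\alpha$-independent. The $\log Z_{CD}$ term has derivative $\mathbb{E}_{\pi_{CD}}[s]$, the standard log-partition derivative for the exponential tilt of $\pi_p$ by $e^{\alpha s}$. This holds because the vocabulary is finite, so differentiating under the sum is trivial. Hence
\[
\frac{\partial}{\partial\alpha}D_{KL}(\mu\|\pi_{CD}) = \mathbb{E}_{\pi_{CD}}[s]-\mathbb{E}_{\mu}[s].
\]

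Third, subtract the $\mu=\pi_e$ case from the $\mu=\pi_q$ case. The common $\mathbb{E}_{\pi_{CD}}[s]$ cancels, leaving
\[
\frac{\partial}{\partial\alpha}D_{KL}(\pi_q\|\pi_{CD})-\frac{\partial}{\partial\alpha}D_{KL}(\pi_e\|\pi_{CD}) = \mathbb{E}_{\pi_e}[s]-\mathbb{E}_{\pi_q}[s] = \sum_x(\pi_e(x)-\pi_q(x))\log\frac{\pi_p(x)}{\pi_q(x)},
\]
which is strictly positive by Definition~\ref{def:effective_alignment}, i.e.\ condition~\eqref{eq:alignment_condition}. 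This completes the argument.

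The only step needing care is the derivative of $\log Z_{CD}$, together with making sure the $\alpha$-dependence hidden inside the Lemma's KL terms is handled correctly. If one differentiates the Lemma's form directly, $D_{KL}(\pi_q\|\pi_p)$, $D_{KL}(\pi_e\|\pi_p)$, and $D_{KL}(\pi_e\|\pi_q)$ are constants in $\alpha$. The slopes are then $\frac{d}{d\alpha}\log Z_{CD}+D_{KL}(\pi_q\|\pi_p)$ and $\frac{d}{d\alpha}\log Z_{CD}+D_{KL}(\pi_e\|\pi_p)-D_{KL}(\pi_e\|\pi_q)$. Their difference $D_{KL}(\pi_q\|\pi_p)+D_{KL}(\pi_e\|\pi_q)-D_{KL}(\pi_e\|\pi_p)$ simplifies, after expanding the logarithms, to exactly $\mathbb{E}_{\pi_e}[s]-\mathbb{E}_{\pi_q}[s]$. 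I would include this as a cross-check.

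We also implicitly assume full support of $\pi_p$ and $\pi_q$ (softmax outputs), so that $s$ is finite and all quantities are well defined.
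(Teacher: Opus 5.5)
Your proof is correct and follows the paper's argument. Both substitute the log-form of $\pi_{CD}$ into $D_{KL}(\cdot\|\pi_{CD})$, differentiate in $\alpha$, and let the common $\partial \log Z_{CD}/\partial\alpha$ term cancel in the difference, then conclude from Definition~\ref{def:effective_alignment} that $\sum_x(\pi_e(x)-\pi_q(x))\log\frac{\pi_p(x)}{\pi_q(x)}>0$. Your explicit evaluation $\partial_\alpha \log Z_{CD}=\mathbb{E}_{\pi_{CD}}[s]$ is a harmless extra, since the paper leaves that term unevaluated because it cancels, and your cross-check against Lemma~\ref{lem:kl_decomposition} is also correct.
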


\paragraph{Proof of Theorem \ref{thm:robustness}.}
\label{sec:proof_robustness}

Empirically, SCD often has a higher acceptance rate at $\alpha=0$, whereas expert-aligned drafting typically deteriorates more slowly as $\alpha$ increases. We explain this trend by tracking how the distance to the target changes with $\alpha$.

\begin{proof}
The target distribution is:
\[
\pi_{CD}(x) = \frac{1}{Z_{CD}} \pi_p(x)^{1+\alpha} \pi_q(x)^{-\alpha}.
\]
Substituting its log-form into the KL divergence formula $D_{KL}(\cdot \| \pi_{CD})$ and differentiating with respect to $\alpha$:

For the amateur model $\mathcal{M}_q$:
\begin{align*}
\text{Slope}_q &= \frac{\partial}{\partial \alpha} D_{KL}(\pi_q \| \pi_{CD}) \\
&=\! \frac{\partial}{\partial \alpha} \mathbb{E}_{x \sim \pi_q} [\log \pi_q(x) \\
&\quad - ((1+\alpha)\log \pi_p(x) \\
&\quad - \alpha \log \pi_q(x) - \log Z_{CD})] \\
&=\! \mathbb{E}_{x \sim \pi_q} [\log \pi_q(x) - \log \pi_p(x)] \\
&\quad + \frac{\partial \log Z_{CD}}{\partial \alpha}.
\end{align*}

For the drafter $\pi_e$:
\begin{align*}
\text{Slope}_e &= \frac{\partial}{\partial \alpha} D_{KL}(\pi_e \| \pi_{CD}) \\
&= \frac{\partial}{\partial \alpha} \mathbb{E}_{x \sim \pi_e} [\log \pi_e(x) \\
&\quad - ((1+\alpha)\log \pi_p(x) \\
&\quad - \alpha \log \pi_q(x) - \log Z_{CD})] \\
&= \! \mathbb{E}_{x \sim \pi_e}\! [\log \pi_q(x)\! -\! \log \pi_p(x)] \\
&\quad + \frac{\partial \log Z_{CD}}{\partial \alpha}.
\end{align*}

Considering the slope difference $\Delta_{\text{slope}}$, the normalization term $\frac{\partial \log Z_{CD}}{\partial \alpha}$ cancels out:
\begin{align}
\Delta_{\text{slope}} &= \text{Slope}_q - \text{Slope}_e \nonumber \\
&= \mathbb{E}_{x \sim \pi_q} [\log \pi_q(x) - \log \pi_p(x)] \nonumber \\
&\quad - \mathbb{E}_{x \sim \pi_e} [\log \pi_q(x) - \log \pi_p(x)] \nonumber \\
&= \sum_x \pi_q(x) \log \frac{\pi_q(x)}{\pi_p(x)} \nonumber \\
&\quad - \sum_x \pi_e(x) \log \frac{\pi_q(x)}{\pi_p(x)} \nonumber \\
&= \sum_x (\pi_q(x) - \pi_e(x)) \log \frac{\pi_q(x)}{\pi_p(x)} \nonumber \\
&= \sum_x (\pi_e(x)\! -\! \pi_q(x)) \log \frac{\pi_p(x)}{\pi_q(x)}.
\label{eq:final_slope_diff}
\end{align}

By \textbf{Definition \ref{def:effective_alignment}}, the expression in \eqref{eq:final_slope_diff} is strictly positive. Thus $\Delta_{\text{slope}} > 0$.

\end{proof}
\textbf{Conclusion and Discussion.}
As $\alpha$ increases, the amateur model $\mathcal{M}_q$ used in SCD diverges from the contrastive target faster than the lightweight drafter under Effective Draft Alignment. This explains the acceptance-side robustness of expert-aligned drafting in DCD. The ablations in Section~\ref{sec:ablation_studies} show the empirical counterpart: the relative mean accepted length of DCD declines more slowly than that of SCD as $\alpha$ increases.

\paragraph{Synthesis.}
This analysis clarifies the empirical pattern: although the lightweight drafter may begin farther from the target because of its capacity limits (Eq.~\ref{eq:static_alignment}), it has a more favorable dynamic slope under Theorem~\ref{thm:robustness}. This split between static gap and dynamic robustness explains the acceptance-side degradation trends observed in the experiments.

\subsection{Efficiency Advantage Condition}
\label{sec:proof_efficiency}

Let $\tau_m$ and $T_m$ denote the mean output tokens and serial time per iteration for method $m$. Under the throughput surrogate $S_m=\tau_m/T_m$, DCD is faster than an amateur-coupled baseline $b$ if and only if
\begin{equation}
\frac{\tau_{\text{DCD}}}{\tau_b} > \frac{T_{\text{DCD}}}{T_b}.
\label{eq:efficiency_condition_appendix}
\end{equation}
Thus, a lower accepted-token yield can be offset by a sufficiently large reduction in serial cost. Appendix~\ref{sec:theory_speed_calc} expands $T_m$ for DCD, SCD, and CoS and reports the measured decomposition used in Section~\ref{sec:trade_off}.

\subsection{Proof of Lossless Property}
\label{sec:proof_lossless}

DCD inherits the lossless guarantee of speculative decoding by construction. It changes only the \textit{choice} of proposal distribution while leaving the verification procedure unchanged.

\begin{theorem}[Lossless Property of DCD]
\label{thm:lossless}
DCD produces outputs distributed identically to autoregressive decoding from $\pi_{CD}$.
\end{theorem}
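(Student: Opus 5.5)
The plan is to reduce the theorem to the standard single-step correctness of speculative sampling (Leviathan et al., Chen et al.), applied position by position with target $\pi_{CD}^{(i)}$ and proposal $\pi_e^{(i)}$. Then we lift it to whole sequences by induction over positions and rounds.

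The first step is a per-position lemma. Fix a history $h$ and write $\pi=\pi_{CD}(\cdot\mid h)$ and $\rho=\pi_e(\cdot\mid h)$. Suppose $\tilde y\sim\rho$, and $\tilde y$ is accepted with probability $\min(1,\pi(\tilde y)/\rho(\tilde y))$. On rejection, we sample from $P_r\propto\max(0,\pi-\rho)$. We show the emitted token has law $\pi$.

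The computation runs as follows. The acceptance mass at $x$ is $\rho(x)\min(1,\pi(x)/\rho(x))=\min(\rho(x),\pi(x))$. The total rejection probability is $1-\sum_x\min(\rho,\pi)=\sum_x\max(0,\pi-\rho)$, which is exactly the normalizer of $P_r$. Summing the two contributions gives $\min(\rho,\pi)+\max(0,\pi-\rho)=\pi(x)$.

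Two degenerate cases need attention. When $\rho(x)=0$ the token is never proposed, so the ratio never has to be evaluated. When $\rho=\pi$ the normalizer is zero, but then rejection has probability zero and $P_r$ is never used. The masked variant with $\mathcal{P}_\beta$ is covered because only normalization of $\pi$ is used.

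The second step handles one round. Here we must check that the pieces Algorithm~\ref{alg:dcd_round} uses have the right conditional structure. The draft $\tilde y_i$ is drawn from $\pi_e^{(i)}$, which depends only on the prefix $\mathbf y\oplus\tilde y_{1:i-1}$ and $E$'s state, and not on $\mathcal M_q$ or on future draws. The value $\pi_{CD}^{(i)}$ computed from the parallel passes equals $\pi_{CD}(\cdot\mid \mathbf y\oplus\tilde y_{1:i-1})$, since $\mathcal M_p,\mathcal M_q$ are causal. The $r_i$ are independent uniforms.

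Conditioned on $\tilde y_{1:i-1}$ having been accepted, position $i$ is therefore exactly the setting of the per-position lemma, with history equal to the accepted prefix. The bonus token at position $\gamma+1$ is sampled directly from $\pi_{CD}$. By the chain rule, each token emitted in the round is distributed as $\pi_{CD}$ given everything emitted before it. This is precisely the autoregressive conditional.

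The third step is induction over rounds. The state passed to the next round is the emitted prefix plus the drafter's internal hidden state. The drafter state may depend on past randomness in arbitrary ways, but it only influences the proposal $\rho$. The lemma holds for every $\rho$, so conditioning on that state leaves the conditional law of each new token equal to $\pi_{CD}(\cdot\mid\text{prefix})$. Stopping at EOS or the length cap is a function of the prefix, so the joint law of the output matches autoregressive sampling from $\pi_{CD}$. The greedy case $T=0$ follows as a limit, or directly because acceptance then reduces to an argmax match.

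The main obstacle is stating step two rigorously: the acceptance event at position $i$ must be shown to depend only on $(\tilde y_{1:i}, r_{1:i})$, and the rejection resample must use the correct prefix-conditioned residual. Once this conditional-independence bookkeeping is written carefully, as in Leviathan et al.'s Appendix, the rest is routine.
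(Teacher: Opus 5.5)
Your argument is correct. At the top level it is the same reduction the paper uses: DCD is speculative sampling with target $\pi_{CD}$, so losslessness is inherited from Leviathan et al.\ and Chen et al.

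The difference is where the effort goes. The paper cites the standard result for an arbitrary valid target and spends its proof on the one hypothesis it treats as nontrivial: that $\pi_{CD}$ is a well-defined distribution, i.e.\ $0<Z_{CD}<\infty$. This holds because softmax outputs are strictly positive on a finite vocabulary. It matters because $\pi_q$ enters with exponent $-\alpha<0$, so any zero in $\pi_q$ would leave the target undefined. Your remark that ``only normalization of $\pi$ is used'' silently presupposes this, and you should add the one-line check.

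Conversely, you prove the cited result rather than invoke it, and you verify the hypotheses the citation glosses over for Algorithm~\ref{alg:dcd_round} specifically:
\begin{itemize}
\item the parallel passes yield the prefix-conditioned $\pi_{CD}^{(i)}$ by causality;
\item the acceptance decision at position $i$ depends only on $\tilde y_{1:i}$ and $r_i$, not on the later drafts, even though all $\gamma$ drafts are produced up front;
\item the residual draw uses fresh randomness;
\item the drafter hidden state carried across rounds is harmless, because the per-position lemma holds for every $\rho$.
\end{itemize}
That makes your version self-contained and slightly more general (any history-dependent proposer, any prefix-measurable stopping rule), at the cost of length.

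One small point: for $T=0$, drop the limiting argument, which is delicate under ties. Use the direct one instead, since your lemma applies verbatim to a point-mass target with a fixed tie-breaking rule.
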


\begin{proof}
Speculative decoding is lossless for any target distribution $\pi_{\mathrm{target}}$ that is a valid probability distribution~\citep{leviathan2023fast,chen2023accelerating}. DCD instantiates $\pi_{\mathrm{target}} = \pi_{CD}$, where:
\[
\pi_{CD}(x) = \frac{\pi_p(x)^{1+\alpha} \pi_q(x)^{-\alpha}}{\sum_{x'} \pi_p(x')^{1+\alpha} \pi_q(x')^{-\alpha}}.
\]
Since softmax outputs satisfy $\pi_p(x), \pi_q(x) > 0$ for all $x \in \mathcal{V}$, the numerator is strictly positive and the normalization is well defined. Thus $\pi_{CD}$ is a valid distribution, and the lossless property follows immediately.
\end{proof}

\begin{remark}
The choice of proposer affects only the acceptance rate (efficiency), never output correctness, as long as verification uses the CD target distribution. This separation is fundamental to speculative decoding and carries over to DCD unchanged.
\end{remark}

\begin{remark}[Plausibility-constrained CD]
The same argument applies when vanilla CD uses the expert-plausibility constraint of \citet{li2023contrastive}. In that case, the target distribution is the masked-and-renormalized contrastive distribution over $\mathcal{P}_{\beta}(h)$, and speculative verification still samples from a valid normalized target distribution. DCD therefore remains a lossless acceleration of the corresponding truncated contrastive target; only the acceptance rate changes when the proposer emits tokens outside the mask.
\end{remark}

\section{Alignment Gap Estimation Methodology}
\label{sec:kl_details}

Table~\ref{tab:alignment_gap_summary} summarizes the empirical evidence for Definition~\ref{def:effective_alignment}. We measure the estimates in four greedy settings: \textsc{Qwen3}/GSM8K, \textsc{Qwen3}/MMLU, \textsc{Llama-3}/GSM8K, and \textsc{Llama-3}/MMLU. Following the shared evaluation protocol in Section~\ref{sec:experiments}, we use 200 evaluation samples per dataset, greedy decoding ($T=0$), and a base draft length of $\gamma=5$. This appendix describes how we estimate $\mathbb{E}_{\pi_e}[\log(\pi_p/\pi_q)]$ and $\mathbb{E}_{\pi_q}[\log(\pi_p/\pi_q)]$ from draft trees collected during DCD inference.

\begin{table}[t]
\centering
\small
\renewcommand{\arraystretch}{1.08}
\setlength{\tabcolsep}{4pt}
\begin{tabular*}{\columnwidth}{@{\extracolsep{\fill}}lccc}
\toprule
\textbf{Setting} & \textbf{LHS} & \textbf{RHS} & \textbf{Gap} \\
\midrule
\textsc{Qwen3} / GSM8K & 0.392 & $-$0.348 & \textbf{0.740} \\
\textsc{Qwen3} / MMLU & 0.181 & $-$0.571 & \textbf{0.753} \\
\textsc{Llama-3} / GSM8K & $-$0.305 & $-$0.332 & \textbf{0.028} \\
\textsc{Llama-3} / MMLU & $-$0.274 & $-$0.485 & \textbf{0.211} \\
\bottomrule
\end{tabular*}
\caption{Empirical alignment-gap summary in four greedy settings. Here LHS $= \mathbb{E}_{\pi_e}[\log(\pi_p/\pi_q)]$, RHS $= \mathbb{E}_{\pi_q}[\log(\pi_p/\pi_q)]$, and gap $= \mathrm{LHS} - \mathrm{RHS}$; positive gaps support Effective Draft Alignment (Definition~\ref{def:effective_alignment}).}
\label{tab:alignment_gap_summary}
\end{table}

\paragraph{Sampling.}
For each evaluation sample in the reported settings, we collect the EAGLE3 draft tree generated by greedy DCD$_{\textsc{eagle3}}$ inference with $\alpha=0.1$. At each node, we record full-vocabulary logits from the expert model $\mathcal{M}_p$, the amateur model $\mathcal{M}_q$, and the drafter $E$.

\paragraph{Estimator.}
Let $r(x) = \log \pi_p(x) - \log \pi_q(x)$. We then compute:
\begin{align*}
\mathbb{E}_{\pi_e}\left[\log\frac{\pi_p}{\pi_q}\right] &= \sum_{x \in \mathcal{V}} \pi_e(x) \cdot r(x), \\
\mathbb{E}_{\pi_q}\left[\log\frac{\pi_p}{\pi_q}\right] &= \sum_{x \in \mathcal{V}} \pi_q(x) \cdot r(x),
\end{align*}
These expectations are evaluated over the full vocabulary at each position and then averaged across all positions and samples.

\section{Throughput Analysis}
\label{sec:theory_speed_calc}

This section makes explicit the coarse throughput surrogate used in Section~\ref{sec:trade_off}. The formulas below use measured latency components to form analytic approximations to runtime throughput.

\subsection{DCD}
\label{sec:dcd_speed_calc}
DCD decouples proposal generation from the amateur model through an amateur-independent proposer $E$.

At each iteration, the proposer generates $\gamma$ draft tokens, and the expert and amateur then run forward passes on the same draft sequence for verification.

We approximate the serial time per iteration by
\begin{equation*}
T_{\text{DCD}} = \gamma \cdot t_e + t_p + t_q,
\end{equation*}
where $t_e$, $t_p$, and $t_q$ denote the latency components reported in Section~\ref{sec:trade_off} for the proposer, expert path, and amateur path, respectively.

Under the same coarse surrogate $S \approx (L+1)/T_{\text{iter}}$, the throughput approximation is
\begin{equation*}
\text{Speed}_{\text{DCD}} = \frac{L + 1}{\gamma \cdot t_e + t_p + t_q},
\end{equation*}
where $L$ denotes the average number of accepted tokens per iteration, excluding the additional token.

When the proposer is lightweight or retrieval-based, $t_e \ll t_q$, so the proposal overhead of DCD is much lower than that of SCD.

\subsection{SCD with Delayed Drafting}
\label{sec:scd_opt_speed_calc}

SCD~\citep{yuan2024speculative} with delayed drafting uses conditional drafting to reduce unnecessary draft generation.

In each iteration, the amateur model first generates $\gamma$ draft tokens. An additional draft token is generated only when all $\gamma$ draft tokens are accepted by the contrastive distribution, which occurs with probability $\rho^\gamma$.

The expected number of draft operations per iteration is therefore
\begin{equation*}
N_{\text{draft}} = \gamma + \rho^\gamma,
\end{equation*}
where $\rho$ denotes the single-step acceptance rate. The corresponding throughput approximation is
\begin{equation*}
\text{Speed}_{\text{SCD}} = \frac{L + 1}{(\gamma + \rho^\gamma) \cdot t_q + t_p},
\end{equation*}
where $L$ denotes the average number of accepted tokens per iteration, excluding the additional token.

In practice, we infer an effective acceptance rate $\rho$ from the observed $L$ value using the local relation $L \approx \sum_{i=1}^{\gamma} \rho^i$.

\subsection{CoS}
\label{sec:cos_speed_calc}
CoS~\citep{fu2025speculative} uses an alternating proposal mechanism to reduce draft steps.

If, in the previous iteration, all amateur-model draft tokens and the expert proposal token are accepted, the current iteration can skip one amateur-model draft step.

The probability of saving a draft step is
\begin{equation*}
P_{\text{save}} = \rho_p^{\gamma_p} \cdot \rho_q^{\gamma_q},
\end{equation*}
where $\rho_p$ denotes the per-token acceptance rate for the expert proposal ($\gamma_p$ is typically set to 1), and $\rho_q$ denotes the per-token acceptance rate for the amateur model, so that $\rho_q^{\gamma_q}$ is the probability that all $\gamma_q$ draft tokens are accepted.

Conversely, when all tokens in the current iteration are accepted, an additional amateur-model draft step is required with probability $\rho_q^{\gamma_q}$ to verify the expert proposal.

The expected number of draft steps per iteration is therefore
\begin{equation*}
N_{\text{draft}} = \gamma - \rho_p^{\gamma_p} \cdot \rho_q^{\gamma_q} + \rho_q^{\gamma_q}.
\end{equation*}
The corresponding throughput approximation is
\begin{equation*}
\text{Speed}_{\text{CoS}} = \frac{L + 1}{(\gamma - \rho_p^{\gamma_p} \cdot \rho_q^{\gamma_q} + \rho_q^{\gamma_q}) \cdot t_q + t_p },
\end{equation*}
where $L$ denotes the average number of accepted tokens per iteration, excluding the additional token.

In practice, we infer an effective amateur acceptance rate $\rho_q$ from the observed $L$ value using the same local relation. Because $\rho_p$ is not measured directly, we set it to 0.9 as a calibration constant.

\subsection{Multiplicative Throughput Decomposition}
\label{sec:throughput_decomposition_appendix}

The speed formulas above yield the same coarse multiplicative decomposition for the advantage of DCD over any amateur-coupled baseline $b$ evaluated at the same operating point:
\[
\frac{\mathrm{Speed}_{\mathrm{DCD}}}{\mathrm{Speed}_b}
\approx
\frac{L_{\mathrm{DCD}}+1}{L_b+1}
\cdot
\frac{T_b}{T_{\mathrm{DCD}}}.
\]
The first term captures acceptance and the second captures serial cost. This decomposition is useful because it separates token acceptance from generation latency. In our measured regime, the acceptance term is typically below 1, whereas the serial-cost term is well above 1 and dominates the product.

\begin{table}[t]
\centering
\small
\renewcommand{\arraystretch}{1.12}
\setlength{\tabcolsep}{4pt}
\begin{tabular*}{\columnwidth}{@{\extracolsep{\fill}}llcccc}
\toprule
\textbf{Model} & \textbf{Base} & \textbf{Acc.} & \textbf{Cost} & \textbf{Pred.} & \textbf{Meas.} \\
\midrule
\textsc{Llama-3} & SCD & 0.71 & 1.62 & 1.14x & 1.12x \\
 & CoS & 0.71 & 1.62 & 1.15x & 1.12x \\
\midrule
\textsc{Qwen3} & SCD & 0.71 & 2.06 & 1.46x & 1.41x \\
 & CoS & 0.71 & 2.01 & 1.43x & 1.38x \\
\midrule
\textsc{Llama-EFT} & SCD & 0.68 & 2.30 & 1.55x & 1.49x \\
 & CoS & 0.68 & 2.50 & 1.69x & 1.61x \\
\bottomrule
\end{tabular*}
\caption{Multiplicative decomposition of the speed advantage of DCD$_{\textsc{eagle3}}$ on MMLU ($T{=}0$, $\alpha{=}0.1$, $\gamma{=}5$). For an amateur-coupled baseline $b$, the predicted ratio is approximated by an acceptance factor $(L_{\mathrm{DCD}_{\textsc{eagle3}}}+1)/(L_b+1)$ times a serial-cost factor $T_b/T_{\mathrm{DCD}_{\textsc{eagle3}}}$. Acc. below 1 means DCD$_{\textsc{eagle3}}$ accepts fewer tokens per speculative round; Cost above 1 means DCD$_{\textsc{eagle3}}$ has a cheaper serial proposal path. Across all model families, the cost factor is the dominant term.}
\label{tab:throughput_decomposition}
\vspace{-4pt}
\end{table}

\section{Additional Results and Ablations}
\label{sec:appendix_additional_results}
\subsection{Main-Result Breakdowns}
\label{sec:appendix_exp5}

This subsection reports the full throughput tables underlying \Cref{tab:exp5_main_results}.

\Cref{tab:exp5_appendix_results} reports the greedy results for $\alpha \in \{0.1, 0.5\}$, including raw tokens/sec and reported standard deviations.

\Cref{tab:t1_exp5_appendix_results} reports the corresponding sampling results at $T=1$.

\begin{table*}[t]
\centering
\small
\renewcommand{\arraystretch}{1.15}
\setlength{\tabcolsep}{3pt}
\begin{tabular}{lccccc}
\toprule
\textbf{Method} & \textbf{HumanEval} & \textbf{GSM8K} & \textbf{MMLU} & \textbf{CNN/DM} & \textbf{Avg.} \\
\midrule
\multicolumn{6}{c}{\textit{Llama-3 (Llama-3.1-8B-Instruct / Llama-3.2-1B-Instruct)}} \\
\midrule
\multicolumn{6}{c}{$\alpha = 0.1$} \\
\cmidrule(lr){1-6}
CD & 91.3$\pm$4.3 (1.00$\times$) & 86.6$\pm$5.6 (1.00$\times$) & 93.1$\pm$2.7 (1.00$\times$) & 84.0$\pm$0.6 (1.00$\times$) & 88.7 (1.00$\times$) \\
SCD & 139.2$\pm$4.7 (1.52$\times$) & 132.1$\pm$2.4 (1.53$\times$) & 117.4$\pm$1.3 (1.26$\times$) & 117.4$\pm$2.5 (1.40$\times$) & 126.5 (1.43$\times$) \\
CoS & 156.5$\pm$16.9 (1.71$\times$) & 153.7$\pm$2.6 (1.77$\times$) & 119.9$\pm$4.7 (1.29$\times$) & 115.0$\pm$6.4 (1.37$\times$) & 136.3 (1.54$\times$) \\
DCD$_{\textsc{ngram}}$ & 135.0$\pm$1.3 (1.48$\times$) & 124.0$\pm$1.4 (1.43$\times$) & 120.5$\pm$0.1 (1.29$\times$) & 173.6$\pm$3.0 (2.07$\times$) & 138.3 (1.56$\times$) \\
\rowcolor{highlightcolor} \textbf{DCD$_{\textsc{eagle3}}$} & \textbf{189.2$\pm$2.5 (2.07$\times$)} & \textbf{159.7$\pm$1.6 (1.84$\times$)} & \textbf{149.0$\pm$7.2 (1.60$\times$)} & \textbf{151.4$\pm$2.1 (1.80$\times$)} & \textbf{162.3 (1.83$\times$)} \\
\midrule
\multicolumn{6}{c}{$\alpha = 0.5$} \\
\cmidrule(lr){1-6}
CD & 95.7$\pm$0.6 (1.00$\times$) & 91.7$\pm$3.7 (1.00$\times$) & 83.0$\pm$0.7 (1.00$\times$) & 90.0$\pm$3.5 (1.00$\times$) & 90.1 (1.00$\times$) \\
SCD & 152.2$\pm$0.7 (1.59$\times$) & 130.8$\pm$1.0 (1.43$\times$) & 100.1$\pm$0.6 (1.21$\times$) & 100.5$\pm$2.3 (1.12$\times$) & 120.9 (1.34$\times$) \\
CoS & 159.6$\pm$0.2 (1.67$\times$) & 142.8$\pm$0.9 (1.56$\times$) & 99.7$\pm$0.6 (1.20$\times$) & 101.6$\pm$3.4 (1.13$\times$) & 126.0 (1.40$\times$) \\
DCD$_{\textsc{ngram}}$ & 133.3$\pm$0.7 (1.39$\times$) & 122.7$\pm$0.2 (1.34$\times$) & 128.3$\pm$2.3 (1.55$\times$) & 188.8$\pm$1.4 (2.10$\times$) & 143.3 (1.59$\times$) \\
\rowcolor{highlightcolor} \textbf{DCD$_{\textsc{eagle3}}$} & \textbf{181.4$\pm$1.1 (1.89$\times$)} & \textbf{152.1$\pm$2.6 (1.66$\times$)} & \textbf{119.5$\pm$0.2 (1.44$\times$)} & \textbf{143.2$\pm$3.5 (1.59$\times$)} & \textbf{149.0 (1.65$\times$)} \\
\midrule
\multicolumn{6}{c}{\textit{Qwen3 (Qwen3-8B / Qwen3-1.7B)}} \\
\midrule
\multicolumn{6}{c}{$\alpha = 0.1$} \\
\cmidrule(lr){1-6}
CD & 59.9$\pm$5.7 (1.00$\times$) & 65.7$\pm$1.8 (1.00$\times$) & 65.7$\pm$0.4 (1.00$\times$) & 61.0$\pm$4.1 (1.00$\times$) & 63.1 (1.00$\times$) \\
SCD & 82.8$\pm$3.9 (1.38$\times$) & 88.5$\pm$0.9 (1.35$\times$) & 64.8$\pm$0.2 (0.99$\times$) & 64.0$\pm$3.6 (1.05$\times$) & 75.0 (1.19$\times$) \\
CoS & 87.1$\pm$2.1 (1.45$\times$) & 87.4$\pm$6.5 (1.33$\times$) & 76.6$\pm$0.4 (1.17$\times$) & 69.3$\pm$2.4 (1.14$\times$) & 80.1 (1.27$\times$) \\
DCD$_{\textsc{ngram}}$ & 87.1$\pm$0.8 (1.45$\times$) & 88.9$\pm$1.2 (1.35$\times$) & 77.1$\pm$1.4 (1.17$\times$) & 69.6$\pm$2.4 (1.14$\times$) & 80.7 (1.28$\times$) \\
\rowcolor{highlightcolor} \textbf{DCD$_{\textsc{eagle3}}$} & \textbf{137.8$\pm$0.8 (2.30$\times$)} & \textbf{131.9$\pm$4.1 (2.01$\times$)} & \textbf{108.8$\pm$4.8 (1.66$\times$)} & \textbf{113.0$\pm$1.2 (1.85$\times$)} & \textbf{122.9 (1.95$\times$)} \\
\midrule
\multicolumn{6}{c}{$\alpha = 0.5$} \\
\cmidrule(lr){1-6}
CD & 64.1$\pm$4.1 (1.00$\times$) & 66.9$\pm$0.2 (1.00$\times$) & 64.4$\pm$2.0 (1.00$\times$) & 63.2$\pm$3.3 (1.00$\times$) & 64.6 (1.00$\times$) \\
SCD & 78.8$\pm$0.7 (1.23$\times$) & 83.1$\pm$1.4 (1.24$\times$) & 56.8$\pm$6.0 (0.88$\times$) & 51.9$\pm$0.9 (0.82$\times$) & 67.6 (1.05$\times$) \\
CoS & 80.1$\pm$2.6 (1.25$\times$) & 87.7$\pm$0.9 (1.31$\times$) & 62.1$\pm$2.2 (0.97$\times$) & 49.7$\pm$2.9 (0.79$\times$) & 69.9 (1.08$\times$) \\
DCD$_{\textsc{ngram}}$ & 84.5$\pm$0.2 (1.32$\times$) & 85.3$\pm$0.5 (1.28$\times$) & 75.2$\pm$1.8 (1.17$\times$) & 63.4$\pm$1.2 (1.00$\times$) & 77.1 (1.19$\times$) \\
\rowcolor{highlightcolor} \textbf{DCD$_{\textsc{eagle3}}$} & \textbf{127.1$\pm$0.7 (1.98$\times$)} & \textbf{123.4$\pm$4.8 (1.85$\times$)} & \textbf{103.0$\pm$4.5 (1.60$\times$)} & \textbf{88.1$\pm$3.3 (1.39$\times$)} & \textbf{110.4 (1.71$\times$)} \\
\midrule
\multicolumn{6}{c}{\textit{Llama-EFT (Llama-3.1-8B-Instruct / Llama-3.1-8B)}} \\
\midrule
\multicolumn{6}{c}{$\alpha = 0.1$} \\
\cmidrule(lr){1-6}
CD & 75.9$\pm$1.5 (1.00$\times$) & 75.0$\pm$0.8 (1.00$\times$) & 73.1$\pm$3.6 (1.00$\times$) & 69.1$\pm$0.5 (1.00$\times$) & 73.3 (1.00$\times$) \\
SCD & 91.4$\pm$2.1 (1.20$\times$) & 87.1$\pm$1.0 (1.16$\times$) & 77.5$\pm$3.1 (1.06$\times$) & 71.4$\pm$1.1 (1.03$\times$) & 81.9 (1.12$\times$) \\
CoS & 98.0$\pm$3.5 (1.29$\times$) & 91.5$\pm$1.3 (1.22$\times$) & 73.9$\pm$1.3 (1.01$\times$) & 73.2$\pm$5.5 (1.06$\times$) & 84.1 (1.15$\times$) \\
DCD$_{\textsc{ngram}}$ & 108.3$\pm$0.5 (1.43$\times$) & 97.9$\pm$1.3 (1.31$\times$) & 96.2$\pm$0.4 (1.32$\times$) & 142.1$\pm$1.8 (2.06$\times$) & 111.1 (1.52$\times$) \\
\rowcolor{highlightcolor} \textbf{DCD$_{\textsc{eagle3}}$} & \textbf{154.5$\pm$5.2 (2.03$\times$)} & \textbf{130.1$\pm$0.4 (1.74$\times$)} & \textbf{113.3$\pm$1.7 (1.55$\times$)} & \textbf{120.6$\pm$8.0 (1.74$\times$)} & \textbf{129.6 (1.77$\times$)} \\
\midrule
\multicolumn{6}{c}{$\alpha = 0.5$} \\
\cmidrule(lr){1-6}
CD & 70.7$\pm$2.9 (1.00$\times$) & 72.3$\pm$2.8 (1.00$\times$) & 73.4$\pm$3.4 (1.00$\times$) & 71.6$\pm$3.6 (1.00$\times$) & 72.0 (1.00$\times$) \\
SCD & 84.4$\pm$3.7 (1.19$\times$) & 82.5$\pm$0.7 (1.14$\times$) & 70.1$\pm$1.9 (0.96$\times$) & 65.8$\pm$1.0 (0.92$\times$) & 75.7 (1.05$\times$) \\
CoS & 92.8$\pm$0.6 (1.31$\times$) & 83.2$\pm$1.1 (1.15$\times$) & 71.6$\pm$1.8 (0.98$\times$) & 69.3$\pm$1.6 (0.97$\times$) & 79.2 (1.10$\times$) \\
DCD$_{\textsc{ngram}}$ & 109.3$\pm$0.6 (1.55$\times$) & 97.3$\pm$0.9 (1.35$\times$) & 96.0$\pm$1.4 (1.31$\times$) & 151.1$\pm$0.9 (2.11$\times$) & 113.4 (1.58$\times$) \\
\rowcolor{highlightcolor} \textbf{DCD$_{\textsc{eagle3}}$} & \textbf{145.5$\pm$5.8 (2.06$\times$)} & \textbf{116.8$\pm$6.0 (1.62$\times$)} & \textbf{117.9$\pm$5.1 (1.61$\times$)} & \textbf{115.7$\pm$6.4 (1.61$\times$)} & \textbf{124.0 (1.72$\times$)} \\
\bottomrule
\end{tabular}
\caption{Per-dataset greedy breakdown for $\alpha \in \{0.1, 0.5\}$. Each cell reports throughput in tokens/sec with reported std and speedup over vanilla CD at the same $\alpha$. The DCD$_{\textsc{ngram}}$ rows use the matched greedy N-gram proposer setting.}
\label{tab:exp5_appendix_results}
\vspace{-4pt}
\end{table*}

\begin{table*}[t]
\centering
\small
\renewcommand{\arraystretch}{1.15}
\setlength{\tabcolsep}{3pt}
\begin{tabular}{lccccc}
\toprule
\textbf{Method} & \textbf{HumanEval} & \textbf{GSM8K} & \textbf{MMLU} & \textbf{CNN/DM} & \textbf{Avg.} \\
\midrule
\multicolumn{6}{c}{\textit{Llama-3 (Llama-3.1-8B-Instruct / Llama-3.2-1B-Instruct)}} \\
\midrule
\multicolumn{6}{c}{$\alpha = 0.1$} \\
\cmidrule(lr){1-6}
CD & 92.4$\pm$3.3 (1.00$\times$) & 90.5$\pm$6.6 (1.00$\times$) & 90.3$\pm$3.3 (1.00$\times$) & 88.2$\pm$2.6 (1.00$\times$) & 90.3 (1.00$\times$) \\
SCD & 120.7$\pm$1.6 (1.31$\times$) & 125.3$\pm$0.9 (1.39$\times$) & 104.3$\pm$4.9 (1.16$\times$) & 107.8$\pm$3.2 (1.22$\times$) & 114.6 (1.27$\times$) \\
CoS & 156.3$\pm$5.9 (1.69$\times$) & 139.3$\pm$2.4 (1.54$\times$) & 109.2$\pm$2.3 (1.21$\times$) & 108.7$\pm$5.2 (1.23$\times$) & 128.4 (1.42$\times$) \\
DCD$_{\textsc{ngram}}$ & 128.9$\pm$2.0 (1.39$\times$) & 114.4$\pm$1.5 (1.26$\times$) & 108.4$\pm$0.9 (1.20$\times$) & 142.1$\pm$0.9 (1.61$\times$) & 123.4 (1.37$\times$) \\
\rowcolor{highlightcolor} \textbf{DCD$_{\textsc{eagle3}}$} & \textbf{161.3$\pm$8.9 (1.75$\times$)} & \textbf{139.5$\pm$0.6 (1.54$\times$)} & \textbf{119.6$\pm$5.2 (1.32$\times$)} & \textbf{140.9$\pm$1.7 (1.60$\times$)} & \textbf{140.3 (1.55$\times$)} \\
\midrule
\multicolumn{6}{c}{$\alpha = 0.5$} \\
\cmidrule(lr){1-6}
CD & 93.3$\pm$3.3 (1.00$\times$) & 90.8$\pm$5.0 (1.00$\times$) & 86.0$\pm$5.0 (1.00$\times$) & 93.1$\pm$2.0 (1.00$\times$) & 90.8 (1.00$\times$) \\
SCD & 126.6$\pm$2.0 (1.36$\times$) & 116.8$\pm$2.6 (1.29$\times$) & 91.8$\pm$0.8 (1.07$\times$) & 93.5$\pm$5.1 (1.00$\times$) & 107.2 (1.18$\times$) \\
CoS & 134.7$\pm$9.3 (1.44$\times$) & 112.4$\pm$0.4 (1.24$\times$) & 85.0$\pm$3.1 (0.99$\times$) & 96.8$\pm$5.4 (1.04$\times$) & 107.2 (1.18$\times$) \\
DCD$_{\textsc{ngram}}$ & 126.7$\pm$1.3 (1.36$\times$) & 114.5$\pm$1.3 (1.26$\times$) & 104.9$\pm$1.8 (1.22$\times$) & 147.4$\pm$1.3 (1.58$\times$) & 123.4 (1.36$\times$) \\
\rowcolor{highlightcolor} \textbf{DCD$_{\textsc{eagle3}}$} & \textbf{183.1$\pm$2.4 (1.96$\times$)} & \textbf{148.1$\pm$4.4 (1.63$\times$)} & \textbf{132.6$\pm$1.9 (1.54$\times$)} & \textbf{165.2$\pm$0.1 (1.77$\times$)} & \textbf{157.2 (1.73$\times$)} \\
\midrule
\multicolumn{6}{c}{\textit{Qwen3 (Qwen3-8B / Qwen3-1.7B)}} \\
\midrule
\multicolumn{6}{c}{$\alpha = 0.1$} \\
\cmidrule(lr){1-6}
CD & 65.1$\pm$3.4 (1.00$\times$) & 63.9$\pm$0.2 (1.00$\times$) & 64.2$\pm$2.2 (1.00$\times$) & 61.5$\pm$2.1 (1.00$\times$) & 63.7 (1.00$\times$) \\
SCD & 76.2$\pm$7.6 (1.17$\times$) & 85.3$\pm$0.7 (1.34$\times$) & 67.2$\pm$3.0 (1.05$\times$) & 63.4$\pm$2.8 (1.03$\times$) & 73.0 (1.15$\times$) \\
CoS & 84.8$\pm$1.7 (1.30$\times$) & 89.8$\pm$1.5 (1.41$\times$) & 73.2$\pm$0.9 (1.14$\times$) & 66.8$\pm$1.5 (1.09$\times$) & 78.7 (1.24$\times$) \\
DCD$_{\textsc{ngram}}$ & 85.0$\pm$0.9 (1.31$\times$) & 87.6$\pm$1.2 (1.37$\times$) & 77.1$\pm$0.6 (1.20$\times$) & 69.3$\pm$0.6 (1.13$\times$) & 79.7 (1.25$\times$) \\
\rowcolor{highlightcolor} \textbf{DCD$_{\textsc{eagle3}}$} & \textbf{133.6$\pm$1.4 (2.05$\times$)} & \textbf{125.2$\pm$5.6 (1.96$\times$)} & \textbf{106.9$\pm$5.4 (1.67$\times$)} & \textbf{103.8$\pm$2.4 (1.69$\times$)} & \textbf{117.4 (1.84$\times$)} \\
\midrule
\multicolumn{6}{c}{$\alpha = 0.5$} \\
\cmidrule(lr){1-6}
CD & 64.6$\pm$2.0 (1.00$\times$) & 65.9$\pm$0.3 (1.00$\times$) & 61.5$\pm$2.5 (1.00$\times$) & 60.4$\pm$0.3 (1.00$\times$) & 63.1 (1.00$\times$) \\
SCD & 69.0$\pm$0.8 (1.07$\times$) & 79.7$\pm$1.5 (1.21$\times$) & 57.7$\pm$3.7 (0.94$\times$) & 49.3$\pm$1.5 (0.82$\times$) & 63.9 (1.01$\times$) \\
CoS & 74.3$\pm$2.9 (1.15$\times$) & 84.4$\pm$1.2 (1.28$\times$) & 61.0$\pm$3.3 (0.99$\times$) & 48.1$\pm$2.2 (0.80$\times$) & 67.0 (1.06$\times$) \\
DCD$_{\textsc{ngram}}$ & 80.4$\pm$0.7 (1.25$\times$) & 85.5$\pm$0.8 (1.30$\times$) & 72.9$\pm$1.4 (1.19$\times$) & 64.8$\pm$0.2 (1.07$\times$) & 75.9 (1.20$\times$) \\
\rowcolor{highlightcolor} \textbf{DCD$_{\textsc{eagle3}}$} & \textbf{127.0$\pm$0.2 (1.97$\times$)} & \textbf{118.0$\pm$10.4 (1.79$\times$)} & \textbf{103.3$\pm$6.5 (1.68$\times$)} & \textbf{92.4$\pm$3.1 (1.53$\times$)} & \textbf{110.2 (1.75$\times$)} \\
\midrule
\multicolumn{6}{c}{\textit{Llama-EFT (Llama-3.1-8B-Instruct / Llama-3.1-8B)}} \\
\midrule
\multicolumn{6}{c}{$\alpha = 0.1$} \\
\cmidrule(lr){1-6}
CD & 70.4$\pm$2.3 (1.00$\times$) & 72.8$\pm$2.3 (1.00$\times$) & 67.7$\pm$1.0 (1.00$\times$) & 67.7$\pm$3.9 (1.00$\times$) & 69.7 (1.00$\times$) \\
SCD & 79.2$\pm$5.6 (1.12$\times$) & 77.4$\pm$2.4 (1.06$\times$) & 69.9$\pm$2.0 (1.03$\times$) & 61.8$\pm$0.9 (0.91$\times$) & 72.1 (1.03$\times$) \\
CoS & 84.8$\pm$1.9 (1.20$\times$) & 78.9$\pm$4.7 (1.08$\times$) & 71.4$\pm$5.1 (1.06$\times$) & 69.3$\pm$4.9 (1.02$\times$) & 76.1 (1.09$\times$) \\
DCD$_{\textsc{ngram}}$ & 105.3$\pm$1.0 (1.49$\times$) & 91.4$\pm$0.6 (1.25$\times$) & 86.1$\pm$1.2 (1.27$\times$) & 114.8$\pm$2.4 (1.70$\times$) & 99.4 (1.43$\times$) \\
\rowcolor{highlightcolor} \textbf{DCD$_{\textsc{eagle3}}$} & \textbf{137.5$\pm$7.2 (1.95$\times$)} & \textbf{115.9$\pm$0.5 (1.59$\times$)} & \textbf{99.9$\pm$5.8 (1.48$\times$)} & \textbf{112.2$\pm$3.1 (1.66$\times$)} & \textbf{116.4 (1.67$\times$)} \\
\midrule
\multicolumn{6}{c}{$\alpha = 0.5$} \\
\cmidrule(lr){1-6}
CD & 66.0$\pm$0.9 (1.00$\times$) & 75.0$\pm$0.6 (1.00$\times$) & 72.8$\pm$1.0 (1.00$\times$) & 68.4$\pm$4.4 (1.00$\times$) & 70.6 (1.00$\times$) \\
SCD & 76.4$\pm$0.8 (1.16$\times$) & 65.1$\pm$0.9 (0.87$\times$) & 57.3$\pm$0.6 (0.79$\times$) & 63.6$\pm$0.6 (0.93$\times$) & 65.6 (0.93$\times$) \\
CoS & 78.6$\pm$1.9 (1.19$\times$) & 73.6$\pm$1.9 (0.98$\times$) & 62.4$\pm$2.6 (0.86$\times$) & 63.7$\pm$2.3 (0.93$\times$) & 69.6 (0.99$\times$) \\
DCD$_{\textsc{ngram}}$ & 104.5$\pm$1.2 (1.58$\times$) & 92.8$\pm$1.1 (1.24$\times$) & 87.2$\pm$0.9 (1.20$\times$) & 120.9$\pm$2.2 (1.77$\times$) & 101.4 (1.44$\times$) \\
\rowcolor{highlightcolor} \textbf{DCD$_{\textsc{eagle3}}$} & \textbf{142.9$\pm$8.9 (2.17$\times$)} & \textbf{129.8$\pm$2.5 (1.73$\times$)} & \textbf{117.5$\pm$2.1 (1.61$\times$)} & \textbf{131.0$\pm$8.7 (1.91$\times$)} & \textbf{130.3 (1.85$\times$)} \\
\bottomrule
\end{tabular}
\caption{Per-dataset sampling breakdown for $T=1$ and $\alpha \in \{0.1, 0.5\}$. Each per-dataset cell reports throughput in tokens/sec as mean $\pm$ standard deviation across three independent runs, with speedup over vanilla CD at the same $\alpha$. The Avg. column reports the unweighted mean of the per-dataset throughputs; its parenthesized multiplier is the ratio of this mean to the corresponding vanilla-CD mean. The DCD$_{\textsc{ngram}}$ rows use the matched $T=1$ N-gram proposer setting.}
\label{tab:t1_exp5_appendix_results}
\vspace{-4pt}
\end{table*}

\subsection{Full Accuracy Sweep Details}
\label{sec:appendix_performance_details}
This subsection documents the full evaluation configuration underlying Section~\ref{sec:performance_accuracy}.

\paragraph{Setup.}
We evaluate the Llama-3.1-8B-Instruct and Llama-3.2-1B-Instruct pair on GSM8K and MMLU with 1000 examples per setting, draft length $\gamma{=}5$, and $\alpha \in \{0.0, 0.1, \ldots, 1.0\}$. We report DCD$_{\textsc{eagle3}}$, SCD, CoS, vanilla CD, and the expert-only autoregressive baseline under both temperatures, with the matched DCD$_{\textsc{ngram}}$ reference overlaid where available. Error bars denote 95\% confidence intervals.

\Cref{fig:performance_full} presents the complete accuracy sweep for both greedy decoding ($T{=}0$) and sampling ($T{=}1$). The matched DCD$_{\textsc{ngram}}$ reference curve is overlaid on both temperature settings, alongside the main EAGLE3-based DCD trajectory.

\input{output/accuracy_alpha_ablation_appendix_figure.tex}

\subsection{Additional Hyperparameter Ablations}
\label{sec:appendix_hparam_ablation}

Figure~\ref{fig:gamma_ablation_appendix} reports the complementary draft-length sweep referenced in the main text. Table~\ref{tab:gamma_tuning_summary} summarizes the same study in the fixed-$\gamma$ versus best-$\gamma$ format discussed there.
Table~\ref{tab:dcd_single_request_sweep_200} reports the full DCD$_{\textsc{eagle3}}$ throughput grid at $\alpha{=}0.1$. Each row corresponds to one evaluated combination of speculative steps, draft tokens, and top-$k$, and the last two columns report the resulting throughput on GSM8K and MMLU.

\begin{figure*}[t]
\centering
\definecolor{dcdblue}{RGB}{31,119,180}
\definecolor{scdorange}{RGB}{255,127,14}
\definecolor{cosgreen}{RGB}{44,160,44}

\pgfplotsset{
    every axis/.append style={
        label style={font=\scriptsize},
        tick label style={font=\scriptsize, inner sep=0.5pt},
        title style={font=\small, at={(0.5,1.03)}, anchor=south},
        title style={font=\scriptsize, align=center, at={(0.5,1.01)}, anchor=south},
        ylabel style={xshift=10pt, yshift=-0.08cm},
        legend style={font=\scriptsize},
        grid style={dashed, gray!40},
        scale only axis,
        trim axis left,
        trim axis right,
    }
}
\begin{subfigure}[b]{0.22\textwidth}
\centering
\begin{tikzpicture}
\begin{axis}[
    width=0.84\textwidth,
    height=2.0cm,
    xlabel={Draft Tokens ($\gamma$)},
    ylabel={},
    xmin=0.5, xmax=7.5,
    ymin=0.8, ymax=2.2,
    xtick={1,2,3,4,5,6,7},
    ytick={1.0,1.5,2.0},
    grid=major,
    title={\shortstack{\textsc{Llama-3}\\\textsc{MMLU}}}
]
\draw[black, thick, dashed] (axis cs:0.5,1) -- (axis cs:7.5,1);
\addplot[color=dcdblue, mark=*, mark size=1.5pt, thick] coordinates {
    (1, 1.19) (2, 1.37) (3, 1.41) (4, 1.50) (5, 1.57) (6, 1.54) (7, 1.50)
};
\addplot[color=scdorange, mark=square*, mark size=1.5pt, thick] coordinates {
    (1, 1.20) (2, 1.27) (3, 1.32) (4, 1.27) (5, 1.13) (6, 1.11) (7, 1.09)
};
\addplot[color=cosgreen, mark=triangle*, mark size=1.5pt, thick] coordinates {
    (1, 1.36) (2, 1.40) (3, 1.29) (4, 1.30) (5, 1.24) (6, 1.15) (7, 1.08)
};
\end{axis}
\end{tikzpicture}
\end{subfigure}
\hfill
\begin{subfigure}[b]{0.22\textwidth}
\centering
\begin{tikzpicture}
\begin{axis}[
    width=0.84\textwidth,
    height=2.0cm,
    xlabel={Draft Tokens ($\gamma$)},
    ylabel={},
    xmin=0.5, xmax=7.5,
    ymin=0.8, ymax=2.2,
    xtick={1,2,3,4,5,6,7},
    ytick={1.0,1.5,2.0},
    yticklabels={,,},
    grid=major,
    title={\shortstack{\textsc{Qwen3}\\\textsc{MMLU}}}
]
\draw[black, thick, dashed] (axis cs:0.5,1) -- (axis cs:7.5,1);
\addplot[color=dcdblue, mark=*, mark size=1.5pt, thick] coordinates {
    (1, 1.32) (2, 1.60) (3, 1.66) (4, 1.59) (5, 1.64) (6, 1.55) (7, 1.53)
};
\addplot[color=scdorange, mark=square*, mark size=1.5pt, thick] coordinates {
    (1, 1.14) (2, 1.18) (3, 1.14) (4, 1.16) (5, 1.08) (6, 0.96) (7, 0.93)
};
\addplot[color=cosgreen, mark=triangle*, mark size=1.5pt, thick] coordinates {
    (1, 1.44) (2, 1.30) (3, 1.22) (4, 1.11) (5, 1.06) (6, 0.97) (7, 0.91)
};
\end{axis}
\end{tikzpicture}
\end{subfigure}
\hfill
\begin{subfigure}[b]{0.22\textwidth}
\centering
\begin{tikzpicture}
\begin{axis}[
    width=0.84\textwidth,
    height=2.0cm,
    xlabel={Draft Tokens ($\gamma$)},
    ylabel={},
    xmin=0.5, xmax=7.5,
    ymin=0.8, ymax=2.2,
    xtick={1,2,3,4,5,6,7},
    ytick={1.0,1.5,2.0},
    yticklabels={,,},
    grid=major,
    title={\shortstack{\textsc{Llama-3}\\\textsc{GSM8K}}}
]
\draw[black, thick, dashed] (axis cs:0.5,1) -- (axis cs:7.5,1);
\addplot[color=dcdblue, mark=*, mark size=1.5pt, thick] coordinates {
    (1, 1.31) (2, 1.54) (3, 1.60) (4, 1.64) (5, 1.67) (6, 1.64) (7, 1.55)
};
\addplot[color=scdorange, mark=square*, mark size=1.5pt, thick] coordinates {
    (1, 1.17) (2, 1.49) (3, 1.59) (4, 1.61) (5, 1.57) (6, 1.55) (7, 1.52)
};
\addplot[color=cosgreen, mark=triangle*, mark size=1.5pt, thick] coordinates {
    (1, 1.48) (2, 1.55) (3, 1.66) (4, 1.59) (5, 1.61) (6, 1.52) (7, 1.56)
};
\end{axis}
\end{tikzpicture}
\end{subfigure}
\hfill
\begin{subfigure}[b]{0.22\textwidth}
\centering
\begin{tikzpicture}
\begin{axis}[
    width=0.84\textwidth,
    height=2.0cm,
    xlabel={Draft Tokens ($\gamma$)},
    ylabel={},
    xmin=0.5, xmax=7.5,
    ymin=0.8, ymax=2.2,
    xtick={1,2,3,4,5,6,7},
    ytick={1.0,1.5,2.0},
    yticklabels={,,},
    grid=major,
    title={\shortstack{\textsc{Qwen3}\\\textsc{GSM8K}}}
]
\draw[black, thick, dashed] (axis cs:0.5,1) -- (axis cs:7.5,1);
\addplot[color=dcdblue, mark=*, mark size=1.5pt, thick] coordinates {
    (1, 1.46) (2, 1.81) (3, 1.92) (4, 2.04) (5, 1.99) (6, 2.04) (7, 2.05)
};
\addplot[color=scdorange, mark=square*, mark size=1.5pt, thick] coordinates {
    (1, 1.24) (2, 1.36) (3, 1.37) (4, 1.37) (5, 1.37) (6, 1.26) (7, 1.28)
};
\addplot[color=cosgreen, mark=triangle*, mark size=1.5pt, thick] coordinates {
    (1, 1.48) (2, 1.53) (3, 1.49) (4, 1.49) (5, 1.43) (6, 1.37) (7, 1.31)
};
\end{axis}
\end{tikzpicture}
\end{subfigure}

\vspace{-0.5em}
\begin{center}
\begin{tikzpicture}
\begin{axis}[
    hide axis,
    xmin=0, xmax=1,
    ymin=0, ymax=1,
    legend style={
        draw=none,
        fill=none,
        legend columns=-1,
        /tikz/every even column/.append style={column sep=0.5cm},
        font=\small
    },
    legend pos=north west
]
\addlegendimage{color=dcdblue, mark=*, mark size=2pt, thick}
\addlegendentry{DCD$_{\textsc{eagle3}}$}
\addlegendimage{color=scdorange, mark=square*, mark size=2pt, thick}
\addlegendentry{SCD}
\addlegendimage{color=cosgreen, mark=triangle*, mark size=2pt, thick}
\addlegendentry{CoS}
\end{axis}
\end{tikzpicture}
\end{center}
\vspace{-1.2em}
    \caption{Appendix-only $\gamma$ ablation. The y-axis reports speedup relative to vanilla CD. DCD$_{\textsc{eagle3}}$ stays efficient across a broader draft-length range, while the amateur-coupled baselines usually peak earlier as larger draft lengths make their heavier proposal paths less worthwhile.}
\label{fig:gamma_ablation_appendix}
\vspace{-8pt}
\end{figure*}
\begin{table}[t]
\centering
\scriptsize
\renewcommand{\arraystretch}{1.08}
\setlength{\tabcolsep}{2.5pt}
\begin{tabular*}{\columnwidth}{@{\extracolsep{\fill}}llccc}
\toprule
\textbf{Model} & \textbf{Method} & \textbf{Fixed $\gamma{=}5$} & \textbf{Best $\gamma$} & \textbf{Best speedup} \\
\midrule
\textsc{Llama-3} & SCD & 1.13$\times$ & 3 & 1.32$\times$ \\
\textsc{Llama-3} & CoS & 1.24$\times$ & 2 & 1.40$\times$ \\
\rowcolor{highlightcolor} \textsc{Llama-3} & \textbf{DCD$_{\textsc{eagle3}}$} & \textbf{1.57$\times$} & \textbf{5} & \textbf{1.57$\times$} \\
\midrule
\textsc{Qwen3} & SCD & 1.08$\times$ & 2 & 1.18$\times$ \\
\textsc{Qwen3} & CoS & 1.06$\times$ & 1 & 1.44$\times$ \\
\rowcolor{highlightcolor} \textsc{Qwen3} & \textbf{DCD$_{\textsc{eagle3}}$} & \textbf{1.64$\times$} & \textbf{3} & \textbf{1.66$\times$} \\
\midrule
\textsc{Llama-EFT} & SCD & 1.01$\times$ & 3 & 1.18$\times$ \\
\textsc{Llama-EFT} & CoS & 1.09$\times$ & 1 & 1.50$\times$ \\
\rowcolor{highlightcolor} \textsc{Llama-EFT} & \textbf{DCD$_{\textsc{eagle3}}$} & \textbf{1.60$\times$} & \textbf{6} & \textbf{1.70$\times$} \\
\bottomrule
\end{tabular*}
\caption{MMLU draft-length tuning check at $T{=}0$, $\alpha{=}0.1$. Speedups are relative to vanilla CD for the same model pair. Fixed $\gamma{=}5$ is the main operating point; Best $\gamma$ selects the highest-throughput setting from $\gamma \in \{1,\ldots,7\}$ using the existing chain-drafting implementation.}
\label{tab:gamma_tuning_summary}
\vspace{-4pt}
\end{table}

\begin{table}[t]
\centering
\scriptsize
\sisetup{mode=text, detect-weight=true, detect-family=true}
\renewcommand{\arraystretch}{1.08}
\setlength{\tabcolsep}{3.6pt}
\begin{tabular*}{\columnwidth}{@{\extracolsep{\fill}}ccS[table-format=3.2]S[table-format=3.2]}
\toprule
\multicolumn{2}{c}{\textbf{Configuration}} & \multicolumn{2}{c}{\textbf{Throughput (tok/s)}} \\
\cmidrule(lr){1-2}\cmidrule(lr){3-4}
\textbf{Draft Tokens} & \textbf{$k$} & \multicolumn{1}{c}{\textbf{GSM8K}} & \multicolumn{1}{c}{\textbf{MMLU}} \\
\midrule
\rowcolor{black!4}
\multicolumn{4}{l}{\textbf{Steps = 3}} \\
4 & 1 & 157.06 & 136.84 \\
6 & 3 & 163.94 & 145.23 \\
6 & 5 & 160.91 & 149.53 \\
10 & 3 & 170.73 & 154.89 \\
10 & 5 & 176.24 & 160.24 \\
20 & 3 & 178.13 & 163.77 \\
\rowcolor{highlightcolor} \textbf{20} & \textbf{5} & \bfseries 181.14 & \bfseries 168.51 \\
\addlinespace[2pt]
\rowcolor{black!4}
\multicolumn{4}{l}{\textbf{Steps = 5}} \\
6 & 1 & 158.44 & 134.78 \\
6 & 3 & 160.45 & 142.24 \\
6 & 5 & 158.72 & 147.62 \\
10 & 3 & 178.15 & 156.10 \\
10 & 5 & 169.37 & 157.24 \\
20 & 3 & 188.02 & 165.43 \\
\rowcolor{highlightcolor} \textbf{20} & \textbf{5} & \bfseries 191.84 & \bfseries 170.47 \\
\addlinespace[2pt]
\rowcolor{black!4}
\multicolumn{4}{l}{\textbf{Steps = 8}} \\
9 & 1 & 144.19 & 126.62 \\
9 & 3 & 152.58 & 137.00 \\
9 & 5 & 154.11 & 138.35 \\
10 & 3 & 156.26 & 139.18 \\
10 & 5 & 157.35 & 139.96 \\
20 & 3 & 169.32 & 148.55 \\
\rowcolor{highlightcolor} \textbf{20} & \textbf{5} & \bfseries 173.45 & \bfseries 155.76 \\
\bottomrule
\end{tabular*}
\caption{DCD$_{\textsc{eagle3}}$ throughput grid at $T{=}0$ and $\alpha{=}0.1$. Blocks fix speculative steps; rows vary draft tokens and top-$k$. Bold marks the block-wise best throughput for each dataset, and shading denotes the configuration that is best on both GSM8K and MMLU within a block.}
\label{tab:dcd_single_request_sweep_200}
\vspace{-4pt}
\end{table}

\section{Extended System Evaluation}
\label{sec:appendix_extended_system}
\subsection{Multi-Request Speedups on GSM8K}
\label{sec:appendix_multi_request}

Table~\ref{tab:multi_request_1000} reports multi-request serving speedups on GSM8K. The DCD$_{\textsc{eagle3}}$, SCD, and matched DCD$_{\textsc{ngram}}$ rows use the same serving configuration: five speculative steps with chain-style greedy drafting and six draft tokens per speculative round. Each entry reports throughput normalized by vanilla CD at the same request count.

\begin{table*}[t]
\centering
\small
\renewcommand{\arraystretch}{1.08}
\setlength{\tabcolsep}{6pt}
\begin{tabular*}{0.82\textwidth}{@{\extracolsep{\fill}}lcccccccc}
\toprule
\textbf{Method} & 2 & 4 & 8 & 16 & 24 & 32 & 48 & 56 \\
\midrule
SCD & 1.29 & 1.33 & 1.32 & 1.28 & 1.16 & 1.44 & 1.21 & 0.87 \\
DCD$_{\textsc{ngram}}$ & 1.37 & 1.38 & 1.19 & 0.91 & 1.04 & 0.99 & 1.11 & 0.72 \\
DCD$_{\textsc{eagle3}}$ & 1.56 & 1.51 & 1.29 & 1.48 & 1.36 & 1.52 & 1.32 & 1.37 \\
\bottomrule
\end{tabular*}
\caption{Configuration-matched multi-request serving speedups on the 1000-question GSM8K sweep. Columns denote the number of concurrent requests at 2, 4, 8, 16, 24, 32, 48, and 56. DCD$_{\textsc{eagle3}}$, SCD, and DCD$_{\textsc{ngram}}$ all use 5 speculative steps with chain-style greedy drafting and 6 draft tokens per speculative round. Each entry reports the throughput of the corresponding method divided by the throughput of vanilla CD at the same request count; therefore, the vanilla CD baseline is 1.0 in every column.}
\label{tab:multi_request_1000}
\vspace{-4pt}
\end{table*}

\subsection{Runtime KV-Cache Occupancy}
\label{sec:appendix_memory_kv}

We examine runtime KV-cache occupancy in the representative \textsc{Llama-3} 8B setting from the main paper, where \textsc{Llama-3.1-8B-Instruct} is the expert, \textsc{Llama-3.2-1B-Instruct} is the amateur, and DCD$_{\textsc{eagle3}}$ uses the same EAGLE3 drafter. AR serves only the expert, vanilla CD and SCD keep both the expert and amateur active, and DCD$_{\textsc{eagle3}}$ changes only the proposal path through a lightweight feature-level drafter. In both the single-request and shared-batching settings, vanilla CD, SCD, and DCD$_{\textsc{eagle3}}$ show nearly identical steady-state KV-cache occupancy because the persistent verification-time KV state remains dominated by the expert and amateur decoders.

\subsection{Spec-Bench Evaluation}
\label{sec:appendix_specbench}

To test whether the same speed pattern extends beyond the four main datasets, we also report results on Spec-Bench~\citep{xia2024unlocking}, a standardized benchmark for speculative decoding with six subtasks: multi-turn conversation (MT-Bench), translation, summarization, question answering, mathematical reasoning, and retrieval-augmented generation. Each subtask contains 80 prompts, giving 480 prompts in total. Table~\ref{tab:specbench} reports per-subtask throughput and average speedup under this setting.

\begin{table*}[t]
\centering
\small
\renewcommand{\arraystretch}{1.15}
\setlength{\tabcolsep}{4pt}
\begin{tabular}{lccccccc}
\toprule
\textbf{Method} & \textbf{MT-Bench} & \textbf{Translation} & \textbf{Summ.} & \textbf{QA} & \textbf{Math} & \textbf{RAG} & \textbf{Avg. (Speedup)} \\
\midrule
\multicolumn{8}{c}{\textit{\textbf{Llama-3} (8B-Ins / 1B-Ins)}} \\
\midrule
CD & 95.1 & 92.7 & 95.1 & 94.2 & 95.9 & 88.2 & 93.8 (1.00$\times$) \\
SCD & 132.6 & 105.7 & 114.7 & 122.6 & 151.8 & 119.3 & 125.6 (1.34$\times$) \\
CoS & 138.0 & 107.2 & 116.9 & \textbf{125.6} & 159.0 & 123.0 & 129.7 (1.38$\times$) \\
DCD$_{\textsc{ngram}}$ & 118.3 & 104.5 & 92.9 & 107.4 & 123.8 & 96.8 & 108.9 (1.16$\times$) \\
\rowcolor{yellow!8} \textbf{DCD$_{\textsc{eagle3}}$} & \textbf{156.3} & \textbf{109.4} & \textbf{146.4} & 121.7 & \textbf{166.3} & \textbf{142.3} & \textbf{142.6} (1.52$\times$) \\
\midrule
\multicolumn{8}{c}{\textit{\textbf{Qwen3} (8B / 1.7B)}} \\
\midrule
CD & 65.9 & 62.4 & 65.8 & 66.8 & 66.9 & 65.7 & 65.6 (1.00$\times$) \\
SCD & 76.2 & 75.5 & 66.2 & 66.0 & 97.2 & 69.8 & 75.3 (1.15$\times$) \\
CoS & 77.9 & 76.6 & 66.1 & 65.7 & 101.8 & 70.0 & 76.6 (1.17$\times$) \\
DCD$_{\textsc{ngram}}$ & 77.6 & 58.3 & 63.1 & 71.7 & 92.6 & 72.5 & 73.4 (1.12$\times$) \\
\rowcolor{yellow!8} \textbf{DCD$_{\textsc{eagle3}}$} & \textbf{128.0} & \textbf{99.9} & \textbf{110.3} & \textbf{114.6} & \textbf{152.1} & \textbf{124.7} & \textbf{122.5} (1.87$\times$) \\
\bottomrule
\end{tabular}
\caption{Spec-Bench evaluation results (tokens/sec). Spec-Bench~\citep{xia2024unlocking} is a standardized benchmark for speculative decoding with six sub-tasks. All methods use $\alpha{=}0.1$, $\gamma{=}5$, and greedy decoding. Speedup is relative to vanilla CD. The matched DCD$_{\textsc{ngram}}$ reference uses the same setting as an appendix row, while DCD$_{\textsc{eagle3}}$ has the highest average throughput for both model families and leads on most sub-tasks, but not every individual column.}
\label{tab:specbench}
\vspace{-4pt}
\end{table*}

\subsection{70B-Class Results}
\label{sec:appendix_large_model}

We report a greedy-only 70B extension with \texttt{Llama-3.3-70B-Instruct} as the expert and \texttt{Llama-3.1-8B-Instruct} as the amateur. The sweep uses the same four datasets and $\alpha \in \{0.0,0.1,0.5\}$; these results are single-run diagnostics for checking whether the main trend extends to a larger expert.

Table~\ref{tab:large_model_results} reports the per-dataset throughput breakdown for the EAGLE3 and matched N-gram proposer settings. Table~\ref{tab:large_model_alpha_robustness} reports the corresponding accepted-length robustness, normalized to each method at $\alpha=0.0$.

\begin{table*}[t]
\centering
\small
\renewcommand{\arraystretch}{1.15}
\setlength{\tabcolsep}{4pt}
\begin{tabular}{lc|ccccc}
\toprule
\textbf{Method} & \textbf{$\alpha$} & \textbf{HumanEval} & \textbf{GSM8K} & \textbf{MMLU} & \textbf{CNN/DM} & \textbf{Avg.} \\
\midrule
\multicolumn{7}{c}{\textit{Llama-3.3-70B-Instruct / Llama-3.1-8B-Instruct, greedy decoding}} \\
\midrule
\multicolumn{7}{c}{$\alpha = 0.0$} \\
\cmidrule(lr){1-7}
CD & 0.0 & 30.4 (1.00$\times$) & 30.2 (1.00$\times$) & 30.2 (1.00$\times$) & 29.8 (1.00$\times$) & 30.2 (1.00$\times$) \\
SCD & 0.0 & 61.6 (2.02$\times$) & 59.3 (1.96$\times$) & 49.5 (1.64$\times$) & 50.8 (1.70$\times$) & 55.3 (1.83$\times$) \\
CoS & 0.0 & 66.0 (2.17$\times$) & 62.6 (2.07$\times$) & \textbf{50.3 (1.67$\times$)} & 53.0 (1.78$\times$) & 58.0 (1.92$\times$) \\
DCD$_{\textsc{ngram}}$ & 0.0 & 46.8 (1.54$\times$) & 42.0 (1.39$\times$) & 38.0 (1.26$\times$) & 50.2 (1.69$\times$) & 44.3 (1.47$\times$) \\
\rowcolor{highlightcolor} \textbf{DCD$_{\textsc{eagle3}}$} & 0.0 & \textbf{79.6 (2.62$\times$)} & \textbf{62.7 (2.07$\times$)} & 47.4 (1.57$\times$) & \textbf{55.5 (1.86$\times$)} & \textbf{61.3 (2.03$\times$)} \\
\midrule
\multicolumn{7}{c}{$\alpha = 0.1$} \\
\cmidrule(lr){1-7}
CD & 0.1 & 30.2 (1.00$\times$) & 30.1 (1.00$\times$) & 30.3 (1.00$\times$) & 29.8 (1.00$\times$) & 30.1 (1.00$\times$) \\
SCD & 0.1 & 59.3 (1.96$\times$) & 58.2 (1.93$\times$) & 48.7 (1.61$\times$) & 49.8 (1.67$\times$) & 54.0 (1.79$\times$) \\
CoS & 0.1 & 62.3 (2.06$\times$) & \textbf{62.2 (2.07$\times$)} & \textbf{50.4 (1.66$\times$)} & 52.2 (1.75$\times$) & 56.8 (1.89$\times$) \\
DCD$_{\textsc{ngram}}$ & 0.1 & 46.5 (1.54$\times$) & 40.5 (1.35$\times$) & 38.0 (1.26$\times$) & 50.3 (1.69$\times$) & 43.8 (1.46$\times$) \\
\rowcolor{highlightcolor} \textbf{DCD$_{\textsc{eagle3}}$} & 0.1 & \textbf{78.1 (2.58$\times$)} & 62.0 (2.06$\times$) & 47.5 (1.57$\times$) & \textbf{55.3 (1.85$\times$)} & \textbf{60.7 (2.02$\times$)} \\
\midrule
\multicolumn{7}{c}{$\alpha = 0.5$} \\
\cmidrule(lr){1-7}
CD & 0.5 & 30.2 (1.00$\times$) & 30.3 (1.00$\times$) & 30.5 (1.00$\times$) & 29.8 (1.00$\times$) & 30.2 (1.00$\times$) \\
SCD & 0.5 & 52.2 (1.73$\times$) & 57.2 (1.89$\times$) & 45.6 (1.50$\times$) & 49.0 (1.64$\times$) & 51.0 (1.69$\times$) \\
CoS & 0.5 & 54.2 (1.79$\times$) & 60.0 (1.98$\times$) & \textbf{47.4 (1.56$\times$)} & 50.0 (1.68$\times$) & 52.9 (1.75$\times$) \\
DCD$_{\textsc{ngram}}$ & 0.5 & 46.1 (1.52$\times$) & 41.2 (1.36$\times$) & 37.3 (1.22$\times$) & 49.5 (1.66$\times$) & 43.5 (1.44$\times$) \\
\rowcolor{highlightcolor} \textbf{DCD$_{\textsc{eagle3}}$} & 0.5 & \textbf{73.6 (2.44$\times$)} & \textbf{60.8 (2.01$\times$)} & 46.6 (1.53$\times$) & \textbf{55.1 (1.85$\times$)} & \textbf{59.1 (1.96$\times$)} \\
\bottomrule
\end{tabular}
\caption{70B-class extension under greedy decoding. Values report throughput (tokens/sec); speedup in parentheses is relative to vanilla CD at the same $\alpha$. All results are single-run diagnostics using the Llama-3.3-70B-Instruct expert, the Llama-3.1-8B-Instruct amateur, and either the corresponding EAGLE3 70B drafter or the matched N-gram proposer setting.}
\label{tab:large_model_results}
\vspace{-4pt}
\end{table*}

\begin{table}[t]
\centering
\renewcommand{\arraystretch}{1.15}
\setlength{\tabcolsep}{4.5pt}
\small
\resizebox{\columnwidth}{!}{%
\begin{tabular}{lccc}
\toprule
\textbf{Method} & \textbf{Rel.@$0.1$} & \textbf{Rel.@$0.5$} & \textbf{Drop to $0.5$} \\
\midrule
SCD & 97.1\% & 89.3\% & -10.7\% \\
CoS & 97.1\% & 89.3\% & -10.7\% \\
DCD$_{\textsc{ngram}}$ & 99.2\% & 97.0\% & -3.0\% \\
\rowcolor{highlightcolor} \textbf{DCD$_{\textsc{eagle3}}$} & \textbf{98.5\%} & \textbf{93.7\%} & \textbf{-6.3\%} \\
\midrule
\multicolumn{4}{l}{\footnotesize Extra drop vs. DCD$_{\textsc{eagle3}}$: SCD = +4.4 pts, CoS = +4.4 pts.} \\
\bottomrule
\end{tabular}
}
\caption{Relative mean accepted length as contrastive strength increases in the 70B-class extension. Each ratio is normalized to the same method at $\alpha{=}0.0$ and averaged across the four datasets under greedy decoding, including the matched N-gram proposer setting. Drop is the relative decrease from $\alpha{=}0.0$ to $\alpha{=}0.5$.}
\label{tab:large_model_alpha_robustness}
\vspace{-4pt}
\end{table}

\section{Model Details}
\label{sec:appendix_models}
\Cref{tab:model_details} lists the off-the-shelf models and EAGLE3 proposer checkpoints used in the deployment experiments. The controlled proposal-side diagnostics that train dual-input or auxiliary EAGLE heads are described separately in Section~\ref{sec:why_not_cd} and Appendix~\ref{sec:appendix_cross_alpha}.

\begin{table}[htbp]
\centering
\small
\begin{threeparttable}
\begin{tabularx}{\columnwidth}{ll>{\raggedright\arraybackslash}X}
\toprule
\textbf{Configuration} & \textbf{Component} & \textbf{Model} \\
\midrule
\multirow{3}{*}{\textsc{Llama-3}} 
    & Expert & Llama-3.1-8B-Instruct \\
    \cmidrule(lr){2-3}
    & Amateur & Llama-3.2-1B-Instruct \\
    \cmidrule(lr){2-3}
    & Drafter & EAGLE3-LLaMA3.1-Instruct-8B\tnote{a} \\
\midrule
\multirow{3}{*}{\textsc{Qwen3}} 
    & Expert & Qwen3-8B \\
    \cmidrule(lr){2-3}
    & Amateur & Qwen3-1.7B \\
    \cmidrule(lr){2-3}
    & Drafter & Qwen3-8B-EAGLE3\tnote{b} \\
\midrule
\multirow{3}{*}{\textsc{Llama-EFT}} 
    & Expert & Llama-3.1-8B-Instruct \\
    \cmidrule(lr){2-3}
    & Amateur & Llama-3.1-8B \\
    \cmidrule(lr){2-3}
    & Drafter & EAGLE3-LLaMA3.1-Instruct-8B\tnote{a} \\
\midrule
\multirow{3}{*}{\makecell[l]{Llama-3\\70B Extension}}
    & Expert & Llama-3.3-70B-Instruct \\
    \cmidrule(lr){2-3}
    & Amateur & Llama-3.1-8B-Instruct \\
    \cmidrule(lr){2-3}
    & Drafter & EAGLE3-LLaMA3.3-Instruct-70B\tnote{c} \\
\bottomrule
\end{tabularx}
\begin{tablenotes}
\footnotesize
\item[a] \texttt{yuhuili/EAGLE3-LLaMA3.1-Instruct-8B}
\item[b] \texttt{Tengyunw/qwen3\_8b\_eagle3}
\item[c] \texttt{yuhuili/EAGLE3-LLaMA3.3-Instruct-70B}
\end{tablenotes}
\end{threeparttable}
\caption{Off-the-shelf model configurations used in the deployment experiments.}
\label{tab:model_details}
\end{table}

\section{Implementation Details}
\label{sec:implementation_details}

\paragraph{Experimental Environment.}
Unless noted otherwise, benchmarks run on a single NVIDIA H200 141GB GPU under the SGLang~\citep{zheng2024sglang} inference framework.

\paragraph{Artifacts and Intended Use.}
We use publicly released benchmark datasets, model checkpoints, EAGLE3 proposer checkpoints, and the SGLang inference framework under their respective licenses and access terms. These artifacts are used only for research benchmarking of decoding efficiency and task-level consistency, and we do not redistribute third-party datasets or model weights.

\paragraph{Inference Configuration.}
To measure latency cleanly, the main single-request deployment and diagnostic experiments use single-card serial execution. Table~\ref{tab:inference_config} summarizes the key single-request runtime parameters. Runtime KV-cache occupancy under this setup is examined in Appendix~\ref{sec:appendix_memory_kv}; multi-request serving uses the dedicated concurrent-request settings in Appendix~\ref{sec:appendix_multi_request}.
For EAGLE3, \texttt{speculative-eagle-topk}$=1$ implements chain-style greedy drafting.
For DCD$_{\textsc{ngram}}$, the proposer uses a draft-token budget of 5, an N-gram match-window range from 1 to 12, BFS breadth 1 for chain-style proposal selection, and branch length 18.

\begin{table}[ht]
\centering
\small
\begin{tabularx}{0.92\linewidth}{@{}>{\raggedright\arraybackslash}p{0.5\linewidth}>{\raggedright\arraybackslash}X@{}}
\toprule
\textbf{Parameter} & \textbf{Value} \\
\midrule
\makecell[tl]{\texttt{max\_running\_}\\\texttt{requests}} & 1 \\
\midrule
\makecell[tl]{\texttt{disable\_cuda\_}\\\texttt{graph}} & True \\
\midrule
\texttt{dtype} & bfloat16 \\
\midrule
\makecell[tl]{\texttt{context\_}\\\texttt{length}} & 4096 \\
\midrule
\makecell[tl]{\scriptsize\texttt{speculative-}\\\scriptsize\texttt{num-steps}} & 5 \\
\midrule
\makecell[tl]{\scriptsize\texttt{speculative-}\\\scriptsize\texttt{eagle-topk}} & 1 \\
\bottomrule
\end{tabularx}
\caption{Main single-request inference configuration.}
\label{tab:inference_config}
\end{table}

\paragraph{Dataset and Prompt Configuration.}
For the main deployment and task-accuracy evaluations, we use the official chat templates for all models and evaluate two settings:
\begin{itemize}
    \item \textbf{Main Experiments (Speed Analysis):} We use 200 samples per dataset with a maximum generation length of 256 tokens.
    \item \textbf{Performance Tests (Accuracy Verification):} We use 1000 samples with a maximum generation length of 1024 tokens and task-specific zero-shot templates with structured output formats, for example, ``Final Answer:'', to support automated evaluation.
\end{itemize}

\end{document}